\newcommand{\tableheadline}[1]{\multicolumn{1}{c}{#1}}
\newlength\myindent
\providecommand{\U}[1]{\protect\rule{.1in}{.1in}}
\newtheorem{theorem}{Theorem}
\newtheorem{corollary}[theorem]{Corollary}
\newtheorem{definition}[theorem]{Definition}
\newtheorem{lemma}[theorem]{Lemma}
\newtheorem{proposition}[theorem]{Proposition}
\newtheorem{remark}[theorem]{Remark}
\DeclareMathOperator{\class}{class}
\DeclareMathOperator{\sgn}{sgn}
\DeclareMathOperator{\clamp}{clamp}
\newcommand{\argmin}[1]{\underset{#1}{\operatorname{argmin}}}
\newcommand{\supp}[1]{\operatorname{supp}_{#1}}
\title{Structural Extensions of Basis Pursuit: \\Guarantees on Adversarial Robustness}
\author{
D\'{a}vid Szeghy${}^{\ddagger,\text{\textsection},}$\footnote{Contact Author}\and
Mahmoud Aslan${}^\dagger$\and
\'{A}ron F\'{o}thi${}^\dagger$\and
Bal\'{a}zs M\'{e}sz\'{a}ros${}^\dagger$\and\\
Zolt\'{a}n \'{A}d\'{a}m Milacski\And
Andr\'{a}s L\H{o}rincz${}^\dagger$
\affiliations
${}^\dagger$Department of Artificial Intelligence, Faculty of Informatics, ELTE E\"{o}tv\"{o}s Lor\'{a}nd University
${}^\ddagger$Department of Geometry, Faculty of Natural Sciences, ELTE E\"{o}tv\"{o}s Lor\'{a}nd University\\
${}^\text{\textsection}$AImotive Inc.
\emails
david.szeghy@ttk.elte.hu, \{i7e7qi, fa2, k1wtbf\}@inf.elte.hu, srph25@gmail.com, lorincz@inf.elte.hu
}
\begin{document}

\maketitle

\begin{abstract}
While deep neural networks are sensitive to adversarial noise, sparse coding using the Basis Pursuit (BP) method is robust against such attacks, including its multi-layer extensions. We prove that the stability theorem of BP holds upon the following generalizations: (i) the regularization procedure can be separated into disjoint groups with \emph{different} weights, (ii) \emph{neurons} or \emph{full layers} may form groups, and (iii) the regularizer takes various generalized forms of the $\ell_1$ norm. This result provides the proof for the architectural generalizations of \citeauthor{cazenavette2020architectural} \shortcite{cazenavette2020architectural} including (iv) an approximation of the complete architecture as a shallow sparse coding network. Due to this approximation, we settled to experimenting with shallow networks and studied their robustness against the Iterative Fast Gradient Sign Method on a synthetic dataset and MNIST. We introduce classification based on the $\ell_2$ norms of the groups and show numerically that it can be accurate and offers considerable speedups. In this family, linear transformer shows the best performance. Based on the theoretical results and the numerical simulations, we highlight numerical matters that may improve performance further.

\end{abstract}

\section{Introduction}

Considerable effort has been devoted to overcoming the vulnerability of deep neural networks against `\emph{white box}' adversarial attacks. These attacks have access to the network structure and the loss function. They work by modifying the input towards the sign of the gradient of the loss function \cite{goodfellow2014explaining} that can spoil classification at very low levels of perturbations. Furthermore, this white box attack gives rise to successful transferable attacking samples to other networks of similar kinds \cite{liu2016delving}, called `\emph{black box attack}'. This underlines the need for network structures exhibiting robustness against white box adversarial attacks.

\begin{figure}[!ht]
\centering     
\includegraphics[width=85mm]{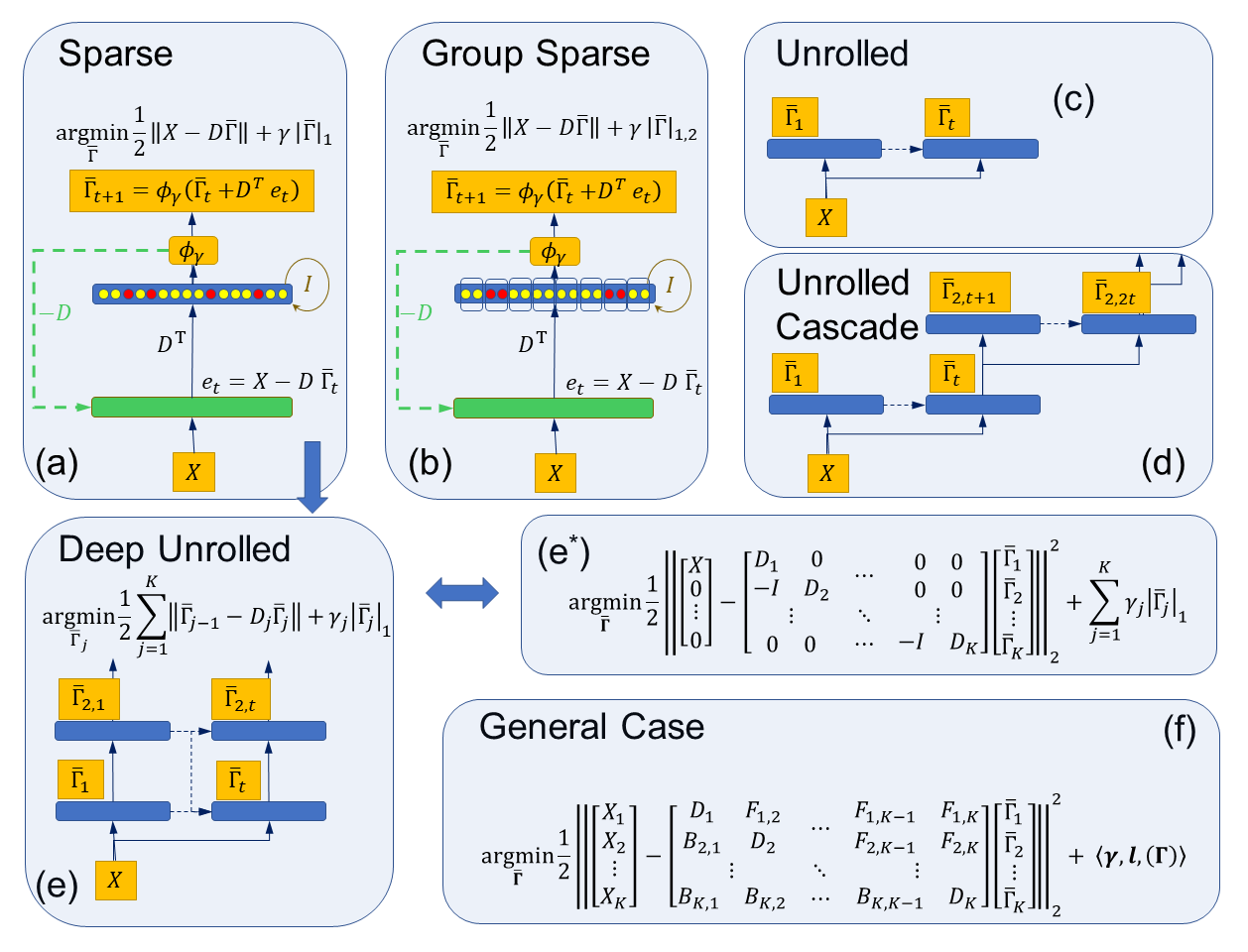}
\caption{Steps of Basis Pursuit (BP) generalizations. 
Equations with argmin: the minimization tasks. \textbf{(a):} Recurrent BP with sparse representation. 
Blue (light green) rectangle: representation (input) layer. 
Blue (dashed light green) arrows: channels that deliver quantities in the actual (in the previous) time step. 
Red (light yellow) circles: active (non-active) units of the sparse representation. 
$X$: input. $\bar{\Gamma}_t$ and $e_t$: representation and error at the $t^{th}$ iteration. $\bar{\Gamma}_{j,t}$: same at the $j^{th}$ layer of the deep unrolled network.
Matrices $D$, $D_j$: dictionaries, $I$: identity matrix, $\phi_{\gamma}$ softmax with $\gamma$ bias. 
\textbf{(b):} group sparse case: $\ell_1$ norm is replaced with the $\ell_{1,2}$ norm. \mbox{\textbf{(c):} Unrolled} feedforward network with finite number of iterations. \textbf{(d):} Cascaded unrolled deep network. 
\textbf{(e):} Non-cascaded modification of the unrolled deep \emph{sparse} cascade. \textbf{(e$^*$):} The minimization task of \textbf{(e)}. \textbf{(f):} The general case still having warranties against adversarial attacks. Within layer groups are not shown. More details: text and supplementary material.}
\label{fig:experiments}
\end{figure}

Sparse methods exploiting $\ell_1$ norm regularization and the Basis Pursuit (BP) algorithm (Figs.~\ref{fig:experiments}(a) and \ref{fig:experiments}(c)) exhibit robustness against such attacks, including their multilayer Layered Basis Pursuit (LBP)  extensions \cite{romano2020adversarial} (Fig.~\ref{fig:experiments}(d)). \citeauthor{cazenavette2020architectural} \shortcite{cazenavette2020architectural} found a solution to the LBP's drawback that layered basis pursuit accumulates errors: they put forth an architectural generalization of LBP to modify the cascade of layered basis pursuit steps of the deep neural network in such a way that the entire network becomes an approximation to a single structured sparse coding problem that they call deep pursuit (Figs.~\ref{fig:experiments}(e) and \ref{fig:experiments}(e$^*$)). Note that their generalization goes beyond the structure depicted in Fig.~\ref{fig:experiments}(e). This architectural generalization points to the relevance of a single sparse layer BP that we study here.

A long-standing problem is that sparse coding is slow. An early effort utilized an associative correlation matrix  \cite{gregor2010learning}. Recent efforts, put forth the first approximation of BP combined with specific loss terms during training (see \cite{murdock2021reframing} and the references therein). Although the approach is attractive, theoretical stability warranties are missing. 

We propose group sparse coding as an additional means for the resolution. Sparse coding that exploits $\ell_1$ norm regularization to optimize the hidden representation can be generalized to group sparse coding that uses the $\ell_{1,2}$ norm or the elastic $\ell_{\beta,1,2}$ norm instead.

We present theoretical results
on the stability of a family of group sparse coding that alike to its sparse variant can robustly recover the underlying representations under adversarial attacks. Yet, group sparse coding offers fast and efficient feedforward estimations of the groups either by traditional networks or by transformers that the classification step can follow. Previous work \cite{lorincz2016columnar} suggested the feedforward estimation of the groups to be followed by the pseudoinverse estimation of the group activities for learning and finding a group sparse code but without targeting classification or adversarial considerations. 

Our feedforward method estimates the $\ell_2$ norms of the active groups followed by the classification step, achieving further computational gains by eliminating the pseudoinverse computations. We consider how to combine the fast estimation with the robust BP computations based on our theoretical and numerical results. However, the speed considerations and test will be presented in a separate paper, now will focus on the robustness results.

Our contributions are as follows:
\begin{itemize}
    \item we extend the theory of adversarial robustness of Basis Pursuit to a family of networks, including groups, layers, and skip connections between the layers both to deeper and to more superficial layers,
    \item we introduce group norm based classification and its group pooled variant,
    \item suggest and study gap regularization, 
    \item execute numerical computations and test feedforward shallow, deep, transformer networks trained on sparse and group sparse layers with a synthetic and the MNIST dataset,study the performance of these fast algorithms, and
    \item we point to bottlenecks in the training procedures.
\end{itemize}

We present our theoretical results in Sect.~\ref{s:theo}. It is followed by the experimental studies (Sect.~\ref{s:exps}). We examine the properties of the group sparse structures outside of the scope of the theory to foster further works. Section~\ref{s:disc} contains the discussions of our results. We conclude in Section~\ref{s:conc}.  Details of the theoretical derivations are in Appendix~\ref{AppA}.

\section{Theory}\label{s:theo}

We start with the background of the theory including the notations. It is followed by our theoretical results.

\subsection{Background and Notation}\label{SecBandN}

We denote the Sparse Coding (SC) problem by $\bm{X}=\bm{D}\bm{\Gamma}$, where given the \textit{signal} $\bm{X}\in\mathbb{R}^{N}$ and the unit-normed \textit{dictionary} $\bm{D}\in\mathbb{R}^{N\times M}$, the task is to recover the \textit{sparse vector representation} $\bm{\Gamma}\in\mathbb{R}^{M}$.
\begin{equation}
\min{\left \Vert \bm{\Gamma}\right\Vert _{0} \,\,\, \mathrm{subject\,\, to} \,\, \bm{X}=\bm{D}\bm{\Gamma,} }
\tag{$P_0$}\label{ell0}%
\end{equation}
where $\left \Vert . \right\Vert _{0}$ denotes the $\ell_0$ norm. For an excellent book on the topic, see \cite{elad2010sparse} and the references therein. 

One may try to approximate the solution of Eq.~\eqref{ell0} via the unconstrained version of the Basis Pursuit (BP, or LASSO) method \cite{tibshirani1996regression,chen2001atomic,donoho2003optimally}:
\begin{equation}
\argmin{\bar{\bm{\Gamma}}}\,{L\left(\bar{\bm{\Gamma}}\right)}\overset{def}{=}\argmin{\bar{\bm{\Gamma}}}{\frac{1}{2}\left\Vert \bm{X}-\bm{D}\bar{\bm{\Gamma}}
\right\Vert _{2}^{2}+\gamma\cdot \left\Vert \bar{\bm{\Gamma}}\right\Vert _{1}},
\tag{BP}\label{BP}%
\end{equation}
where $\gamma>0$.

Given $\bm{X}=\bm{D}\bm{\Gamma}$,
we may assume that $\bm{\Gamma}$ can be further decomposed in a way similar to $\bm{X}$:
\begin{align}
\bm{X} &=\bm{D}_{1}\bm{\Gamma}_{1},\label{ML}\\
\bm{\Gamma}_{1} &=\bm{D}_{2}\bm{\Gamma}_{2},\nonumber\\
&\vdots\nonumber\\
\bm{\Gamma}_{K-1} &=\bm{D}_{K}\bm{\Gamma}_{K}.\nonumber
\end{align}
The layered problem then tries to recover $\bm{\Gamma}_1,\dots,\bm{\Gamma}_K$.

\begin{definition}
The Layered Basis Pursuit (LBP) \cite{papyan2017convolutional} first solves the Sparse Coding problem $\bm{X}=\bm{D}_{1}\bm{\Gamma}_{1}$ via Eq.~\eqref{BP} with parameter
$\gamma_{1}$, obtaining $\hat{\bm{\Gamma}}_{1}$.
Next, it solves another Sparse Coding problem $\hat{\bm{\Gamma}}_{1}=\bm{D}_{2}\bm{\Gamma}_{2}$ again by Eq.~\eqref{BP} with parameter
$\gamma_{2}$, denoting the result by $\hat{\bm{\Gamma}}_{2}$, and so on.
The final vector $\hat{\bm{\Gamma}}_{K}$ is the solution of LBP.
The vector $\bm{\gamma}^{LBP}$ contains the weights $\gamma_i$ in Eq.~\eqref{BP} for each layer $i$.
\end{definition}

It was shown in \cite{papyan2016working} and  \cite{papyan2017working} that LBP suffers from error accumulation. To alleviate this obstacle, \cite{cazenavette2020architectural} rewrote LBP into a single joint Eq.~\eqref{BP}-like minimization scheme (i.e., all layers are processed simultaneously) that can be equipped with skip connections. However, the solutions of the two programs differ, and the stability has not been proven for the latter that we do here. (See Appendix~\ref{AppA}, Figs.~\ref{fig:experiments}(e*), and (f)).

We want to extend these methods to allow different norms on different parts of $\Gamma$ with different $\gamma$ weights (as in the layered case) and prove a stability result for this more general case. This will also allow to relieve the condition on the dictionary $\bm{D}$ that its columns have unit length in the $\ell_2$ norm.

Let us introduce a slightly modified version of
the notation used by \cite{papyan2016working} and  \cite{papyan2017working}.
Let $\Lambda$ be a subset of
$\left\{  1,\dots,M\right\}  $ which is called a \textit{subdomain}, and the components, or \emph{atoms}
corresponding to $\Lambda$ form the \textit{subdictionary }$\bm{D}_{\Lambda}$.
Let
$\bm{d}_{\omega}$, $\omega\in\left\{  1,\dots,M\right\}  $ denote the atom
corresponding to the index $\omega.$

If $\Lambda_{i}\left(  \bm{D}\right)  \overset{def}{=}\left\{  \omega
~|~\left\langle \bm{d}_{\omega},\bm{d}_{i}\right\rangle \neq0\right\}  $ and $\left\vert \Lambda_{i}\left(  \bm{D}\right)\right\vert$ is its cardinality, then the restriction $\bm{\Gamma}_{\Lambda_{i}\left(  \bm{D}\right)}\in
\mathbb{R}^{\left\vert \Lambda_{i}\left(  \bm{D}\right)\right\vert}$ of $\bm{\Gamma}\in\mathbb{R}^{M}$ to the indices in $\Lambda
_{i}\left(  \bm{D}\right)  $ is given by, 
\[
\left(\bm{\Gamma}_{\Lambda_{i}\left(  \bm{D}\right)}\right)_{\theta}
\overset{def}{=}
\begin{cases}
\bm{\Gamma}_{\theta}, &\text{ if }\theta\in\Lambda_{i}\left(  \bm{D}\right),\\
$0$, &\text{otherwise}.
\end{cases}
\]
Now let
\[
\left\Vert \bm{\Gamma}\right\Vert _{0,st,\bm{D}}\overset{def}{=}\max_{i}\left\Vert
\bm{\Gamma}_{\Lambda_{i}\left(  \bm{D}\right)  }\right\Vert _{0}%
\]
be the \textit{stripe norm with respect to} $\bm{D}$, a generalization of the definition in \cite{papyan2017working}.

If $\bm{D}$ is fixed, then we will use the shorter form $\left\Vert \bm{\Gamma}\right\Vert
_{0,st}=\left\Vert \bm{\Gamma}\right\Vert _{0,st,\bm{D}}$.
Further, let
$\mu\left(  \bm{D}\right)  =\max_{i\neq j}\left\langle \bm{d}_{i},\bm{d}_{j}\right\rangle $
be the \textit{mutual coherence} of the dictionary (since $\bm{D}$ is unit-normed the division by $\left\Vert \bm{d}_i \right\Vert_{2}\cdot\left\Vert \bm{d}_j \right\Vert_{2}$ is dropped).

We want to use 4 different norms the $\ell_{1},\,\ell_{2}$ and the elastic $\ell_{\beta,1,2}$  norm defined as $\left\Vert \bm{Z}\right\Vert
_{\beta,1,2}\overset{def}{=}\beta\cdot\left\Vert \bm{Z}\right\Vert _{1}+\left(
1-\beta\right)  \left\Vert \bm{Z}\right\Vert _{2}$, i.e., it is the convex combination of the $\ell_{1}$ and $\ell_{2}$ norms, and finally, the $\ell_{1,2}$ group-norm, sometimes referred to as the Group LASSO \cite{yuan2006model,bach2011optimization}. To define this we need a group partition of the index set.

If the index set $\left\{  1,\dots,M\right\}$ is partitioned into groups $\mathcal{G}_{i},~i\in\left\{  1,\dots,k\right\}  $ (i.e., $\bigcup_{i=1}^k{\mathcal{G}_i}=\left\{  1,\dots,M\right\}$ and $\mathcal{G}_i \cap \mathcal{G}_j=\emptyset$ for $i\neq j$), then the $\ell_{1,2}$ norm ( see, e.g., \cite{bach2011optimization} and the references therein) is 
\[
\left\Vert \bm{Z}\right\Vert _{1,2}\overset{def}{=}\sum_{i=1}^{k}\left\Vert
\bm{Z}_{\mathcal{G}_{i}}\right\Vert _{2},
\]
where $\bm{Z}_{\mathcal{G}_{i}}=\sum
_{j\in\mathcal{G}_{i}}z_{j}\cdot\mathbf{e}_{j}$ with the standard basis
vectors $\mathbf{e}_{j}\in\mathbb{R}^{M}$.

To extend the regularizer of Eq.~\eqref{BP}, if $\mathcal{G}_{i},~i\in\left\{  1,\dots,k\right\}  $ is a partition of the index set $\left\{  1,\dots,M\right\}$ then let 
\[
\bm{l}:\mathbb{R}^{M}\to\mathbb{R}^{k},\,\bm{l}\left(\bm{\Gamma}\right)\overset{def}{=}\left(l_{\alpha_{1}}\left(\bm{\Gamma_{\mathcal{G}_{1}}} \right),\dots, l_{\alpha_{k}}\left(\bm{\Gamma_{\mathcal{G}_{k}}} \right) \right),
\]
where $l_{\alpha_{i}}$ is one of the $\ell_1,\,\ell_{2},\,\ell_{\beta,1,2}$ norm. For different groups the parameter $\beta$ can be different as well. So this is a vector which elements are norms evaluated on different parts of $\bm{\Gamma}$ corresponding to the different groups and for each group, we can individually decide which norm to use. Let $\bm{\gamma}\overset{def}{=}\left( \gamma_{1},\dots,\gamma_{k}\right)$ be a weight vector for the different groups (more precisely for the norms of the different groups), where $\gamma_{i}>0,\,\forall i$. We want use the regulariser
\[
\left< \bm{\gamma},\bm{l}\left( \bm{\Gamma} \right) \right> = \sum_{i=1}^{k}
\gamma_{i}\bm{l}_{\alpha_{i}} \left( \bm{\Gamma_{\mathcal{G}_{i}}}\right).
\]

Note that if for some groups we use the $\ell_{2}$ norm with the same weight $\gamma$, then we think of this as using the $\ell_{1,2}$ group norm for this group of groups with the weight $\gamma$ being a special case.

Now if we fix a partition $\mathcal{G}_{i}$ and a regularizer $\bm{l}$ (i.e. norms for the groups), then let $\bm{\chi}_{\bm{\Gamma},\mathcal{G}}\in\mathbb{R}^{M}$ be the \textit{$2$-norm group characteristic vector} of $\bm{\Gamma}$, i.e.,
\[
\left(\bm{\chi}_{\bm{\Gamma},\mathcal{G}}\right)_j
\overset{def}{=}
\begin{cases}
1, &\text{if }j\in\supp{}{\bm{\Gamma}},\\
&\text{or }j\in\mathcal{G}_i, \mathcal{G}_{i}\cap\supp{}{\bm{\Gamma}}\neq\emptyset\text{ and }l_{\alpha_{i}}=\ell_{2}, \\
$0$, &\text{otherwise},
\end{cases}
\]
where $\supp{}{\bm{\Gamma}}
\overset{def}{=}\left\{  \omega~|~\bm{\Gamma}_{\omega}\neq0\right\}  $ is the
support of $\bm{\Gamma}$.

For $\bm{Z}\in
\mathbb{R}^{N},$ we define
\[
\left(\bm{Z}_{\supp{}{\bm{d}_{i}}}\right)_\theta
\overset{def}{=}
\begin{cases}
z_\theta, & \text{ if }\theta\in\supp{}{\bm{d}_i},\\
0, & \text{ otherwise}.
\end{cases}
\]
We call
\[
\left\Vert \bm{Z}\right\Vert _{L,\bm{D}}\overset{def}{=}\max_{i}\left\Vert
\bm{Z}_{\supp{}{\bm{d}_{i}}}\right\Vert _{2}%
\]
the \textit{local amplitude of }$\bm{Z}$\textit{ with respect to the dictionary }$\bm{D}.$

For a fixed $D$, we use the shorthand $\left\Vert
\bm{Z}\right\Vert _{L}=\left\Vert \bm{Z}\right\Vert _{L,\bm{D}}$.

Both the stripe norm defined previously, and the local amplitude seem difficult to calculate. However, as in \cite{papyan2017working} if $\bm{D}$ corresponds to a CNN architecture, then both become quite natural and the calculation is easy. Moreover, it is easier to keep mutual coherence of the dictionary low.

\subsection{Theoretical Results}\label{ss:T_results}
The proofs of the results can be found in Appendix~\ref{AppA}.

%
%

Here, we will investigate the stability of Eq.~\eqref{BP} and two closely related algorithms.
To unify the several different cases, we introduce the following definition.
\begin{definition}
First, fix a partition $\mathcal{G}_{i},~i\in\left\{  1,\dots,k\right\}$, norms for this partition $\bm{l}\left(\bm{\Gamma}\right)$ and the weights $\bm{\gamma}$ for the norms. The unconstrained Group Basis Pursuit (GBP) is the solution of the problem:
\begin{equation}
\argmin{\bar{\bm{\Gamma}}}\,{L\left(  \bar{\bm{\Gamma}}\right)} \overset{def}{=}\argmin{\bar{\bm{\Gamma}}}{\frac{1}{2}\left\Vert \bm{X}-\bm{D}\bar{\bm{\Gamma}}
\right\Vert _{2}^{2}+\left< \bm{\gamma},  \bm{l}\left( \bar{\bm{\Gamma}}\right) \right>},
\tag{GBP}\label{GBP}%
\end{equation}

\end{definition}

\begin{theorem}
\label{T2}Let $\bm{X}=\bm{D}\bm{\Gamma}$ be a clean signal and $\bm{Y}=\bm{X}+\bm{E}$ be its perturbed variant.
Let $\bm{\Gamma}_{GBP}$ be the minimizer of Eq.~\ref{GBP} where $\bm{\gamma}$ is the weight vector. If among the norms of $\bm{l}$ we used the elastic norm, let $\left\{\beta_{1},\dots,\beta_{r}\right\}$ be the set of the parameters used in the elastic norms and $\lambda\overset{def}{=}\min\left\{ 1,\beta_{1},\dots,\beta_{r}\right\}$. Moreover, let $\gamma_{\max}\overset{def}{=}\max\left\{\gamma_{1},\dots,\gamma_{k}\right\}$ and $\gamma_{\min}\overset{def}{=}\min\left\{\gamma_{1},\dots,\gamma_{k}\right\}$ for the weight vector $\bm{\gamma}$ and $\theta\overset{def}{=}\frac{\lambda\gamma_{\min}}{\gamma_{\max}}$.
Assume that
\begin{enumerate}
\item[a)]
$\left\Vert \bm{\chi}_{\bm{\Gamma},\mathcal{G}} \right\Vert _{0,st}\leq c\frac{\theta}{1+\theta} \left(1+\frac{1}{\mu\left(\bm{D}\right)}\right)$,

\item[b)] $\frac{1}{\lambda\left(  1-c\right)  }\left\Vert \bm{E}\right\Vert
_{L}\leq\gamma_{\min}$,
\end{enumerate}
where $0<c<1$.
If $\bm{D}_{\supp{}{\bm{\chi}_{\bm{\Gamma},\mathcal{G}}}}$ has full column rank, then
\begin{enumerate}
\item[1)] $\supp{}{\bm{\Gamma}_{GBP}}\subseteq\supp{}{\bm{\chi}_{\bm{\Gamma},\mathcal{G}}}$,
\item[2)] the minimizer of Eq.~\ref{GBP} is unique.
\end{enumerate}
If we set $\gamma_{\min}=\frac{1}{\lambda\left(  1-c\right)  }\left\Vert \bm{E}\right\Vert
_{L}$, then
\begin{enumerate}
\item[3)] $\left\Vert \bm{\Gamma}_{GBP}-\bm{\Gamma}\right\Vert _{\infty}%
<\frac{1+\theta}{\left(  1+\mu\left(  \bm{D}\right)  \right)  \theta\left(
1-c\right)  }\left\Vert \bm{E}\right\Vert _{L}$,
\item[4)] $\left\{i~\Big|~\left\vert \bm{\Gamma}_{i}\right\vert
>\frac{1+\theta}{\left(  1+\mu\left(  \bm{D}\right)  \right)  \theta\left(
1-c\right)  }\left\Vert \bm{E}\right\Vert _{L}\right\}\subseteq\supp{}{\bm{\Gamma}_{GBP}}$,
\end{enumerate}
where
$\frac{1+\theta}{\left(  1+\mu\left(  \bm{D}\right)  \right)  \theta\left(
1-c\right)  }\left\Vert \bm{E}\right\Vert _{L}\leq\frac{1+\theta}{\theta\left(
1-c\right)  }\left\Vert \bm{E}\right\Vert _{L}$ yields a weaker bound in 3) and 4)
without the mutual coherence.
\end{theorem}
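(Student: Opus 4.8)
The plan is to follow the blueprint of the classical Basis Pursuit stability proof (the exact‑recovery‑condition argument, in the stripe‑norm form used for convolutional sparse coding), upgraded to absorb the group structure, the per‑group weights, and the three admissible norm types. Throughout, write $S$ for $\supp{}{\bm{\chi}_{\bm{\Gamma},\mathcal{G}}}$; by construction $\supp{}{\bm{\Gamma}}\subseteq S$, and every $\ell_{2}$‑group is either contained in $S$ or disjoint from it. Since $\langle\bm{\gamma},\bm{l}(\cdot)\rangle$ is a nonnegative combination of norms, the GBP objective $L$ is convex, so $\bar{\bm{\Gamma}}$ minimizes it iff $\bm{D}^{\top}(\bm{D}\bar{\bm{\Gamma}}-\bm{Y})+\bm{v}=\bm{0}$ for some $\bm{v}$ in the subdifferential of the regularizer at $\bar{\bm{\Gamma}}$. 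First I would introduce the \emph{oracle} solution $\bar{\bm{\Gamma}}^{*}$, obtained by minimizing $L$ over vectors supported on $S$; because $\bm{D}_{\supp{}{\bm{\chi}_{\bm{\Gamma},\mathcal{G}}}}$ has full column rank the quadratic term is strictly convex on that subspace, so $\bar{\bm{\Gamma}}^{*}$ exists and is unique. The theorem then splits into (i) showing $\bar{\bm{\Gamma}}^{*}$ is the unique global GBP minimizer (yielding 1) and 2)), and (ii) an $\ell_{\infty}$ estimate for $\bar{\bm{\Gamma}}^{*}-\bm{\Gamma}$ (yielding 3), with 4) then immediate).

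For (i) I would first bound the oracle error. The restricted optimality condition gives $\bm{D}_{S}^{\top}\bm{D}_{S}(\bar{\bm{\Gamma}}^{*}_{S}-\bm{\Gamma}_{S})=\bm{D}_{S}^{\top}\bm{E}-\bm{v}_{S}$. Writing $\bm{D}_{S}^{\top}\bm{D}_{S}=\bm{I}+\bm{A}$ with $\bm{A}$ hollow, the fact that $\langle\bm{d}_{\omega},\bm{d}_{\omega'}\rangle=0$ unless $\omega'\in\Lambda_{\omega}(\bm{D})$ makes each row of $\bm{A}$ have at most $\Vert\bm{\chi}_{\bm{\Gamma},\mathcal{G}}\Vert_{0,st}-1$ nonzero entries, each of modulus at most $\mu(\bm{D})$; hence $\Vert\bm{A}\Vert_{\infty\to\infty}<1$ by a), a Neumann series bounds $\Vert(\bm{D}_{S}^{\top}\bm{D}_{S})^{-1}\Vert_{\infty\to\infty}$, and together with $\Vert\bm{D}_{S}^{\top}\bm{E}\Vert_{\infty}\le\Vert\bm{E}\Vert_{L}$ and $\Vert\bm{v}_{S}\Vert_{\infty}\le\gamma_{\max}$ (every coordinate of every subgradient of any admissible norm has modulus $\le 1$) this produces an explicit bound on $\Vert\bar{\bm{\Gamma}}^{*}_{S}-\bm{\Gamma}_{S}\Vert_{\infty}$. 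Next, for every coordinate or group lying off $S$ I would check dual feasibility: since $\bm{D}\bar{\bm{\Gamma}}^{*}-\bm{Y}=\bm{D}_{S}(\bar{\bm{\Gamma}}^{*}_{S}-\bm{\Gamma}_{S})-\bm{E}$, the $j$-th gradient coordinate is $\sum_{\omega\in S\cap\Lambda_{j}(\bm{D})}\langle\bm{d}_{j},\bm{d}_{\omega}\rangle(\bar{\bm{\Gamma}}^{*}_{\omega}-\bm{\Gamma}_{\omega})-\bm{d}_{j}^{\top}\bm{E}$, of modulus at most $\Vert\bm{\chi}_{\bm{\Gamma},\mathcal{G}}\Vert_{0,st}\,\mu(\bm{D})\,\Vert\bar{\bm{\Gamma}}^{*}_{S}-\bm{\Gamma}_{S}\Vert_{\infty}+\Vert\bm{E}\Vert_{L}$, with the evident $\ell_{2}$‑over‑the‑group analogue when the off‑$S$ object is a whole $\ell_{2}$‑ or elastic group. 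Plugging in the oracle‑error bound and $\Vert\bm{E}\Vert_{L}\le\lambda(1-c)\gamma_{\min}$ from b), the arithmetic — this is exactly where the constant $c\tfrac{\theta}{1+\theta}(1+\tfrac{1}{\mu(\bm{D})})$ in a) is calibrated — shows that modulus stays below $\lambda\gamma_{\min}$, hence inside the admissible subgradient range at that coordinate: for an $\ell_{1}$‑coordinate that range is $[-\gamma_{i},\gamma_{i}]\supseteq[-\lambda\gamma_{\min},\lambda\gamma_{\min}]$; for an elastic group the weight $\beta_{i}\ge\lambda$ on the $\ell_{1}$‑part supplies it; for an $\ell_{2}$‑group the whole group lies off $S$ and the group‑$\ell_{2}$ estimate keeps $\Vert\bm{g}_{\mathcal{G}_{i}}\Vert_{2}<\gamma_{i}$. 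Therefore $\bar{\bm{\Gamma}}^{*}$ satisfies the global optimality condition with every off‑$S$ inclusion strict, which forces any GBP minimizer to be supported on $S$; strict convexity of $L$ on that subspace then identifies it with $\bar{\bm{\Gamma}}^{*}$, proving 1) and 2).

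For (ii) I would specialise $\gamma_{\min}=\Vert\bm{E}\Vert_{L}/(\lambda(1-c))$, so that $\gamma_{\max}=\lambda\gamma_{\min}/\theta=\Vert\bm{E}\Vert_{L}/(\theta(1-c))$, and feed these into the oracle‑error bound. Rewriting a) as $(1+\theta)(1-x)\ge 1+\theta(1-c)$ with $x\overset{def}{=}\Vert\bm{\chi}_{\bm{\Gamma},\mathcal{G}}\Vert_{0,st}\mu(\bm{D})/(1+\mu(\bm{D}))$, a one‑line computation collapses the bound to $\Vert\bm{\Gamma}_{GBP}-\bm{\Gamma}\Vert_{\infty}<\tfrac{1+\theta}{(1+\mu(\bm{D}))\theta(1-c)}\Vert\bm{E}\Vert_{L}$, i.e. 3); and 4) is its contrapositive, since $i\notin\supp{}{\bm{\Gamma}_{GBP}}$ would force $\vert\bm{\Gamma}_{i}\vert=\vert\bm{\Gamma}_{i}-(\bm{\Gamma}_{GBP})_{i}\vert\le\Vert\bm{\Gamma}_{GBP}-\bm{\Gamma}\Vert_{\infty}$, contradicting $\vert\bm{\Gamma}_{i}\vert$ being larger than that bound. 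The closing inequality of the statement is then trivial from $\mu(\bm{D})\ge 0$, hence $1+\mu(\bm{D})\ge 1$.

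The step I expect to be the main obstacle is keeping the mixed subdifferential under control with a single pair of hypotheses. Pure $\ell_{1}$ coordinates are the textbook case; elastic groups reduce to it because their $\ell_{1}$‑part, weighted by $\beta_{i}\ge\lambda$, already certifies the off‑support coordinate — which is precisely why $\lambda$ enters $\theta$; and $\ell_{2}$‑groups must be handled at the \emph{group} level (this is what the $2$‑norm group characteristic vector is for), so the off‑$S$ gradient estimates must be carried in their group‑$\ell_{2}$ form, and one has to verify both that the near‑identity estimate for $\bm{D}_{S}^{\top}\bm{D}_{S}$ still goes through and that the noise term $\Vert\bm{D}_{\mathcal{G}_{i}}^{\top}\bm{E}\Vert_{2}$ is still dominated by $\Vert\bm{E}\Vert_{L}$ (true when the atoms of a group share a receptive field and are near‑orthonormal, as in the CNN instantiation). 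Making the three cases land simultaneously under the stated constant, and tracking which inequalities are strict (the strictness needed for 2) and 3)–4) coming from the slack in b), or in the saturated regime from the gap $\lambda\gamma_{\min}\le\gamma_{\min}$ in the $\ell_{1}$‑part of the subdifferential together with a limiting argument in $\gamma_{\min}$), is the delicate part; what remains is the routine coherence‑based perturbation estimate.
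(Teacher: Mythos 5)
Your overall architecture --- restrict to $S=\supp{}{\bm{\chi}_{\bm{\Gamma},\mathcal{G}}}$, control the oracle error through the normal equations with a diagonal-dominance/Neumann bound on $(\bm{D}_S^{\top}\bm{D}_S)^{-1}$, verify off-support dual feasibility, and finish with the $\gamma_{\min}=\|\bm{E}\|_L/(\lambda(1-c))$ arithmetic --- is quantitatively the same coherence/stripe-norm machinery the paper deploys: its Fundamental lemmata (Fuchs-type), the bound $\|(\bm{g}\star\bm{\gamma})\|_\infty\le\gamma_{\max}$, the estimate $\|\bm{D}_{\Lambda_*}^{\ast}\bm{E}\|_\infty\le\|\bm{E}\|_L$, the Ahlberg--Nilson--Varah bound, and the closing computation are exactly your ingredients, with the paper merely routing the $\ell_\infty$ estimate through the projection coefficients $\bm{C}_{\Lambda_*}$ and a triangle inequality (and through an explicit ERC quantity) rather than through your single identity $\bm{D}_S^{\top}\bm{D}_S(\bar{\bm{\Gamma}}^*_S-\bm{\Gamma}_S)=\bm{D}_S^{\top}\bm{E}-\bm{v}_S$. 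Parts 3), 4) and the ``weaker bound'' remark are handled identically in both.

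The genuine gap is your off-support certificate for pure $\ell_2$ groups. At a zero $\ell_2$ group the exact optimality condition is the group-level bound $\|\bm{D}_{\mathcal{G}_i}^{\top}(\bm{Y}-\bm{D}\bar{\bm{\Gamma}}^*)\|_2\le\gamma_i$, and the theorem's hypotheses (stripe norm of $\bm{\chi}_{\bm{\Gamma},\mathcal{G}}$, $\mu(\bm{D})$, $\|\bm{E}\|_L$) only control each coordinate of that correlation vector; without further structure the best available group bound is of order $\sqrt{|\mathcal{G}_i|}\,\lambda\gamma_{\min}$, which can exceed $\gamma_i$ for large groups. You concede this by invoking shared receptive fields and near-orthonormal atoms within each group --- assumptions Theorem~\ref{T2} does not make --- so under the stated hypotheses your certificate does not close. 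The paper never needs the group-$\ell_2$ bound: its Correlation Condition lemma tests single-coordinate perturbations $\delta\cdot\mathbf{e}_\omega$ only, and because $\supp{}{\bm{\chi}_{\bm{\Gamma},\mathcal{G}}}$ is group-full, a coordinate added to an entirely inactive $\ell_2$ group increases that group's norm by exactly $|\delta|$, i.e.\ it behaves like an $\ell_1$ term with weight $\gamma_i\ge\gamma_{\min}$, so only coordinatewise correlation bounds are required --- precisely what a) and b) supply. This is also where group-fullness of $\bm{\chi}$ is actually used, a feature your write-up introduces but never exploits. To repair your route you must either replace the group-level KKT check by this perturbation/variational argument, or add within-group structural hypotheses, which would prove a different theorem. (Your observation that the full subdifferential condition for a non-separable $\ell_2$ group is strictly stronger than coordinatewise tests is correct and worth keeping in mind, but measured against the theorem as stated, the $\ell_2$-group step is the one in your plan that fails.)
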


Roughly speaking, if the perturbation is not too large, the support of the noisy representation stays within its clean equivalent, and the indices that are above the threshold level in 4) are recovered.
Moreover, we can compare our result to the original Eq.~\ref{BP}, Theorem 6 in \cite{papyan2016working}, as in the pure $\ell_1$ norm case
$\lambda=1$ and if we set $c=\frac{2}{3}$, we get the same bound $\left\Vert \bm{\Gamma}\right\Vert
_{0,st}<\frac{1}{3}\left(  1+\frac{1}{\mu\left(  \bm{D}\right)  }\right)  $, but we have
$3\left\Vert \bm{E}\right\Vert _{L}\leq\gamma$ instead of the original $4\left\Vert \bm{E}\right\Vert
_{L}$ in b).
Similarly, our weaker bound in 3) and 4) is $6\left\Vert
\bm{E}\right\Vert _{L}$ instead of their $7.5\left\Vert \bm{E}\right\Vert _{L}$.

Interestingly, this single sparse layer theorem for Eq.~\ref{GBP} extends to multiple layers, where on each layer we can add group partitioning, can choose norms and weights. The precise convergence theorem can be found in Appendix~\ref{AppA}. It is a generalized version of Theorem 12 in \cite{papyan2017convolutional}, but that suffers from error accumulation \cite{romano2020adversarial}.

As mentioned earlier, we can rewrite a layered GBP into a single sparse layer GBP. The solution will differ a bit, but the error accumulation is not present, see Appendix~\ref{AppA} for the details. However, the new dictionary describing all the layers won't have unit normalization being a problem in the `classical' case but not in ours. This is because if the dictionary $\bm{D}$ is not unit-normed, but the columns belonging to a group $\mathcal{G}_{i}$ (where we choose the $\ell_{2}$ or the $\ell_{\beta,1,2}$ norm) have the same $\ell_2$ norm, then we can push the "normalization weights" of the columns of $\bm{D}$ to the weight $\gamma_{i}$ in $\bm{\gamma}$ through the solutions of the \eqref{GBP}. The problem and the solution change, but the solution will be equivalent to the original problem, see Appendix~\ref{AppA} for further details. This allows us to extend our result for more general sparse coding problems, see Fig.~\ref{fig:experiments}f and the Appendix~\ref{AppA}.

Now, if we stack a linear classifier onto the top of GBP (or onto a layered GBP) as it was done in \cite{romano2020adversarial}, we have several classification stability results, see in Appendix~\ref{AppA}. 

Also if we solve Eq.\eqref{GBP} with positive coding, i.e. restrict the problem to non-negative $\bar{\bm{\Gamma}}$ vectors, and the solution $\bm{\Gamma}_{+GBP}$ is group-full (i.e. $\supp{}{\bm{\Gamma}_{+GBP}}=\supp{}{\bm{\chi}_{\bm{\Gamma}_{+GBP},\bm{\mathcal{G}}}}$) then a weak stability theorem holds for $\bm{\Gamma}_{+GBP}$, more in Appendix~\ref{AppA}.

%

\section{Experimental Studies}\label{s:exps}
We turn to the description of our numerical studies. We want to explore the limitations of Group Basis Pursuit (GBP) methods and our experiments are outside of
the scope of the present theory. We first review the methods. It is followed by the description of the 
datasets and the experimental results. Throughout these studies we used fully connected (dense) networks implemented in PyTorch \cite{paszke2019pytorch}.

\subsection{Methods}\label{ss:meth}
\subsubsection{Architectures}

To evaluate the empirical robustness of our GBP with $\ell_2$ norm regularization, we compared two variants of it with Basis Pursuit (BP) and $3$ Feedforward networks.

For our BP experiments, we used a single BP layer to compute the hidden representation $\bm{\Gamma}_{BP}$, then stacked a classifier $\bm{w}$ on top.

Next, for GBP, we considered two scenarios.
First, we applied GBP on its own
to compute a full $\bm{\Gamma}_{GBP}$ code.
Second, we introduced \emph{Pooled GBP (PGBP)}: after computing $\bm{\Gamma}_{GBP}$ with GBP, we compressed it with a per group $\ell_2$ norm calculation into $\bm{\Gamma}_{PGBP}$, and used this smaller code as input to a smaller classifier $\bm{w}_{PGBP}$.

Finally, we employed $3$ feedforward neural networks trained for approximating $\bm{\Gamma}_{PGBP}$: a Linear Transformer \cite{katharopoulos2020transformers}, a single dense layer, and a dense deep network having parameter count similar to the Transformer.
Network structure details can be  found in Sect.~\ref{ss:nets}.
For the nonnegative norm values, we used Rectified Linear Unit (ReLU) activation at the top of these networks.
To migitate vanishing gradients, we also added a batch normalization layer in some cases.
After obtaining the approximate pooled $\hat{\bm{\Gamma}}_{PGBP}$, we applied the smaller $\bm{w}_{PGBP}$ as the classifier.


\subsubsection{Loss functions}\label{sss:loss}
Whenever training was necessary for classification (see Sect.~\ref{ss:mnist}), we pretrained our methods to minimize the unsupervised reconstruction loss $\|\bm{X}-\bm{D}\bm{\Gamma}_{(G)BP}\|_2^2$.

During classification and attack phase, we used a total loss function $J\left(\bm{D},\bm{w},\bm{b},\bm{X},\class{(\bm{X})}\right)$ consisting of a common classification loss term with an optional regularization term.

For the classification loss, we made our choice depending on the number of classes.
For the $2$ class (binary classification) case we used hinge loss, whereas for the multiclass case we applied the categorical cross-entropy loss.

The regularization loss was specifically employed to test whether it can further improve the adversarial robustness.
For this, we introduced a \emph{gap regularization} term to encourage a better separation between active and inactive groups.
We intended to increase the smallest difference of preactivations between the smallest active and the largest inactive group norm within a mini-batch of $\bm{\Gamma}_{(G)BP}$ samples:
\begin{equation}
    \begin{aligned}
    J_{\mathrm{gap}} &=  -\min_{i=1,\dots,N}\Bigl(\min_{j\colon\phi_\gamma\bigl(||\bm{\Gamma}_{(G)BP,G_j}^{(i)}||_2\bigr) \neq 0} ||\bm{\Gamma}_{(G)BP,G_j}^{(i)}||_2 \\
    &\quad\quad\quad\quad\quad- \max_{j\colon\phi_\gamma\bigl(||\bm{\Gamma}_{(G)BP,G_j}^{(i)}||_2\bigr)=0} ||\bm{\Gamma}_{(G)BP,G_j}^{(i)}||_2\Bigr),
    \end{aligned}
\end{equation}\label{eq:gap}
where $i$ is the sample index, $||\bm{\Gamma}_{(G)BP,G_j}^{(i)}||_2$ is the $\ell_2$ norm of group $j$ within $\bm{\Gamma}_{(G)BP}^{(i)}$ (i.e., an element of $\bm{\Gamma}_{PGBP}^{(i)}$) and $\phi_\gamma$ is an appropriate proximal operator.
For the BP case we applied group size $1$.

For the training of the  feedfoward networks, we applied mean squared error against $\bm{\Gamma}_{PGBP}$.


\subsubsection{Adversarial Attacks}
To generate the perturbed input $\bm{Y}=\bm{X}+\bm{E}$, we used the Iterative Fast Gradient Sign Method (IFGSM) \cite{kurakin2016adversarial}.
Specifically, this starts from $\bm{X}$ and takes $T$ bounded steps wrt. $\ell_\infty$ and $\ell_2$ norms according to the sign of gradient of the total loss $J$ to get $\bm{Y}=\bm{Y}_T$:
\begin{equation}
\begin{aligned}
\bm{Y}_0&=\bm{X},\\
\bm{G}_{t-1}&=\nabla_{\bm{Y}_{t-1}} J\left(\bm{D},\bm{w},\bm{b},\bm{Y}_{t-1},\class{(\bm{X})}\right)\\
\bm{Y}_t&=\clamp\left(\bm{Y}_{t-1}+a \cdot \sgn{\left(\bm{G}_{t-1}\right)}\right).
\end{aligned}
\end{equation}
where for the learning rate we set $a=\frac{\epsilon}{T}$ and $\clamp$ is a clipping function.
For most cases, the attack was white box and if applicable, the total loss $J$ included the optional gap regularization term.
However, for the $3$ Feedforward networks we computed $\bm{Y}$ using PGBP, resulting in a black box attack.

\subsection{Datasets}\label{ss:data}
We used three datasets; two synthetic ones and MNIST.

\subsubsection{Synthetic Data}
We generated two synthetic datasets, one without and another with group pooling, according to the following procedure.
First, we built a \emph{dictionary} $\bm{D}\in\mathbb{R}^{100 \times 300}$ using normalized Grassmannian packing with $75$ groups of size $4$ \cite{dhillon2008constructing}.
We generated two normalized random classifiers $\bm{w}\in\mathbb{R}^{300}$ and $\bm{w}_{PGBP}\in\mathbb{R}^{75}$ with components drawn from the normal distribution $\mathcal{N}(0,1)$ and set the bias term to zero ($\bm{b}=0$).
Next, we created the respective input sets.
We kept randomly generating $\bm{\Gamma}\in\mathbb{R}^{300}$ vectors having $8$ nonzero groups of size $4$ with activations drawn uniformly from
$[1,2]$ and computed $\bm{X}=\bm{D}\bm{\Gamma}$.
We collected two sets of $\SI{10000}{}$ $\bm{X}$ vectors that satisfied classification margin 
$\mathcal{O}(\bm{X})\geq\eta\in \{0.03, 0.1, 0.3\}$ 
in terms of the classifiers $\bm{w}$ and $\bm{w}_{PGBP}$ acting on top of $\bm{\Gamma}$ (no pooling) and the $\ell_2$ norms of the groups of $\bm{\Gamma}$ (pooled), respectively.
While running our methods, we used a single dense layer and a linear classifier layer with the true parameters ($\bm{D}$, $\bm{w}_{(PGBP)}$).

\subsubsection{MNIST Data}\label{ss:mnist}
We employed image classification on the real MNIST dataset.
The images were vectorized and we preprocessed to zero mean and unit variance.
We used a fully connected (dense) dictionary $\bm{D}\in\mathbb{R}^{784 \times 256}$, hidden representation $\bm{\Gamma}_{(G)BP} \in \mathbb{R}^{256}$ with optionally $32$ groups of size $8$ for our grouped methods, and a fully connected softmax classifier $\bm{w}$ mapping to the $10$ class probabilities acting either on top of the full $\bm{\Gamma}_{(G)BP}$ (i.e., $\bm{w}_i\in\mathbb{R}^{256}$, $i=1,\dots,10$) or the compressed
$\bm{\Gamma}_{PGBP}$ (i.e., $\bm{w}_{PGBP,i}\in\mathbb{R}^{32}$, $i=1,\dots,10$).
Since in this case the true parameters ($\bm{D}$, $\bm{w}$, $\bm{b}$) were not available for our single layer methods, we tried to learn these via backpropagation over the training set.
For this, we applied Stochastic Gradient Descent (SGD) \cite{bottou2018optimization} over $500$ epochs with early stopping patience $10$.
To prevent dead units in $\bm{D}$, we increased $\gamma$ linearly between $0$ and its final value over the initial $4$ epochs.

In agreement with the sparse case \cite{sulam2020adversarial}, we found that pretraining the dictionary using reconstruction loss (see Sect.~\ref{sss:loss}) is beneficial in the group case, too. 




\subsection{Experimental Results}\label{ss:exp_res}
We note that our numerical studies are outside of the scope of the theory as shown by Table~\ref{tab:sparsity} in Appendix~\ref{AppB} since (i) only about 50\% of the perfect group combinations could be found in the synthetic case and (ii) the group assumption is not warranted for the MNIST dataset. 

\begin{figure}[!h]
\centering     
\includegraphics[width=67.85mm]{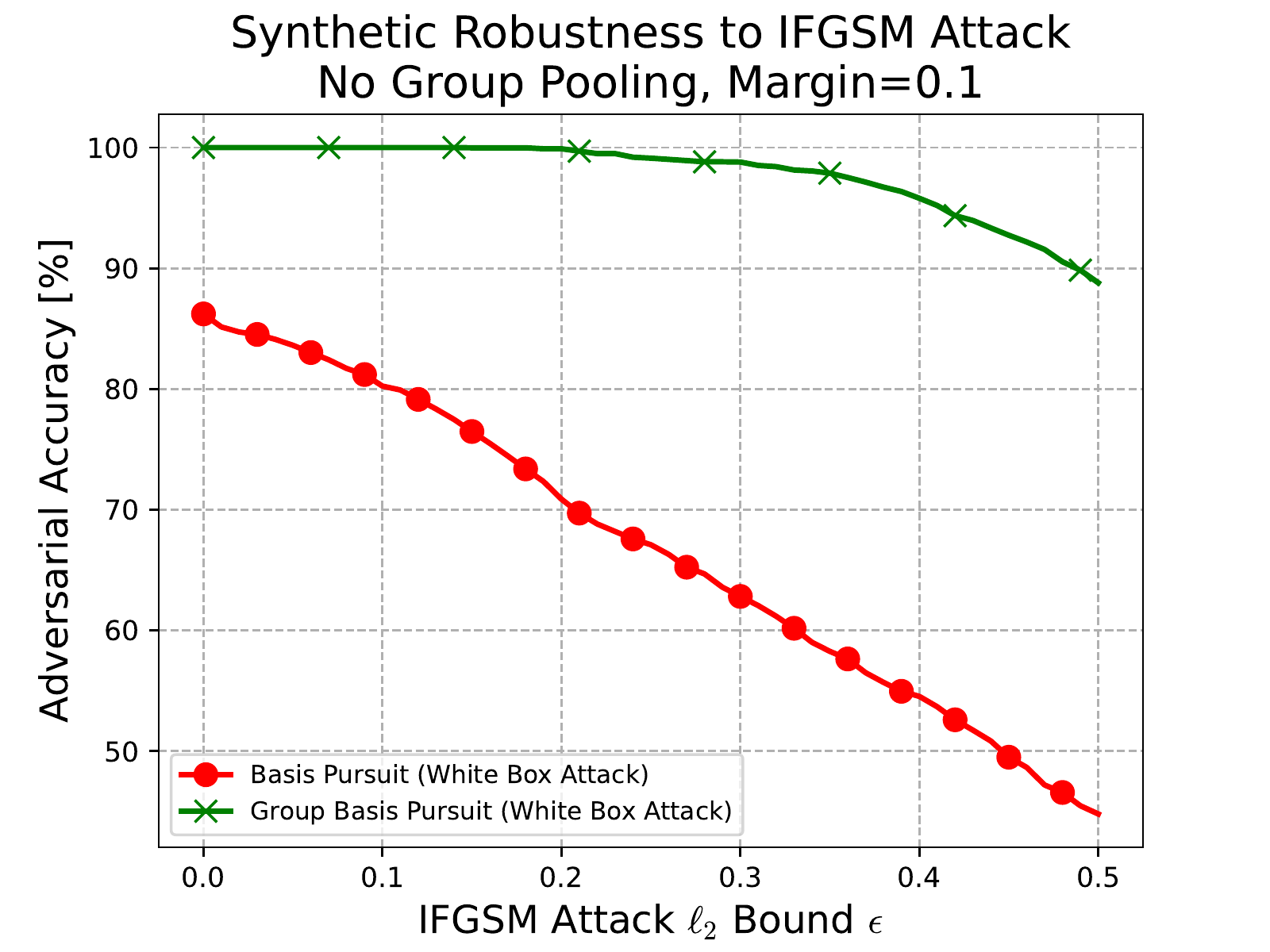}\\
\textbf{(a)}\\
\includegraphics[width=67.85mm]{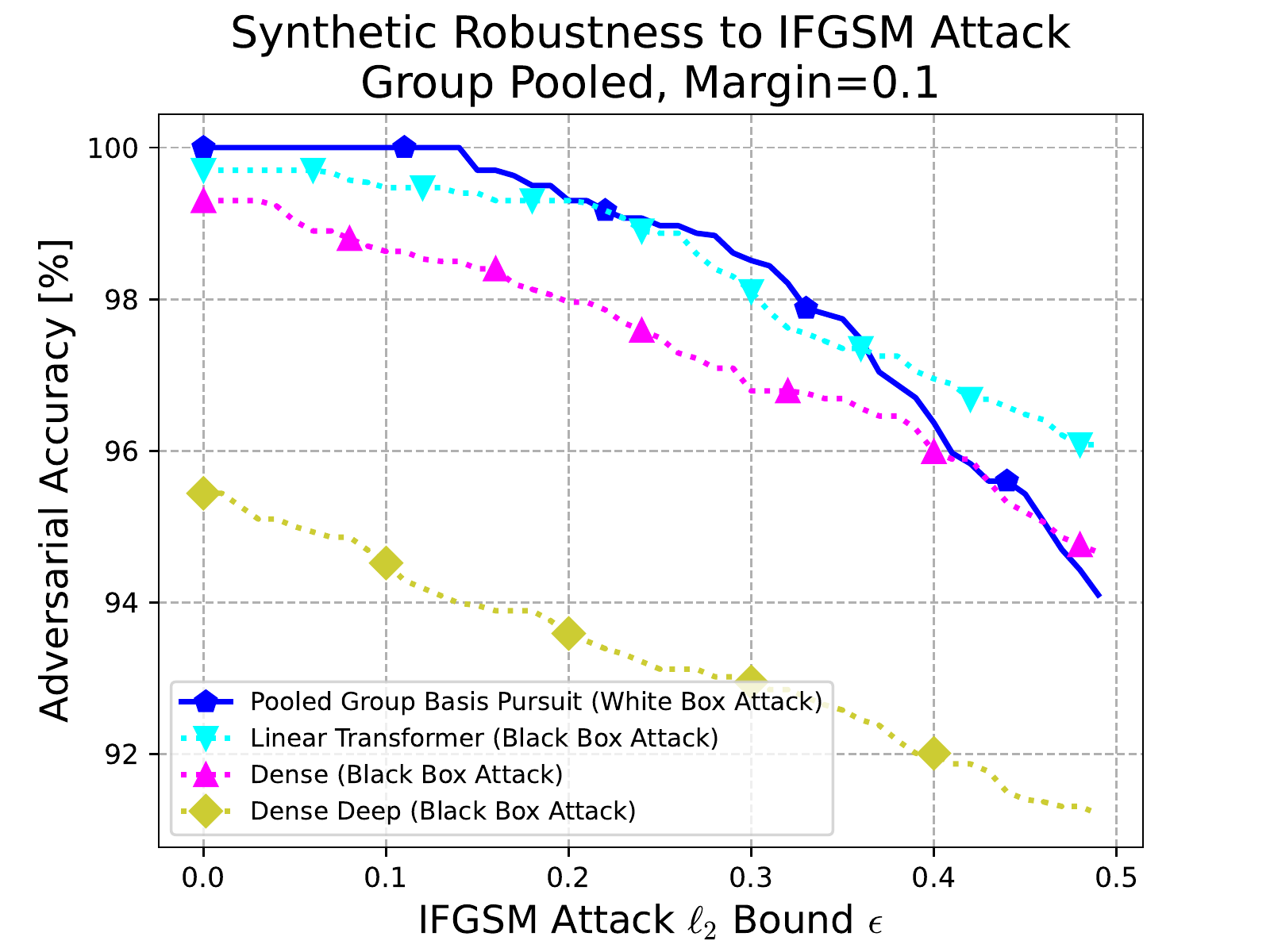}\\
\textbf{(b)}\\
\includegraphics[width=67.85mm]{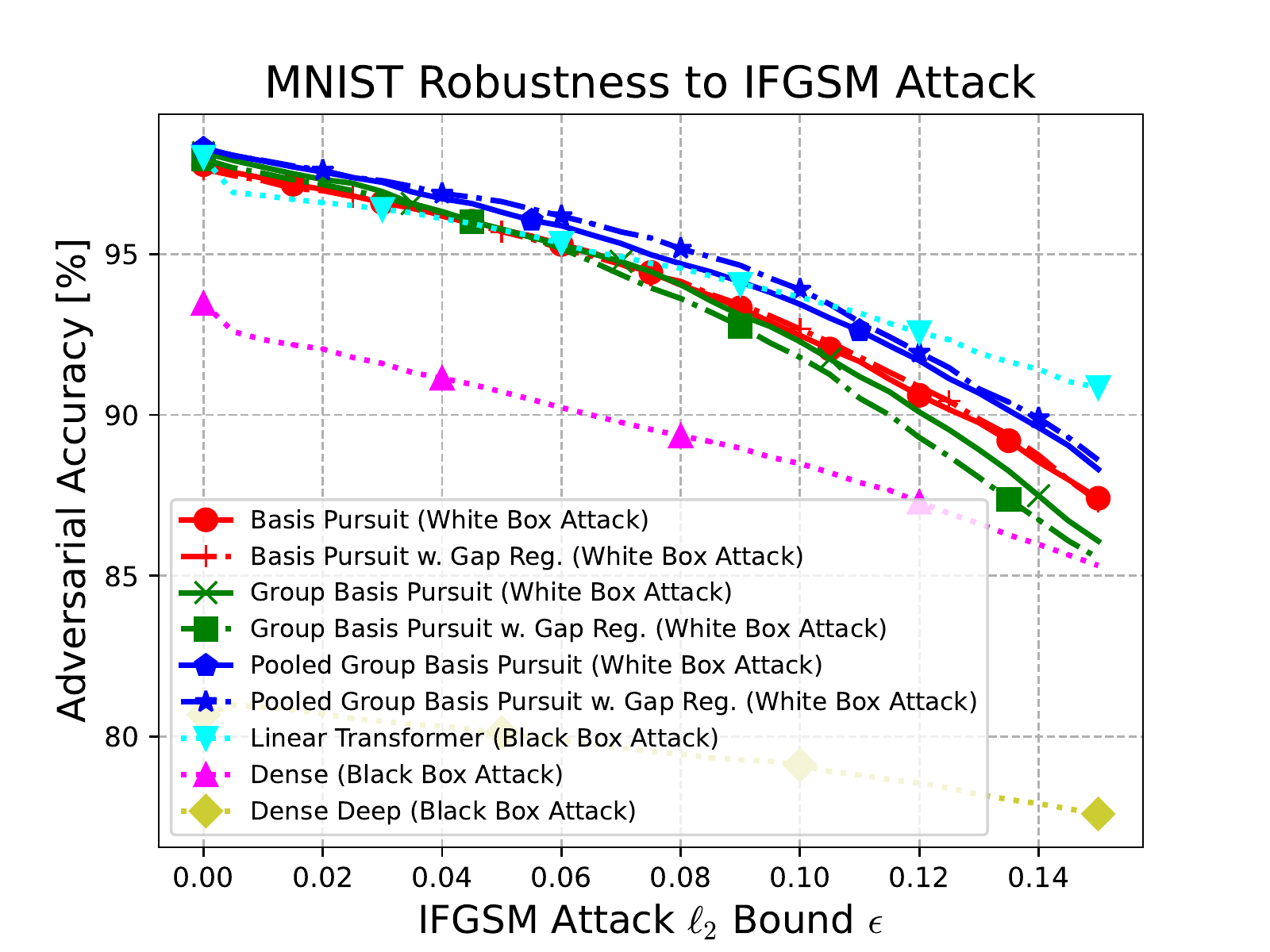}\\
\textbf{(c)}
\caption{Results for adversarial robustness against Iterative Fast Gradient Sign Method (IFGSM) attack. Datasets differ for all subfigures. Best viewed zoomed in. 
\textbf{(a):} Synthetic dataset, no group pooling: our Group Basis Pursuit (GBP, green) obtains $100\%$ accuracy for small $\epsilon$ and considerably outperforms Basis Pursuit (BP, red) as it can exploit the given group structure.
\textbf{(b):} Synthetic dataset, group pooling: Pooled Group Basis Pursuit (PGBP, blue) achieves perfect scores for small $\epsilon$. Break down is faster than for the Linear Transformer (LT, cyan) and the Dense (magenta) networks due to the difference between white box and black box attacks. Deep network (yellow) having parameter count similar to LT is overfitting.
\textbf{(c)}: MNIST dataset: PGBP is the best for small $\epsilon$, and it also consistently outperforms all BP and GBP variants for large $\epsilon$. For some methods, gap regularization (dash-dotted) increases performance. For large $\epsilon$, black box attacked LT scores the highest. Deep network overfits.}
\label{fig:net-exps}
\end{figure}

\subsubsection{Synthetic experiments}
We used three margins, $0.03$, $0.1$, and $0.3$ on the synthetic data. 
Results for margin $0.1$ of the no group pooling and group pooled synthetic experiments are shown in Fig.~\ref{fig:net-exps}~a) and b), respectively.
See Appendix~\ref{AppB} for the rest. 

For the no group pooling experiment, we found that BP achieves low accuracy even without attacks, and it breaks down rapidly for increasing $\epsilon$.
In contrast, our GBP achieves perfect scores for low $\epsilon$, since it has access to the ground truth group structure of the data, and it is able to leverage it.
For large $\epsilon$ values, it still breaks down and is slower than BP in the studies domain.
Note, however, that the search space is much larger for BP than for GBP.


For the group pooled experiment, the dense, deep dense and transformer networks were trained to approximate PGBP instead of the ground truth, hence they score worse for zero attack.
Up to $ \epsilon \approx 0.14$ values, PGBP reaches perfect accuracy.
Beyond that and due to the different nature of the attack (white box for PGBP and black box for the others), the breakdown is faster for PGBP than for the other methods. The effect is more pronounced for smaller margins (see \ref{ss:Synth_attack}).  
Out of the three feedforward estimations, the transformer performed the best.

\subsubsection{MNIST experiment}
On MNIST, we compared BP, GBP, PGBP, their respective gap regularized variants and the 3 feedfoward networks.

Among the white box attacked pursuit methods, PGBP gave the best results for both the non-attacked and for the attacked case, indicating the benefits of the pooled representation, i.e., it is more difficult to attack group norms than the elements within groups.
We think that this result deserves further investigation.
BP and GBP were worse and their curves crossed each other.

Gap regularization (Eq.~\eqref{eq:gap}) slightly increased performance for BP and PGBP, but it impairs GBP.
We believe that this technique may be improved by making it less restrictive, similarly to the modifications for mutual coherence in \cite{murdock2020dataless}, e.g., by averaging the terms.

Feedforward nets were attacked by the black box method. The Linear Transformer obtained the best results. Deep Network was difficult to teach; it was overfitting.

\section{Discussion}\label{s:disc}

We have dealt with the structural extensions of basis pursuit methods. We have extended the stability theory of sparse networks and their cascaded versions as follows: 
\begin{enumerate}[]
\setlength\itemsep{-0.1em}
    \item  The non-cascaded extension \cite{cazenavette2020architectural} that includes skip connections beyond the off-diagonal identity blocks of the matrix depicted in Fig.~\ref{fig:net-exps} that is the lower triangular part of the matrix can be filled by general blocks has stability proof.
    \item Stability proof holds if non-zero general block matrices occur in the upper triangular matrix representing unrolled feedback connections.
    \item Stability proof holds if representation elements within any layers are grouped.
    \item Different layers and groups can have different biases, diverse norms, such as $\ell_1$, $\ell_{1,2}$, and the elastic norm.
    \item The theorem is valid for Convolutional Neural Networks.
    \item Proofs are valid for positive coding for the sparse case and under certain conditions, for the group case, too.
\end{enumerate}


Feedforward estimations are fast and our experiments indicate that they are relatively accurate especially for the Linear Transformer for the group structures when there is no attack.
In case of attacks, the transformer shows reasonable robustness against black box attacks. However, it seems that transformers are also fragile for white box attacks \cite{bai2021transformers}.
Attacks can be detected as shown by the vast literature on this subject. For recent reviews, see \cite{akhtar2021advances,salehi2021unified} and the references therein. Detection of the attacks can optimize the speed if all (P)GBP and feedforward estimating networks are run in parallel and the detection is fast so it can make the choice in time. 

Performances could be improved by introducing additional regularization loss terms \cite{murdock2021reframing}.
We could improve our results by adding a loss term aiming to increase the gap between the groups that will become active and the groups that will be inactive after soft thresholding.
Our results are promising and the present loss term (Eq.~\eqref{eq:gap}) may be too strict. 
Another interesting loss term could be the minimization of the mutual coherence of $\bm{D}$ \cite{murdock2020dataless} and we leave this examination for future works.

Our experimental studies can be generalized in several ways. Firstly, a single layer can not be perfect for all problems. The hierarchy of layers is most promising for searching for groups of different sizes. As an example, edge detectors can be built hierarchically using CNNs, see, e.g., \cite{poma2020dense}.

Further, we restricted the investigations to groups of the same size and the same bias, even though that inputs may be best fit by  groups of different sizes, or even by including a subset of single elements, and the bias may also differ. This is an architecture optimization problem, where the solution is unknown. Learning of the sparse representation is however, promising since under rather strict conditions, high-quality sparse dictionaries can be found \cite{arora2015simple}. The step to search for groups is still desired since (a) the search space may become smaller by the groups and (b) the presence of the active groups may be estimated quickly and accurately using feedforward methods, especially transformers (in the absence of attacks). In turn, feedforward estimation of the groups followed by (P)GBP with different group sizes including single atoms seems worth studying.

\section{Conclusions}\label{s:conc}
We studied the adversarial robustness of sparse coding.
We proved theorems for a large variety of structural generalizations, including: groups within layers, diverse connectivities between the layers and versions of optimization costs related to the $\ell_1$ norm.
We also studied group sparse networks experimentally. We demonstrated that our GBP can outperform BP, and that our PGBP works better than both using 8 times smaller representation.
We found that PGBP offers fast feedforward estimations and the transformer version shows considerable robustness for the datasets we studied.
Finally, we showed that gap regularization can improve robustness even further, as suggested by condition \textit{4)} of Theorem~\ref{T2}.

Yet, the scope of our studies are limited from multiple perspectives.
First, the suprisingly great performance of our PGBP despite its small representation calls for further investigations and if possible, for theorems.
We believe that extensions for varying group sizes and other loss functions may provide performance improvements.

Defenses against noise, novelties, anomalies and, in particular, against adversarial attacks may be solved by combining our robust, structured sparse networks with out-of-distribution detection methods. 

%


\section*{ACKNOWLEDGEMENTS}
The research was supported by (a) the Ministry of Innovation and Technology NRDI Office within the framework of the Artificial Intelligence National Laboratory Program and (b) Application Domain Specific Highly Reliable IT Solutions project of the National Research, Development and Innovation Fund of Hungary, financed under the Thematic Excellence Programme no.\ 2020-4.1.1.-TKP2020 (National Challenges Subprogramme) funding scheme.

\clearpage\newpage

\bibliographystyle{named}
\bibliography{ijcai22}

\clearpage\newpage

\appendix
\section{Proofs of our Theorems}\label{AppA}

Most of the results are proved along the lines of the original proofs, however we will need some extensions. To be more self-contained we present the proofs which differ from the original ones. First we define our objects in \ref{SSnot}

 In \ref{SSerc} we begin with the result of J. A. Tropp \cite{tropp2006just} to describe when the solution of a \eqref{GBP} problem restricted to a subspace is the same as on the whole space. This will require subgradients and the extended version of the exact recovery condition (ERC). This will help us to prove Theorem \ref{T1} which is the main pillar of the whole theory.

Next in \ref{SSmutual} we follow the papers \cite{papyan2017working}, \cite{papyan2016working} to use the mutual coherence and the stripe norm to give guarantees that the assumptions of Theorem \ref{T1} are fulfilled to prove Theorem \ref{T2} and in \ref{SSmulti} it's multi layered extension Theorem \ref{T3}.

Then in \ref{SSSingle} we show that the trick proposed by \cite{cazenavette2020architectural} to rewrite a multi layered (BP) into a single layer (BP)-like system, for which the (BP) math was not working, works in our case for the \eqref{GBP}. We show how this helps to alleviate the condition on the dictionary that it's atoms are unit normed in the $\ell_{2}$ norm.

At the end in \ref{SSlinclas}, similarly to \cite{romano2020adversarial}, we show how linear classifiers behave under these theorems.

\subsection{Notations}\label{SSnot}

We will use the following notations and definitions. Let $\bm{D}\in\mathbb{R}%
^{N\times M}$ be our \textit{dictionary} and our goal is to solve the equation%
\begin{equation}
\bm{X}=\bm{D}\bm{\Gamma}\nonumber
\end{equation}
where $\bm{X}\in\mathbb{R}^{N}$ is the \textit{signal} and $\bm{\Gamma}\in\mathbb{R}%
^{M}$ is the \textit{sparse vector} coding our signal. We will assume here 
that $\bm{D}$ is normalized, i.e. every column in the $\ell_{2}$ norm has a length of
$1$. This condition can be weakened (see later) but we keep this condition for the sake simplicity. Let $\left\Vert \bm{Z}\right\Vert _{p}=\left[  \sum_{i=1}^{n}z_{i}^{p}\right]
^{1/p},~\bm{Z}\in\mathbb{R}^{n}$ be the usual $\ell_{p}$ norm and $\left\Vert
\bm{Z}\right\Vert _{\infty}=\max_{i\in\left\{  1,\dots,n\right\}  }\left\vert
z_{i}\right\vert $ the usual $\ell_{\infty}$ norm. We will also need the $\left(
p,q\right)  $ \textit{operator norm} of a matrix $\bm{A}$ which is defined by
$\left\Vert \bm{A}\right\Vert _{p,q}\overset{def}{=}\max_{\bm{x}\neq0}\frac{\left\Vert
\bm{A}\bm{x}\right\Vert _{q}}{\left\Vert \bm{x}\right\Vert _{p}}.$ This yields $\left\Vert
\bm{A}\bm{x}\right\Vert _{q}\leq\left\Vert \bm{A}\right\Vert _{p,q}\left\Vert \bm{x}\right\Vert
_{p}$. Moreover $\left\Vert \bm{A}\right\Vert _{p}\overset{def}{=}\left\Vert
\bm{A}\right\Vert _{p,p}$ is the common shorthand we may use sometimes. We will
consider the $\ell_{1},~\ell_{\lambda,1,2},~\ell_{1,2}$ norms at the minimization
problem of Eq.~\eqref{GBP}.

We \ will also use the standard definition of the subgradient of a function
$l\in C\left(  \mathbb{R}^{n}\right)  $ which is $\partial l\left(  \bm{x}\right)
=\left\{  g\in\mathbb{R}^{n}~|~l\left(  \bm{y}\right)  \geq l\left(  \bm{x}\right)
+\left\langle \bm{y}-\bm{x},g\right\rangle ,~\forall \bm{y}\in\mathbb{R}^{n}\right\}  .$ From
convex analysis we know that $\partial\left(  f_{1}+f_{2}\right)  \left(
\bm{x}\right)  =\partial f_{1}\left(  \bm{x}\right)  +\partial f_{2}\left(  \bm{x}\right)  $
and if $f$ is differentiable at $\bm{x}$, then $\partial f\left(  \bm{x}\right)
=\left\{  \nabla f\left(  \bm{x}\right)  \right\}  .$

Let us consider the dictionary $\bm{D}$ and let $\Lambda\subset\left\{
1,\dots,N\right\}  $ be a subset of the domain (called \textit{subdomain}).
Let us take the corresponding columns (called \textit{atoms}) of the
dictionary and their matrix will be denoted by $\bm{D}_{\Lambda}$ and called
\textit{subdictionary}. If $\bm{D}_{\Lambda}$ has full column rank, then
$\bm{D}_{\Lambda}^{\dag}\overset{def}{=}\left(  \bm{D}_{\Lambda}^{\ast}\bm{D}_{\Lambda
}\right)  ^{-1}\bm{D}_{\Lambda}^{\ast}$ is the Moore-Penrose pseudo-inverse of
$\bm{D}_{\Lambda}$ and $\bm{P}_{\Lambda}=\bm{D}_{\Lambda}\bm{D}_{\Lambda}^{\dagger}$ is the
orthogonal projection matrix to the subspace spanned by the columns of
$\bm{D}_{\Lambda}$. If $\bm{X}$ is a signal then let us denote $\bm{X}_{p,\Lambda
}\overset{def}{=}\bm{P}_{\Lambda}\bm{X}$ its projection, which is the best $\ell_{2}$
approximation of $\bm{X}$ with the subdictionary $\bm{D}_{\Lambda}$. By the assumption
that $\bm{D}_{\Lambda}$ has full column rank, we can explicit solve
\[
\bm{X}_{p,\Lambda}=\bm{D}_{\Lambda}\bm{C}_{\Lambda}\text{ with } \bm{C}_{\Lambda}=\bm{D}_{\Lambda}^{\dagger}\bm{X}_{p,\Lambda
}.
\]

Let $\left\vert \Lambda\right\vert $ denote the number of indices in $\Lambda
$. Consider the subspace $V_{\Lambda}\overset{def}{=}\left\{  \bm{v}\in
\mathbb{R}^{N}~|~\supp{}{\bm{v}}\subseteq\Lambda\right\}  $ which can be
identified with $\mathbb{R}^{\left\vert \Lambda\right\vert }$ and a function
$l\in C\left(  \mathbb{R}^{N}\right)  $. This function can be restricted to
$V_{\Lambda}$ and by the identification $V_{\Lambda}=\mathbb{R}^{\left\vert
\Lambda\right\vert }$ we can consider the restricted $l$ as a function on
$\mathbb{R}^{\left\vert \Lambda\right\vert }.$ Let $l_{\Lambda}\in C\left(
\mathbb{R}^{\left\vert \Lambda\right\vert }\right)  $ denote the function we
got after the restriction and identification which will be called\textit{ the
induced function of }$l$\textit{ on }$\Lambda$.

If we solve Eq.~\eqref{GBP} over the subdomain $\Lambda$ (i.e. on $V_{\Lambda}$)
and get a solution $\widetilde{\bm{\Gamma}}_{\Lambda}\in\mathbb{R}^{N}$ then by the
identification $V_{\Lambda}=\mathbb{R}^{\left\vert \Lambda\right\vert }$ let
$\hat{\bm{\Gamma}}_{\Lambda}$ be the vector corresponding to the solution
$\widetilde{\bm{\Gamma}}_{\Lambda}$. The vector $\hat{\bm{\Gamma}}_{\Lambda}%
\in\mathbb{R}^{\left\vert \Lambda\right\vert }$ will be called \textit{the
compression of the solution} $\widetilde{\bm{\Gamma}}_{\Lambda}$.

We will use a modified version of the fundamental lemmata found in Fuchs \cite{fuchs2004sparse} but first we need some notations.

Let $\mathcal{G}_{i},~i\in\left\{  1,\dots,k\right\}  $ be a partition of the index set $\left\{  1,\dots,M\right\}$ and $\bm{\gamma}\in\mathbb{R}^k$ a weight vector. Let $\Lambda$ be a subdomain and $\bm{g}_i\in\mathbb{R}^M$ be vectors with supports in $\mathcal{G}_{i}\cap\Lambda$ for every $i\in\left\{  1,\dots,k\right\}$. We can take the compressions $\left(\bm{g}_i\right)_{\Lambda}$ of the $\bm{g}_i$ vectors which will be vectors in $\mathbb{R}^{\Lambda}$. Now if we consider the function $\left< \bm{\gamma},\bm{l}\left( \bm{\Gamma} \right) \right> = \sum_{i=1}^{k}
\gamma_{i}\bm{l}_{\alpha_{i}} \left( \bm{\Gamma_{\mathcal{G}_{i}}}\right)$ then we can restrict it to vectors with support in $\Lambda$. Thus the restricted functions $\left(\bm{l}_{\alpha_{i}}\right)_{\Lambda}$ depend only on coordinates in $\mathcal{G}_{i}\cap\Lambda$ (and this function can be simply considered as the $\bm{l}_{\alpha_{i}}$ norm on $\mathbb{R}^{|\mathcal{G}_{i}\cap\Lambda|}\subset\mathbb{R}^{|\Lambda|}$). Therefore it's subgradient $\partial\left(\bm{l}_{\alpha_{i}}\right)_{\Lambda}$ contains only vectors with support in $\mathcal{G}_{i}\cap\Lambda$ i.e. the subgradients can be compressed into $\Lambda$. Now if we restrict our regularizer to the support $\Lambda$ we have
$\partial\left< \bm{\gamma}_{\Lambda},\bm{l}\left( \bm{\Gamma}_{\Lambda} \right) \right> = \sum_{i=1}^{k}
\gamma_{i}\partial\left(\bm{l}_{\alpha_{i}}\right)_{\Lambda} \left( \bm{\Gamma_{\mathcal{G}_{i}\cap\Lambda}}\right).$ Thus every element in the subgradient $\bm{g}\in\partial\left< \bm{\gamma}_{\Lambda},\bm{l}\left( \bm{\Gamma}_{\Lambda} \right) \right>$ can be expressed as $\bm{g}=\sum_i \gamma_i \bm{g}_i$ for some $\bm{g}_i\in \partial\left(\bm{l}_{\alpha_{i}}\right)_{\Lambda} \left( \bm{\Gamma_{\mathcal{G}_{i}\cap\Lambda}}\right)$. Also the compressed vector is just the sum of the compressions $\left(\bm{g}\right)_{\Lambda}=\sum_i \gamma_i \left(\bm{g}_i\right)_{\Lambda}$. We want to introduce a notation for this vector. Let 
\[
\partial\bm{l}_{\Lambda}\left(\hat{\bm{\Gamma}}_{\Lambda}\right)\overset{def}{=}
\left( \partial\left(\bm{l}_{\alpha_{i}}\right)_{\Lambda} \left( \bm{\Gamma_{\mathcal{G}_{1}\cap\Lambda}}\right),\dots,\partial\left(\bm{l}_{\alpha_{i}}\right)_{\Lambda} \left( \bm{\Gamma_{\mathcal{G}_{k}\cap\Lambda}}\right)\right)
\]
be the "vector of sets" and $\bm{g}\in \partial\bm{l}_{\Lambda}\left(\hat{\bm{\Gamma}}_{\Lambda}\right)$ be a "vector of vectors" such that
\[
\bm{g}=\left(\bm{g}_1,\dots,\bm{g}_k \right),\text{ for some } \bm{g}_i\in
\partial\left(\bm{l}_{\alpha_{i}}\right)_{\Lambda} \left(\bm{\Gamma_{\mathcal{G}_{i}\cap\Lambda}}\right).
\]
Now let
\[
\left(\bm{g}\star\bm{\gamma}\right)\overset{def}{=}\sum_i \gamma_i \left(\bm{g}_i\right)_{\Lambda}
\]
which is a vector in $\mathbb{R}^{|\Lambda|}$. 

\subsection{(ERC) condition and it's stability consequences}\label{SSerc}

\begin{lemma}
[Fundamental lemmata]\label{Lemmata}Let $\Lambda$ be a subdomain such that
$\bm{D}_{\Lambda}$ has maximal column rank. Assume that $\hat{\bm{\Gamma}}_{\Lambda
}$ is the compression of the solution $\widetilde{\bm{\Gamma}}_{\Lambda}$ which is
a minimizer of Eq.~\eqref{GBP} over all the vectors with support $\Lambda$. A
necessary and sufficient condition on such a minimizer is that%
\[
\bm{C}_{\Lambda}-\hat{\bm{\Gamma}}_{\Lambda}=\left(  \bm{D}_{\Lambda}^{\ast
}\bm{D}_{\Lambda}\right)  ^{-1}\left(\bm{g}\star\bm{\gamma}\right)\text{, for some }\bm{g}\in\partial \bm{l}_{\Lambda}\left(
\hat{\bm{\Gamma}}_{\Lambda}\right).
\]
where $\bm{l}_{\Lambda}$ is the compression of the vector valued function $\bm{l}$ to $\Lambda$. Moreover
the minimizer $\hat{\bm{\Gamma}}_{\Lambda}$ is unique.
\end{lemma}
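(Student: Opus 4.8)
The plan is to recognize that, after restricting Eq.~\eqref{GBP} to the subdomain, we are dealing with an unconstrained strictly convex program, so the asserted identity is just its first-order (subgradient) optimality condition and uniqueness follows from strict convexity. Concretely, I would first pass to $V_{\Lambda}$ and use the identification $V_{\Lambda}\cong\mathbb{R}^{|\Lambda|}$: writing $\bm{D}\bar{\bm{\Gamma}}=\bm{D}_{\Lambda}\bar{\bm{\Gamma}}$ for $\bar{\bm{\Gamma}}$ supported on $\Lambda$, the restricted objective becomes $L_{\Lambda}(\bar{\bm{\Gamma}})=\tfrac12\|\bm{X}-\bm{D}_{\Lambda}\bar{\bm{\Gamma}}\|_{2}^{2}+\langle\bm{\gamma},\bm{l}(\bar{\bm{\Gamma}})\rangle$ on $\mathbb{R}^{|\Lambda|}$, where the $i$-th group norm now only sees the coordinates in $\mathcal{G}_{i}\cap\Lambda$ (coordinates outside $\Lambda$ being pinned to zero). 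Both summands are convex and finite on all of $\mathbb{R}^{|\Lambda|}$, so $\hat{\bm{\Gamma}}_{\Lambda}$ is a minimizer if and only if $\bm{0}\in\partial L_{\Lambda}(\hat{\bm{\Gamma}}_{\Lambda})$; this is the classical Fuchs route \cite{fuchs2004sparse}, which I would adapt to the grouped/elastic regularizer.

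Next I would compute the two pieces of $\partial L_{\Lambda}$. The quadratic part is differentiable with gradient $\bm{D}_{\Lambda}^{\ast}(\bm{D}_{\Lambda}\hat{\bm{\Gamma}}_{\Lambda}-\bm{X})=\bm{D}_{\Lambda}^{\ast}\bm{D}_{\Lambda}\hat{\bm{\Gamma}}_{\Lambda}-\bm{D}_{\Lambda}^{\ast}\bm{X}$; since $\bm{X}-\bm{X}_{p,\Lambda}$ is orthogonal to the column space of $\bm{D}_{\Lambda}$ one has $\bm{D}_{\Lambda}^{\ast}\bm{X}=\bm{D}_{\Lambda}^{\ast}\bm{X}_{p,\Lambda}=\bm{D}_{\Lambda}^{\ast}\bm{D}_{\Lambda}\bm{C}_{\Lambda}$, so this gradient equals $\bm{D}_{\Lambda}^{\ast}\bm{D}_{\Lambda}(\hat{\bm{\Gamma}}_{\Lambda}-\bm{C}_{\Lambda})$. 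For the regularizer, because the groups are disjoint and each $\gamma_{i}(\bm{l}_{\alpha_{i}})_{\Lambda}$ depends only on the coordinates in $\mathcal{G}_{i}\cap\Lambda$, the Moreau--Rockafellar sum rule applies without any constraint qualification (every summand is finite and continuous), giving $\partial\langle\bm{\gamma},\bm{l}(\cdot)\rangle(\hat{\bm{\Gamma}}_{\Lambda})=\{\,\bm{g}\star\bm{\gamma}\mid\bm{g}\in\partial\bm{l}_{\Lambda}(\hat{\bm{\Gamma}}_{\Lambda})\,\}$, which is precisely the object set up in Subsection~\ref{SSnot}. Hence $\bm{0}\in\partial L_{\Lambda}(\hat{\bm{\Gamma}}_{\Lambda})$ is equivalent to the existence of $\bm{g}\in\partial\bm{l}_{\Lambda}(\hat{\bm{\Gamma}}_{\Lambda})$ with $\bm{D}_{\Lambda}^{\ast}\bm{D}_{\Lambda}(\hat{\bm{\Gamma}}_{\Lambda}-\bm{C}_{\Lambda})+(\bm{g}\star\bm{\gamma})=\bm{0}$; since $\bm{D}_{\Lambda}$ has full column rank, $\bm{D}_{\Lambda}^{\ast}\bm{D}_{\Lambda}$ is invertible and we may rearrange to $\bm{C}_{\Lambda}-\hat{\bm{\Gamma}}_{\Lambda}=(\bm{D}_{\Lambda}^{\ast}\bm{D}_{\Lambda})^{-1}(\bm{g}\star\bm{\gamma})$, which is the claimed necessary and sufficient condition.

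For uniqueness I would note that full column rank of $\bm{D}_{\Lambda}$ makes $\bm{D}_{\Lambda}^{\ast}\bm{D}_{\Lambda}\succ\bm{0}$, so $\bar{\bm{\Gamma}}\mapsto\tfrac12\|\bm{X}-\bm{D}_{\Lambda}\bar{\bm{\Gamma}}\|_{2}^{2}$ is strictly convex on $\mathbb{R}^{|\Lambda|}$; adding the convex regularizer $\langle\bm{\gamma},\bm{l}(\cdot)\rangle$ keeps $L_{\Lambda}$ strictly convex, so it has at most one minimizer.

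I expect the only genuinely delicate point to be the subdifferential calculus for the grouped/elastic regularizer: one must verify that the subdifferential of the sum decomposes group-by-group and that, on each group, it is $\gamma_{i}$ times the subdifferential of the chosen $\ell_{1}$, $\ell_{2}$, or $\ell_{\beta,1,2}$ norm (including the case where a whole group of $\hat{\bm{\Gamma}}_{\Lambda}$ vanishes, where this subdifferential is the relevant dual-norm ball at the origin, i.e.\ a Minkowski sum for the elastic case), and that after compression into $\Lambda$ this is exactly the $\star$-operation of Subsection~\ref{SSnot}. Since all regularizer pieces are finite continuous convex functions, no qualification or lower-semicontinuity subtleties arise, so this step is careful bookkeeping rather than a real obstacle; everything else is the standard Fuchs fundamental-lemma argument specialized to the restricted problem.
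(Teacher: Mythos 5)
Your proposal is correct and follows essentially the same route as the paper's proof: restrict to $V_{\Lambda}$, use the orthogonality of $\bm{X}-\bm{X}_{p,\Lambda}$ to the span of $\bm{D}_{\Lambda}$ to express the data-fit gradient through $\bm{C}_{\Lambda}$, impose $\bm{0}\in\partial L_{\Lambda}(\hat{\bm{\Gamma}}_{\Lambda})$ with the group-wise subdifferential written via the $\star$-operation, and invert $\bm{D}_{\Lambda}^{\ast}\bm{D}_{\Lambda}$ using full column rank. You are in fact slightly more explicit than the paper on two points it leaves implicit, namely the Moreau--Rockafellar decomposition of the regularizer's subdifferential and the strict-convexity argument for uniqueness, but the underlying argument is the same.
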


\begin{proof}
If the support of $\bm{\Gamma}$ is in $\Lambda$ then $\bm{D}\bm{\Gamma}=\bm{D}_{\Lambda}\bm{\Gamma}_{\Lambda}$ and as $\bm{X}_{p,\Lambda}$ is the projection of $\bm{X}$ to the subspace spanned by the atoms of $\bm{D}_{\Lambda}$, we have
\begin{align*}
&\frac{1}{2}\left\Vert \bm{X}-\bm{D}\bar{\bm{\Gamma}}
\right\Vert _{2}^{2}+\left< \bm{\gamma},  \bm{l}\left( \bm{\Gamma}\right) \right> =\\
& \frac{1}{2}\left\Vert \bm{X}-\bm{X}_{p,\Lambda}
\right\Vert _{2}^{2}+\frac{1}{2}\left\Vert \bm{X}_{p,\Lambda}-\bm{D}_{\Lambda}\bm{\Gamma}_{\Lambda}
\right\Vert _{2}^{2}+\left< \bm{\gamma},  \bm{l}\left( \bm{\Gamma}_{\Lambda}\right) \right>.
\end{align*}
Here the first part is independent of $\Gamma$ and the subgradient (derivative) of this function contains the $0$ vector at the minimizer $\widetilde{\bm{\Gamma}}_{\Lambda}$. Thus
\[
-\bm{D}_{\Lambda}^{T}\left(\bm{X}_{p,\Lambda}-\bm{D}_{\Lambda}\bm{\Gamma}_{\Lambda} \right)+\left(\bm{g}\star\bm{\gamma}\right)=0
\]
for some $\bm{g}\in \partial\bm{l}_{\Lambda}\left(\hat{\bm{\Gamma}}_{\Lambda}\right)$. Using that $\bm{X}_{p,\Lambda}=\bm{D}_{\Lambda}\bm{C}_{\Lambda}$ with $\bm{C}_{\Lambda}=\bm{D}_{\Lambda}^{\dagger}\bm{X}_{p,\Lambda}$ and arranging the above equality we conclude the proof
\end{proof}

We recall the subgradient of the different norms. For the $\ell_{1}$ norm on
$\mathbb{R}^{n}$%

\[
\partial\left\Vert \bm{x}\right\Vert _{1}=\left\{
\underset{x_{i}\neq\underline{0}}{\sum}\sgn\left(  x_{i}\right)  \cdot\mathbf{e}_{i} + \underset{x_{j}\neq\underline{0}}{\sum} v_{j} \cdot \mathbf{e}_{j}\,\middle\vert\, v_{j}\in \left[ -1,1\right]
 \right\}.
\]

For the $\ell_{2}$ norm on on $\mathbb{R}^{n}$%

\[
\partial\left\Vert \bm{x}\right\Vert _{2}=
\begin{cases}
\frac{\bm{x}}{\left\Vert \bm{x}\right\Vert _{2}} & \text{if }\bm{x}\neq\underline{0}\\
\bm{v}\text{ for any }\left\Vert \bm{v}\right\Vert _{2}\leq1 & \text{if }\bm{x}=\underline{0}
\end{cases}
.
\]

For the $\ell_{1,2}$ group norm on $\mathbb{R}^{n}$ with groups $\mathcal{G}%
_{j},~j\in\left\{  1,\dots,k\right\}  $

\[
\partial\left\Vert \bm{x}\right\Vert _{1,2}=
\left\{
\underset{\bm{x}_{\mathcal{G}_{j}}\neq\underline{0}}{\sum}\frac{\bm{x}_{\mathcal{G}_{j}%
}}{\left\Vert \bm{x}_{\mathcal{G}_{j}}\right\Vert _{2}}+\underset{\bm{x}_{\mathcal{G}%
_{j}}=\underline{0}}{\sum}\bm{v}_{\mathcal{G}_{j}} 
\right\}.
\]
where $\bm{v}_{\mathcal{G}_{j}}$ are arbitrary vectors with $\supp{}{\left( \bm{v}_{\mathcal{G}_{j}} \right) } \subseteq\mathcal{G}_{j}\text{ and
}\left\Vert \bm{v}_{\mathcal{G}_{j}}\right\Vert _{2}\leq1$

For the $\ell_{\lambda,1,2}$ norm on $\mathbb{R}^{n}$ we have $\partial\left\Vert
\bm{x}\right\Vert _{\lambda,1,2}$ $=\lambda\partial\left\Vert \bm{x}\right\Vert
_{1}+\left(  1-\lambda\right)  \partial\left\Vert \bm{x}\right\Vert _{2}$.

\begin{remark}
\label{R1}For any $\bm{x}\in\mathbb{R}^{n}$ and for any of the norms $\ell_{1}%
,~\ell_{2},~\ell_{1,2},~\ell_{\beta,1,2}$ if $\bm{v}\in\partial\left\Vert \bm{x}\right\Vert
_{\ast}$ is a vector in the subgradient of the norm at $\bm{x}$ then we have
\[
\left\Vert \bm{v}\right\Vert _{\infty}\leq1.
\]
Moreover if $\Lambda$ is a subdomain then for any of the above norms $\ell$ the
restricted norm $\ell_{\Lambda}$ has the same property, i.e. if $\bm{v}\in
\partial\left\Vert \bm{x}\right\Vert _{\ast}$ where $\bm{v},\bm{x}\in\mathbb{R}^{\left\vert
\Lambda\right\vert },$ then $\left\Vert \bm{v}\right\Vert _{\infty}\leq1$.
\end{remark}

Moreover if $\bm{g}\in \partial\bm{l}_{\Lambda}\left(\hat{\bm{\Gamma}}_{\Lambda}\right)$ as before (where $\Lambda$ can be the whole index set), then as the supports of every $\bm{g}_i\in\partial\left(\bm{l}_{\alpha_{i}}\right)_{\Lambda} \left(\bm{\Gamma_{\mathcal{G}_{i}\cap\Lambda}}\right)$ is disjoint (where  $\bm{g}=\left(\bm{g}_1,\dots,\bm{g}_k \right)$). Thus
\begin{align*}
    \left\Vert\left(\bm{g}\star\bm{\gamma}\right)\right\Vert_{\infty}= & \left\Vert\sum_i \gamma_i \left(\bm{g}_i\right)_{\Lambda}\right\Vert_{\infty}\\
    = & \gamma_{\max} \max_{i}\left\Vert \bm{g}_i \right\Vert_{\infty}\leq \gamma_{\max}
\end{align*}

With this remark we can prove the analogue of \textit{Corollary 5 }in \cite{tropp2006just} for \eqref{GBP}.
Recall from convex analysis that if $\frac{1}{p}+\frac{1}{q}=1$ and $\frac
{1}{p^{\prime}}+\frac{1}{q^{\prime}}=1$ then%
\begin{equation}
\left\Vert \bm{A}^{\ast}\right\Vert _{p,p^{\prime}}=\left\Vert \bm{A}\right\Vert
_{q,q^{\prime}} \label{pq}%
\end{equation}

\begin{corollary}
\label{C1}Let $\Lambda$ be a subdomain and $\hat{\bm{\Gamma}}_{\Lambda}$ be the
compression of the minimizer of Eq.~\eqref{GBP} over the subdomain $\Lambda$ (i.e. we solve Eq.~\eqref{GBP} over $\Lambda$). Then%
\begin{align*}
\left\Vert \bm{C}_{\Lambda}-\hat{\bm{\Gamma}}_{\Lambda}\right\Vert _{\infty} &
\leq\gamma_{\max}\left\Vert \left(  \bm{D}_{\Lambda}^{\ast}\bm{D}_{\Lambda}\right)
^{-1}\right\Vert _{\infty,\infty}\\
\left\Vert \bm{D}_{\Lambda}\left(  \bm{C}_{\Lambda}-\hat{\bm{\Gamma}}_{\Lambda}\right)
\right\Vert _{2} &  \leq\gamma_{\max}\left\Vert \bm{D}_{\Lambda}^{\dagger}\right\Vert
_{2,1}%
\end{align*}

\end{corollary}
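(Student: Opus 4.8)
The plan is to read the claimed inequalities straight off the Fundamental Lemmata and then estimate the right-hand side by elementary operator-norm bounds. First I would apply Lemma~\ref{Lemmata}: since $\hat{\bm{\Gamma}}_{\Lambda}$ is the compression of the minimizer of Eq.~\eqref{GBP} over the subdomain $\Lambda$ and $\bm{D}_{\Lambda}$ has full column rank, the lemma yields
\[
\bm{C}_{\Lambda}-\hat{\bm{\Gamma}}_{\Lambda}=\left(\bm{D}_{\Lambda}^{\ast}\bm{D}_{\Lambda}\right)^{-1}\left(\bm{g}\star\bm{\gamma}\right)
\]
for some $\bm{g}\in\partial\bm{l}_{\Lambda}\left(\hat{\bm{\Gamma}}_{\Lambda}\right)$. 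The only information about the subgradient I need is the estimate $\left\Vert\left(\bm{g}\star\bm{\gamma}\right)\right\Vert_{\infty}\leq\gamma_{\max}$, which was obtained just before the statement of the corollary: each $\bm{g}_{i}$ lies in the subgradient of one of the norms $\ell_{1},\ell_{2},\ell_{1,2},\ell_{\beta,1,2}$, hence has $\ell_{\infty}$-norm at most $1$ by Remark~\ref{R1}, and since the groups $\mathcal{G}_{i}$ have pairwise disjoint supports the vector $\bm{g}\star\bm{\gamma}=\sum_{i}\gamma_{i}\left(\bm{g}_{i}\right)_{\Lambda}$ has no overlapping coordinates, so its $\ell_{\infty}$-norm equals the maximum of the $\gamma_{i}\left\Vert\left(\bm{g}_{i}\right)_{\Lambda}\right\Vert_{\infty}\leq\gamma_{\max}$.

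For the first inequality I would just apply submultiplicativity of the induced $(\infty,\infty)$ operator norm to the identity above,
\[
\left\Vert\bm{C}_{\Lambda}-\hat{\bm{\Gamma}}_{\Lambda}\right\Vert_{\infty}\leq\left\Vert\left(\bm{D}_{\Lambda}^{\ast}\bm{D}_{\Lambda}\right)^{-1}\right\Vert_{\infty,\infty}\left\Vert\left(\bm{g}\star\bm{\gamma}\right)\right\Vert_{\infty}\leq\gamma_{\max}\left\Vert\left(\bm{D}_{\Lambda}^{\ast}\bm{D}_{\Lambda}\right)^{-1}\right\Vert_{\infty,\infty}.
\]
For the second, I multiply the identity on the left by $\bm{D}_{\Lambda}$ and note that, since the Gram matrix $\bm{D}_{\Lambda}^{\ast}\bm{D}_{\Lambda}$ is Hermitian, $\bm{D}_{\Lambda}\left(\bm{D}_{\Lambda}^{\ast}\bm{D}_{\Lambda}\right)^{-1}=\left(\left(\bm{D}_{\Lambda}^{\ast}\bm{D}_{\Lambda}\right)^{-1}\bm{D}_{\Lambda}^{\ast}\right)^{\ast}=\left(\bm{D}_{\Lambda}^{\dagger}\right)^{\ast}$, so that
\[
\bm{D}_{\Lambda}\left(\bm{C}_{\Lambda}-\hat{\bm{\Gamma}}_{\Lambda}\right)=\left(\bm{D}_{\Lambda}^{\dagger}\right)^{\ast}\left(\bm{g}\star\bm{\gamma}\right),
\]
and hence $\left\Vert\bm{D}_{\Lambda}\left(\bm{C}_{\Lambda}-\hat{\bm{\Gamma}}_{\Lambda}\right)\right\Vert_{2}\leq\left\Vert\left(\bm{D}_{\Lambda}^{\dagger}\right)^{\ast}\right\Vert_{\infty,2}\left\Vert\left(\bm{g}\star\bm{\gamma}\right)\right\Vert_{\infty}\leq\gamma_{\max}\left\Vert\left(\bm{D}_{\Lambda}^{\dagger}\right)^{\ast}\right\Vert_{\infty,2}$. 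Finally I invoke the duality identity~\eqref{pq} to rewrite $\left\Vert\left(\bm{D}_{\Lambda}^{\dagger}\right)^{\ast}\right\Vert_{\infty,2}=\left\Vert\bm{D}_{\Lambda}^{\dagger}\right\Vert_{2,1}$, which gives the stated bound.

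There is no genuine obstacle here: the corollary is essentially bookkeeping on top of Lemma~\ref{Lemmata}. The two points that need a little attention are (i) justifying $\left\Vert\bm{g}\star\bm{\gamma}\right\Vert_{\infty}\leq\gamma_{\max}$ rather than a sum over the $k$ groups --- this is exactly where the disjointness of the group supports $\mathcal{G}_{i}$ enters --- and (ii) applying the adjoint identity~\eqref{pq} with the correct conjugate exponents so that the second bound comes out in terms of $\left\Vert\bm{D}_{\Lambda}^{\dagger}\right\Vert_{2,1}$ with the index placement asserted in the statement.
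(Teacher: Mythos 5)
Your proof is correct and follows essentially the same route as the paper's: apply the Fundamental Lemmata to write $\bm{C}_{\Lambda}-\hat{\bm{\Gamma}}_{\Lambda}=\left(\bm{D}_{\Lambda}^{\ast}\bm{D}_{\Lambda}\right)^{-1}\left(\bm{g}\star\bm{\gamma}\right)$, bound $\left\Vert\bm{g}\star\bm{\gamma}\right\Vert_{\infty}\leq\gamma_{\max}$ via Remark~\ref{R1} and the disjointness of the groups, and finish with the $(\infty,\infty)$ operator-norm bound and, for the second inequality, the identity $\bm{D}_{\Lambda}\left(\bm{D}_{\Lambda}^{\ast}\bm{D}_{\Lambda}\right)^{-1}=\left(\bm{D}_{\Lambda}^{\dagger}\right)^{\ast}$ together with the duality~\eqref{pq} relating the $(\infty,2)$ and $(2,1)$ norms. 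No gaps; your explicit justification of the adjoint identity and of the $\gamma_{\max}$ bound only makes explicit what the paper leaves implicit.
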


\begin{proof}
Similarly to \cite{tropp2006just}, by
our modified lemmata \ref{Lemmata} as $\bm{C}_{\Lambda}-\hat{\bm{\Gamma}}_{\Lambda}%
=\left(  \bm{D}_{\Lambda}^{\ast}\bm{D}_{\Lambda}\right)  ^{-1}\left(\bm{g}\star\bm{\gamma}\right)$, for some
$\bm{g}\in\partial \bm{l}_{\Lambda}\left(  \hat{\bm{\Gamma}}_{\Lambda}\right)  $ we can
take the $\ell_{\infty}$ norm to get%
\begin{align*}
& \left\Vert \bm{C}_{\Lambda}-\hat{\bm{\Gamma}}_{\Lambda}\right\Vert _{\infty}
= \left\Vert \left(  \bm{D}_{\Lambda}^{\ast}\bm{D}_{\Lambda
}\right)  ^{-1}\left(\bm{g}\star\bm{\gamma}\right)\right\Vert _{\infty}\leq \\
& \left\Vert \left(  \bm{D}_{\Lambda}^{\ast}\bm{D}_{\Lambda}\right)  ^{-1}\right\Vert
_{\infty,\infty}\left\Vert \left(\bm{g}\star\bm{\gamma}\right)\right\Vert _{\infty} \leq \gamma_{\max} \left\Vert \left(  \bm{D}_{\Lambda}^{\ast
}\bm{D}_{\Lambda}\right)  ^{-1}\right\Vert _{\infty,\infty}%
\end{align*}
where we used our notion after remark \ref{R1}. As
\[
\bm{D}_{\Lambda}\left(  \bm{C}_{\Lambda
}-\hat{\bm{\Gamma}}_{\Lambda}\right)  = \bm{D}_{\Lambda}\left(  \bm{D}_{\Lambda
}^{\ast}\bm{D}_{\Lambda}\right)  ^{-1}\left(\bm{g}\star\bm{\gamma}\right),
\]
for some $\bm{g}\in\partial \bm{l}_{\Lambda}\left(
\hat{\bm{\Gamma}}_{\Lambda}\right)  $ and $\bm{D}_{\Lambda}\left(
\bm{D}_{\Lambda}^{\ast}\bm{D}_{\Lambda}\right)  ^{-1}\left(\bm{g}\star\bm{\gamma}\right)=\left(  \bm{D}_{\Lambda
}^{\dagger}\right)  ^{\ast}\left(\bm{g}\star\bm{\gamma}\right)$ we have%
\begin{align*}
& \left\Vert \bm{D}_{\Lambda}\left(  \bm{C}_{\Lambda}-\hat{\bm{\Gamma}}_{\Lambda}\right)
\right\Vert _{2}  = \left\Vert \left(
\bm{D}_{\Lambda}^{\dagger}\right)  ^{\ast}\left(\bm{g}\star\bm{\gamma}\right)\right\Vert _{2}\leq \\
& \left\Vert \left(  \bm{D}_{\Lambda}^{\dagger}\right)  ^{\ast
}\right\Vert _{\infty,2}\left\Vert \left(\bm{g}\star\bm{\gamma}\right)\right\Vert _{\infty}  \leq\gamma_{\max} \left\Vert \left(  \bm{D}_{\Lambda}^{\dagger
}\right)  ^{\ast}\right\Vert _{2,1}%
\end{align*}
Where we used the notion after remark \ref{R1} and identity Eq.~\eqref{pq} for $\left(
\infty,2\right)  $ and $\left(  2,1\right)  $.
\end{proof}

Following the lines of \cite{tropp2006just}. Let $\Lambda$ be a subdomain and if $\omega
\in\Lambda$ then let $\bm{d}_{\omega}$ denote the corresponding atom in the
dictionary $\bm{D}$. Assume that $\widetilde{\bm{\Gamma}}_{\Lambda}$ is a minimizer of
Eq.~\eqref{GBP} over all the vectors with support $\Lambda$ and $\bm{g}$ is the
vector of subgradient vectors defined in lemmata \ref{Lemmata}, moreover $\hat{\bm{\Gamma}}
_{\Lambda}$ is the compression of $\widetilde{\bm{\Gamma}}_{\Lambda}$. If $\bm{\gamma}$ is the weight vector of Eq.~\eqref{GBP}, then we
define the $\bm{\gamma},\lambda$-\textit{weighted Strong Exact Recovery Coefficient}
\[
SERC_{\lambda}\left(  \Lambda,\hat{\bm{\Gamma}}_{\Lambda},\bm{\gamma}\right)  =\lambda\gamma_{\min}
-\max_{\omega\notin\Lambda}\left\langle \bm{D}_{\Lambda}^{\dagger}\bm{d}_{\omega
},\left(\bm{g}\star\bm{\gamma}\right)\right\rangle
\]
and the more friendly \textit{Exact Recovery Coefficient}%
\[
ERC_{\lambda}\left(  \Lambda\right)  =\lambda-\max_{\omega\notin\Lambda
}\left\Vert \bm{D}_{\Lambda}^{\dagger}\bm{d}_{\omega}\right\Vert _{1}%
\]
where in case of $\lambda=1$ we will write simply $SERC\left(  \Lambda
,\hat{\bm{\Gamma}}_{\Lambda},\bm{\gamma}\right)  \overset{def}{=}SERC_{1}\left(
\Lambda,\hat{\bm{\Gamma},\bm{\gamma}}_{\Lambda}\right)  $ and $ERC\left(  \Lambda\right)
\overset{def}{=}ERC_{1}\left(  \Lambda\right)  .$

As $\left\Vert \left(\bm{g}\star\bm{\gamma}\right)\right\Vert_{\infty}\leq \gamma_{\max}$, we have in the
$SERC_{\lambda}\left(  \Lambda,\hat{\bm{\Gamma}}_{\Lambda},\bm{\gamma}\right)  $ bound
that $\left\vert \left\langle \bm{D}_{\Lambda}^{\dagger}\bm{d}_{\omega},\left(\bm{g}\star\bm{\gamma}\right)\right\rangle
\right\vert \leq\left\Vert \bm{D}_{\Lambda}^{\dagger}\bm{d}_{\omega}\right\Vert
_{1}\left\Vert \left(\bm{g}\star\bm{\gamma}\right)\right\Vert _{\infty}\leq\gamma_{\max}\left\Vert \bm{D}_{\Lambda}^{\dagger
}\bm{d}_{\omega}\right\Vert _{1}$ which yields%
\begin{equation}
\gamma_{\max} ERC_{\lambda \frac{\gamma_{\min}}{\gamma_{\max}} } \left(  \Lambda\right)  \leq
SERC_{\lambda}\left(  \Lambda,\hat{\bm{\Gamma}}_{\Lambda},\bm{\gamma}\right)  .\label{ESERCERC}%
\end{equation}

For the next lemma we need a definition.

\begin{definition}\label{Dfullg}
\bigskip Let $\mathcal{G}%
_{i},~i\in\left\{  1,\dots,k\right\}  $ be the group partition in the problem \eqref{GBP} than a subdomain $\Lambda$ is group-full with respect to the $\ell_{2}$ part of $\bm{l}$ (or shortly group-full), if%
\[
\Lambda\cap\mathcal{G}_{i}=\mathcal{G}_{i}\text{ or }\emptyset\text{ if }\bm{l}_{\alpha_{i}}=\ell_{2}
\]
Thus every group is fully contained in $\Lambda$ or in its complement for which we choose the $\ell_{2}$ norm.
\end{definition}

Now we can prove the generalization of Lemma 6 in \cite{tropp2006just}. 

\begin{lemma}
[Correlation Condition]\label{LCorr} Assume that for the subdomain $\Lambda$
the matrix $\bm{D}_{\Lambda}$ has full column rank. Let $\hat{\bm{\Gamma}}_{\Lambda
}$ be the compression of a minimizer $\widetilde{\bm{\Gamma}}_{\Lambda}$ of
Eq.~\eqref{GBP} over the subdomain $\Lambda$ and assume that
$\Lambda$ is group-full. If among the norms of $\bm{l}$ we used elastic norms, let $\left\{\beta_{1},\dots,\beta_{r}\right\}$ be the set of the parameters used in the elastic norms and $\lambda\overset{def}{=}\min\left\{ 1,\beta_{1},\dots,\beta_{r}\right\}$. Now if
\[
\left\Vert \bm{D}^{\ast}\left(  \bm{X}-\bm{X}_{p,\Lambda}\right)  \right\Vert _{\infty}%
\leq SERC_{\lambda}\left(  \Lambda,\hat{\bm{\Gamma}}_{\Lambda},\bm{\gamma}\right)
\]
or
\[
\left\Vert \bm{D}^{\ast}\left(  \bm{X}-\bm{X}_{p,\Lambda}\right)  \right\Vert _{\infty}%
\leq\gamma_{\max} ERC_{\lambda \frac{\gamma_{\min}}{\gamma_{\max}} }\left(  \Lambda\right)
\]
then $\widetilde{\bm{\Gamma}}_{\Lambda}$ is the unique minimizer
for the global Eq.~\eqref{GBP} problem.
\end{lemma}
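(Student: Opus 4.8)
The plan is to mimic the proof of Lemma 6 in Tropp's ``Just relax'' paper, but with the subgradient structure and weights adapted to the \eqref{GBP} setting. First I would recall that, by the Fundamental Lemma~\ref{Lemmata}, the compressed minimizer $\hat{\bm{\Gamma}}_{\Lambda}$ satisfies
$\bm{C}_{\Lambda}-\hat{\bm{\Gamma}}_{\Lambda}=(\bm{D}_{\Lambda}^{\ast}\bm{D}_{\Lambda})^{-1}(\bm{g}\star\bm{\gamma})$ for some $\bm{g}\in\partial\bm{l}_{\Lambda}(\hat{\bm{\Gamma}}_{\Lambda})$, so $\widetilde{\bm{\Gamma}}_{\Lambda}$ is stationary for the restricted problem. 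To show it is the global minimizer of \eqref{GBP}, I would write down the (sub)gradient optimality condition for the full problem at $\widetilde{\bm{\Gamma}}_{\Lambda}$: we need to exhibit a subgradient $\bm{h}\in\partial\langle\bm{\gamma},\bm{l}(\widetilde{\bm{\Gamma}}_{\Lambda})\rangle$ (on all of $\mathbb{R}^{M}$) with $\bm{D}^{\ast}(\bm{X}-\bm{D}\widetilde{\bm{\Gamma}}_{\Lambda})=\bm{h}$. On coordinates inside $\Lambda$ this is forced by the restricted optimality condition (extending the already-found $\bm{g}$ by zero on $\mathcal{G}_i\cap\Lambda^{c}$, which is legitimate because $\Lambda$ is group-full, so no $\ell_2$-group is split and the subgradient of each split-off $\ell_1$ coordinate or untouched $\ell_2$-group can be completed freely). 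The real content is the coordinates $\omega\notin\Lambda$: there the extended subgradient entry must lie in the allowed set, which for every one of our four norms means the relevant component has absolute value (or group $\ell_2$ norm) at most the local weight. Since $\bm{D}\widetilde{\bm{\Gamma}}_{\Lambda}=\bm{X}_{p,\Lambda}$, for $\omega\notin\Lambda$ the residual correlation is $\langle\bm{d}_{\omega},\bm{X}-\bm{X}_{p,\Lambda}\rangle$, and I must show $|\langle\bm{d}_{\omega},\bm{X}-\bm{X}_{p,\Lambda}\rangle|$ is dominated by the weight, appropriately scaled by $\lambda$ for the elastic parts.

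The key algebraic step, exactly as in Tropp, is to split $\bm{X}=\bm{X}_{p,\Lambda}+(\bm{X}-\bm{X}_{p,\Lambda})$ and insert the identity $\bm{X}_{p,\Lambda}=\bm{D}_{\Lambda}\bm{C}_{\Lambda}$, so that for $\omega\notin\Lambda$
\[
\langle\bm{d}_{\omega},\bm{X}-\bm{X}_{p,\Lambda}\rangle
=\langle\bm{d}_{\omega},\bm{X}-\bm{D}_{\Lambda}\hat{\bm{\Gamma}}_{\Lambda}\rangle-\langle\bm{d}_{\omega},\bm{D}_{\Lambda}(\bm{C}_{\Lambda}-\hat{\bm{\Gamma}}_{\Lambda})\rangle,
\]
wait --- more cleanly, I would note $\bm{D}_{\Lambda}^{\dagger}\bm{d}_{\omega}$ is the coefficient vector expressing $\bm{P}_{\Lambda}\bm{d}_{\omega}$ in the atoms of $\bm{D}_{\Lambda}$, use $\bm{C}_{\Lambda}-\hat{\bm{\Gamma}}_{\Lambda}=(\bm{D}_{\Lambda}^{\ast}\bm{D}_{\Lambda})^{-1}(\bm{g}\star\bm{\gamma})$, and thereby write the required bound on the extended subgradient at $\omega$ as
\[
|\langle\bm{d}_{\omega},\bm{X}-\bm{X}_{p,\Lambda}\rangle|+|\langle\bm{D}_{\Lambda}^{\dagger}\bm{d}_{\omega},(\bm{g}\star\bm{\gamma})\rangle|\le\gamma_{\omega}\lambda_{\omega},
\]
where $\gamma_\omega$ is the weight and $\lambda_\omega\in\{1,\beta_i\}$ the elastic parameter attached to the group of $\omega$. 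Bounding the first term by the hypothesis $\|\bm{D}^{\ast}(\bm{X}-\bm{X}_{p,\Lambda})\|_{\infty}$ and recognizing the second term as the quantity subtracted in $SERC_{\lambda}$, the hypothesis
$\|\bm{D}^{\ast}(\bm{X}-\bm{X}_{p,\Lambda})\|_{\infty}\le SERC_{\lambda}(\Lambda,\hat{\bm{\Gamma}}_{\Lambda},\bm{\gamma})=\lambda\gamma_{\min}-\max_{\omega\notin\Lambda}\langle\bm{D}_{\Lambda}^{\dagger}\bm{d}_{\omega},(\bm{g}\star\bm{\gamma})\rangle$
is precisely what closes the inequality, since $\lambda\gamma_{\min}\le\lambda_\omega\gamma_\omega$. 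The second, cruder hypothesis follows from the first via inequality~\eqref{ESERCERC}, $\gamma_{\max}ERC_{\lambda\gamma_{\min}/\gamma_{\max}}(\Lambda)\le SERC_{\lambda}(\Lambda,\hat{\bm{\Gamma}}_{\Lambda},\bm{\gamma})$, so I would just invoke that.

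Finally, to upgrade ``is a global minimizer'' to ``is the unique global minimizer,'' I would argue that the constructed dual certificate $\bm{h}$ is strictly feasible off $\Lambda$ --- i.e. the inequality above is strict when the hypothesis is strict, or, following Fuchs/Tropp, observe that any other minimizer must share the same image $\bm{D}\widetilde{\bm{\Gamma}}$ (by strict convexity of the quadratic data term in that image) and the same support containment, whence uniqueness on $\Lambda$ comes from the full column rank of $\bm{D}_{\Lambda}$ together with the uniqueness clause already proved in Lemma~\ref{Lemmata}. The main obstacle I anticipate is the bookkeeping around the elastic and $\ell_2$-group subgradients off $\Lambda$: I must check that the scalar $\lambda=\min\{1,\beta_1,\dots,\beta_r\}$ really is a valid uniform lower substitute for every $\lambda_\omega$ appearing, and that the group-full hypothesis guarantees that extending $\bm{g}$ by zero to the coordinates of partially-touched groups stays inside $\partial\langle\bm{\gamma},\bm{l}(\cdot)\rangle$ --- this is where the $\ell_2$-group subgradient's ``any vector of norm $\le1$ at $\underline 0$'' clause and Remark~\ref{R1} do the work, but it needs to be spelled out carefully for the mixed-norm regularizer.
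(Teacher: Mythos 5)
Your route is the Fuchs-style dual-certificate argument, whereas the paper follows Tropp's perturbation argument: it shows directly that $L\left(\widetilde{\bm{\Gamma}}_{\Lambda}+\delta\cdot\mathbf{e}_{\omega}\right)-L\left(\widetilde{\bm{\Gamma}}_{\Lambda}\right)>0$ for every single coordinate $\omega\notin\Lambda$ and every $\delta\neq 0$ (strictness coming for free from discarding the $\tfrac{1}{2}\delta^{2}$ term), and then invokes convexity plus the uniqueness clause of Lemma~\ref{Lemmata} to conclude unique global optimality; the $ERC$ variant is obtained from \eqref{ESERCERC} exactly as you say. The algebraic core overlaps: your corrected decomposition $\bm{X}-\bm{D}\widetilde{\bm{\Gamma}}_{\Lambda}=\left(\bm{X}-\bm{X}_{p,\Lambda}\right)+\bm{D}_{\Lambda}\left(\bm{C}_{\Lambda}-\hat{\bm{\Gamma}}_{\Lambda}\right)$ together with Lemma~\ref{Lemmata} turns the off-support correlation into $\left\langle \bm{d}_{\omega},\bm{X}-\bm{X}_{p,\Lambda}\right\rangle+\left\langle \bm{D}_{\Lambda}^{\dagger}\bm{d}_{\omega},\left(\bm{g}\star\bm{\gamma}\right)\right\rangle$, which is exactly the quantity the paper bounds (your first identity $\bm{D}\widetilde{\bm{\Gamma}}_{\Lambda}=\bm{X}_{p,\Lambda}$ was indeed false, since the regularized restricted minimizer is not the least-squares projection, but you caught it).

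The genuine gap is the step you flag and postpone: dual feasibility off $\Lambda$ for the pure $\ell_{2}$ groups. By group-fullness such a group $\mathcal{G}_{i}$ lies entirely outside $\Lambda$, so $\widetilde{\bm{\Gamma}}_{\mathcal{G}_{i}}=\bm{0}$ and the admissible subgradients on that block are the vectors of $\ell_{2}$ norm at most $\gamma_{i}$; hence your certificate must satisfy the \emph{block} bound $\left\Vert \left(\bm{D}^{\ast}\left(\bm{X}-\bm{D}\widetilde{\bm{\Gamma}}_{\Lambda}\right)\right)_{\mathcal{G}_{i}}\right\Vert_{2}\leq\gamma_{i}$, not merely the coordinatewise bound $\left\vert h_{\omega}\right\vert\leq\lambda\gamma_{\min}\leq\lambda_{\omega}\gamma_{\omega}$ that your closing inequality establishes. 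The hypothesis of the lemma is an $\ell_{\infty}$-type bound, so coordinatewise control is all it hands you, and upgrading it to the block norm costs a factor $\sqrt{\left\vert\mathcal{G}_{i}\right\vert}$ that is not available. This is precisely where the two routes diverge: in the paper's argument an inactive $\ell_{2}$ group only ever contributes $\gamma_{i}\left\Vert\delta\cdot\mathbf{e}_{\omega}\right\Vert_{2}=\gamma_{i}\left\vert\delta\right\vert\geq\lambda\gamma_{\min}\left\vert\delta\right\vert$ to the regularizer increase along a single-coordinate perturbation, so the coordinatewise hypothesis suffices there, while the full subgradient condition you need is genuinely stronger on those blocks. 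Your uniqueness upgrade is also thinner than the paper's: with the non-strict hypothesis, non-strict dual feasibility alone does not force another minimizer to have support in $\Lambda$ (that is usually where strict feasibility off $\Lambda$ enters), whereas the paper gets a strict increase, hence uniqueness, by keeping the quadratic term. As written, then, your proposal proves the lemma only when the regularizer uses $\ell_{1}$ and elastic norms (where the $\ell_{1}$ component supplies the coordinatewise slack $\beta_{j}\gamma_{i}\geq\lambda\gamma_{\min}$), and it leaves the pure $\ell_{2}$-group case, together with the uniqueness claim, unproved.
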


\begin{proof}
The proof is a modification of the proof of Lemma 6 in \cite{tropp2006just}.

From lemma \ref{Lemmata} we know that $\widetilde{\bm{\Gamma}}_{\Lambda}$ is the
unique minimizer of Eq.~\eqref{GBP} on the support $\Lambda$. \ Thus any variation of
$\widetilde{\bm{\Gamma}}_{\Lambda}$ on the support $\Lambda$ increases Eq.~\eqref{GBP}.
We will develop a condition which guarantees that the value of the objective
function Eq.~\eqref{GBP} increases when we change any other component of
$\widetilde{\bm{\Gamma}}_{\Lambda}$ which is not in $\Lambda$. Since Eq.~\eqref{GBP} is
convex, these two facts will imply that $\widetilde{\bm{\Gamma}}_{\Lambda}$ is the
global minimizer.

Let $\mathbf{e}_{\omega}\in\mathbb{R}^{M}$ be a basis vector corresponding to
the single element support $\omega\notin\Lambda$. We show that if $\delta\neq0$ then
\[
L\left(  \widetilde{\bm{\Gamma}}_{\Lambda}+\delta\cdot\mathbf{e}_{\omega}\right)  -L\left(
\widetilde{\bm{\Gamma}}_{\Lambda}\right)  >0.
\]
From lemma \ref{Lemmata} we know that $\widetilde{\bm{\Gamma}}_{\Lambda}$ is
the unique minimizer Eq.~\eqref{GBP} on the support $\Lambda$. The above two facts
yield that $\widetilde{\bm{\Gamma}}_{\Lambda}$ is a is a unique local optimum. The
convexity of Eq.~\eqref{GBP} gives that it is also a global unique optimum. 

By definitions%
\begin{align*}
&  L\left(  \widetilde{\bm{\Gamma}}_{\Lambda}+\delta\cdot\mathbf{e}_{\omega
}\right)  -L\left(  \widetilde{\bm{\Gamma}}_{\Lambda}\right)  =\\
&  \frac{1}{2}\left\{  \left\Vert \bm{X}-\bm{D}\left(  \widetilde{\bm{\Gamma}}_{\Lambda
}+\delta\cdot\mathbf{e}_{\omega}\right)  \right\Vert _{2}^{2}-\left\Vert
\bm{X}-\bm{D}\widetilde{\bm{\Gamma}}_{\Lambda}\right\Vert _{2}^{2}\right\}  + \\
& \left\langle \gamma,\bm{l}\left(  \widetilde{\bm{\Gamma}}_{\Lambda}+\delta\cdot\mathbf{e}_{\omega}\right)
-\bm{l}\left(  \widetilde{\bm{\Gamma}}_{\Lambda}\right)  \right\rangle  =\\
&  \frac{1}{2}\left\{  \left\Vert \bm{X}-\bm{D}\widetilde{\bm{\Gamma}}_{\Lambda}-\delta\cdot
\bm{d}_{\omega}\right\Vert _{2}^{2}-\left\Vert \bm{X}-\bm{D}\widetilde{\bm{\Gamma}}_{\Lambda
}\right\Vert _{2}^{2}\right\}  +\\
& \left\langle \gamma,\bm{l}\left(  \widetilde{\bm{\Gamma}}_{\Lambda}+\delta\cdot\mathbf{e}_{\omega}\right)
-\bm{l}\left(  \widetilde{\bm{\Gamma}}_{\Lambda}\right)  \right\rangle  =\\
&  \frac{1}{2}\left\{  -2\left\langle \bm{X}-\bm{D}\widetilde{\bm{\Gamma}}_{\Lambda}%
,\delta\cdot \bm{d}_{\omega}\right\rangle +\delta^{2}\left\Vert \bm{d}_{\omega
}\right\Vert _{2}^{2}\right\}  +\\
& \left\langle \gamma,\bm{l}\left(  \widetilde{\bm{\Gamma}}_{\Lambda}+\delta\cdot\mathbf{e}_{\omega}\right)
-\bm{l}\left(  \widetilde{\bm{\Gamma}}_{\Lambda}\right)  \right\rangle  =\\
&  ~-\left\langle \bm{X}-\bm{X}_{p,\Lambda}+\bm{X}_{p,\Lambda}-\bm{D}\widetilde{\bm{\Gamma}}_{\Lambda
},\delta\cdot \bm{d}_{\omega}\right\rangle +\frac{1}{2}\delta^{2}\left\Vert
\bm{d}_{\omega}\right\Vert _{2}^{2}+\\
& \left\langle \gamma,\bm{l}\left(  \widetilde{\bm{\Gamma}}_{\Lambda}+\delta\cdot\mathbf{e}_{\omega}\right)
-\bm{l}\left(  \widetilde{\bm{\Gamma}}_{\Lambda}\right)  \right\rangle  =
\end{align*}
where $\bm{d}_{\omega}=\bm{D}\mathbf{e}_{\omega}$ is the atom of $\bm{D}$ corresponding to
the index $\omega$. Now using that $\bm{X}_{p,\Lambda}=\bm{D}_{\Lambda}\bm{C}_{\Lambda}$ and
$\bm{D}\widetilde{\bm{\Gamma}}_{\Lambda}=\bm{D}_{\Lambda}\hat{\bm{\Gamma}}_{\Lambda}$we have
that%
\begin{align*}
&  L\left(  \widetilde{\bm{\Gamma}}_{\Lambda}+\delta\cdot\mathbf{e}_{\omega
}\right)  -L\left(  \widetilde{\bm{\Gamma}}_{\Lambda}\right)  = \frac{1}{2}\delta^{2}\left\Vert \bm{d}_{\omega}\right\Vert _{2}^{2}
- \\
& \delta\left\{  \left\langle \bm{X}-\bm{X}_{p,\Lambda},\bm{d}_{\omega}\right\rangle
+ \left\langle \bm{D}_{\Lambda}\bm{C}_{\Lambda}-\bm{D}\widetilde{\bm{\Gamma}}_{\Lambda},\bm{d}_{\omega
}\right\rangle \right\} \\
& + \left\langle \gamma,\bm{l}\left(  \widetilde{\bm{\Gamma}}_{\Lambda}+\delta\cdot\mathbf{e}_{\omega}\right)
-\bm{l}\left(  \widetilde{\bm{\Gamma}}_{\Lambda}\right)  \right\rangle  =\\
&  \frac{1}{2}\delta^{2}-\delta\left\{  \left\langle \bm{X}-\bm{X}_{p,\Lambda}%
,\bm{d}_{\omega}\right\rangle +\left\langle \bm{D}_{\Lambda}\left(  \bm{C}_{\Lambda
}-\hat{\bm{\Gamma}}_{\Lambda}\right)  ,\bm{d}_{\omega}\right\rangle \right\}
+\\
& \left\langle \gamma,\bm{l}\left(  \widetilde{\bm{\Gamma}}_{\Lambda}+\delta\cdot\mathbf{e}_{\omega}\right)
-\bm{l}\left(  \widetilde{\bm{\Gamma}}_{\Lambda}\right)  \right\rangle 
\end{align*}
where we used that $\supp{}{\left(  \widetilde{\bm{\Gamma}}_{\Lambda}\right)}
\subseteq\Lambda$ and that $\bm{D}$ is $\ell_{2}$ normalized. Let's estimate the part
$\bm{l}\left(  \widetilde{\bm{\Gamma}}_{\Lambda}+\delta\cdot\mathbf{e}_{\omega}\right)
-\bm{l}\left(  \widetilde{\bm{\Gamma}}_{\Lambda}\right)  $.

If for a group $\mathcal{G}_{i}$ the norm $\bm{l}_{\alpha_{i}}$ is the $\ell_{1}$ norm,  then $\bm{l}_{1}\left(  \widetilde{\bm{\Gamma}}_{\Lambda\cap\mathcal{G}_i}+\delta\cdot\mathbf{e}_{\omega\cap\mathcal{G}_i}\right)
-\bm{l}_{1}\left(  \widetilde{\bm{\Gamma}}_{\Lambda\cap\mathcal{G}_i}\right)$ equals $0$ if $\omega\notin\mathcal{G}_{i}$ or $\vert\delta\vert$ if $\omega\in\mathcal{G}_{i}$ since $\omega\notin\Lambda$. As $\Lambda$ is group-full we get the same if $\bm{l}_{\alpha_{i}}$ is the $\ell_{2}$ norm since the case $\omega\notin\mathcal{G}_{i}$ is trivial and $\omega\in\mathcal{G}_{i}$ with $\omega\notin\Lambda$ yields $\Lambda\cap\mathcal{G}_{i}=\emptyset$. 

Now if $\bm{l}_{\alpha_{i}}$ is the $\ell_{\beta_j,1,2}$ norm then if $\omega\notin\mathcal{G}_i$ then $\bm{l}_{\beta_j,1,2}\left(  \widetilde{\bm{\Gamma}}_{\Lambda\cap\mathcal{G}_i}+\delta\cdot\mathbf{e}_{\omega\cap\mathcal{G}_i}\right)
-\bm{l}_{\beta_j,1,2}\left(  \widetilde{\bm{\Gamma}}_{\Lambda\cap\mathcal{G}_i}\right)=0$. But if $\omega\in\mathcal{G}_i$, then $\bm{l}_{\beta_j,1,2}\left(  \widetilde{\bm{\Gamma}}_{\Lambda\cap\mathcal{G}_i}+\delta\cdot\mathbf{e}_{\omega\cap\mathcal{G}_i}\right)
-\bm{l}_{\beta_j,1,2}\left(  \widetilde{\bm{\Gamma}}_{\Lambda\cap\mathcal{G}_i}\right)$ equal to $\beta_j\left\Vert \delta\cdot
\mathbf{e}_{\omega}\right\Vert _{1}+\left(  1-\beta_j\right)  \left(  \left\Vert
\widetilde{\bm{\Gamma}}_{\Lambda\cap\mathcal{G}_i}+\delta\cdot\mathbf{e}_{\omega}\right\Vert
_{2}-\left\Vert \widetilde{\bm{\Gamma}}_{\Lambda\cap\mathcal{G}_i}\right\Vert _{2}\right)
\geq\beta_j\left\Vert \delta\cdot\mathbf{e}_{\omega}\right\Vert _{1}$.

Thus we have altogether that $\bm{l}\left(  \widetilde{\bm{\Gamma}}_{\Lambda}+\delta\cdot\mathbf{e}_{\omega}\right)
-\bm{l}\left(  \widetilde{\bm{\Gamma}}_{\Lambda}\right)$ is a vector containing $0$s and a single positive element which is at least $\lambda\left\vert\delta\right\vert$, where $\lambda\overset{def}{=}\min\left\{ 1,\beta_{1},\dots,\beta_{r}\right\}$. Thus

\[
\left\langle \gamma,\bm{l}\left(  \widetilde{\bm{\Gamma}}_{\Lambda}+\delta\cdot\mathbf{e}_{\omega}\right)
-\bm{l}\left(  \widetilde{\bm{\Gamma}}_{\Lambda}\right)  \right\rangle  \geq \gamma_{\min}\lambda \left\vert \delta\right\vert,
\]

For the part $\left\langle \bm{D}_{\Lambda}\left(  \bm{C}_{\Lambda}-\hat{\bm{\Gamma}
}_{\Lambda}\right)  ,\bm{d}_{\omega}\right\rangle $ we use lemma \ref{Lemmata} to
replace $\bm{C}_{\Lambda}-\hat{\bm{\Gamma}}_{\Lambda}=\left(  \bm{D}_{\Lambda
}^{\ast}\bm{D}_{\Lambda}\right)  ^{-1} \left(\bm{g}\star\bm{\gamma}\right)$, for some $\bm{g}\in\partial \bm{l}_{\Lambda}\left(
\hat{\bm{\Gamma}}_{\Lambda}\right)$ thus
\begin{align*}
&  \left\langle \bm{D}_{\Lambda}\left(  \bm{C}_{\Lambda}-\hat{\bm{\Gamma}}_{\Lambda
}\right)  ,\bm{d}_{\omega}\right\rangle =\left\langle \bm{D}_{\Lambda}\left(
\bm{D}_{\Lambda}^{\ast}\bm{D}_{\Lambda}\right)  ^{-1} \left(\bm{g}\star\bm{\gamma}\right),\bm{d}_{\omega}\right\rangle =\\
&  \left\langle \left(  \bm{D}_{\Lambda}^{\dagger}\right)  ^{\ast}%
 \left(\bm{g}\star\bm{\gamma}\right),\bm{d}_{\omega}\right\rangle =\left\langle  \left(\bm{g}\star\bm{\gamma}\right),\bm{D}_{\Lambda}^{\dagger}%
\bm{d}_{\omega}\right\rangle
\end{align*}

The above yield that%
\begin{align*}
&  L\left(  \hat{\bm{\Gamma}}_{\Lambda}+\delta\cdot\mathbf{e}\right)  -L\left(
\hat{\bm{\Gamma}}_{\Lambda}\right)  \geq\gamma_{\min}\lambda\left\vert \delta
\right\vert +\frac{1}{2}\delta^{2}- \\
& \delta\left\{  \left\langle
\left(\bm{g}\star\bm{\gamma}\right),\bm{D}_{\Lambda}^{\dagger}\bm{d}_{\omega}\right\rangle +\left\langle \bm{X}-\bm{X}_{p,\Lambda
},\bm{d}_{\omega}\right\rangle \right\}  \geq\\
&  \gamma_{\min}\lambda\left\vert \delta\right\vert +\frac{1}{2}\delta^{2}-\left\vert
\delta\right\vert \left\{  \left\vert \left\langle \left(\bm{g}\star\bm{\gamma}\right),\bm{D}_{\Lambda
}^{\dagger}\bm{d}_{\omega}\right\rangle \right\vert +\left\vert \left\langle
\bm{X}-\bm{X}_{p,\Lambda},\bm{d}_{\omega}\right\rangle \right\vert \right\}.
\end{align*}
Dropping the second order
term $\frac{1}{2}\delta^{2}>0$ to further reduce the right hand side we get
\begin{align*}
& L\left(  \widetilde{\bm{\Gamma}}_{\Lambda}+\delta\cdot\mathbf{e}\right)  -L\left(
\widetilde{\bm{\Gamma}}_{\Lambda}\right)  >\\
& \left\vert \delta\right\vert \left\{
\gamma_{\min}\lambda-\left\vert \left\langle \left(\bm{g}\star\bm{\gamma}\right),\bm{D}_{\Lambda}^{\dagger}\bm{d}_{\omega
}\right\rangle \right\vert -\left\vert \left\langle \bm{X}-\bm{X}_{p,\Lambda},\bm{d}_{\omega
}\right\rangle \right\vert \right\}  .
\end{align*}
Thus it is enough to prove that the right hand side is non-negative which is
equivalent to
\begin{equation}
\left\vert \left\langle \bm{X}-\bm{X}_{p,\Lambda},\bm{d}_{\omega}\right\rangle \right\vert
\leq\left\{  \gamma_{\min}\lambda-\left\vert \left\langle \left(\bm{g}\star\bm{\gamma}\right),\bm{D}_{\Lambda}^{\dagger
}\bm{d}_{\omega}\right\rangle \right\vert \right\}  \label{E1}%
\end{equation}
for every $\omega\notin\Lambda$. Thus if we can prove%
\begin{align*}
& \max_{\omega\notin\Lambda}\left\vert \left\langle \bm{X}-\bm{X}_{p,\Lambda},\bm{d}_{\omega
}\right\rangle \right\vert \leq\\
&  \gamma_{\min}\lambda-\max_{\omega
\notin\Lambda}\left\vert \left\langle \left(\bm{g}\star\bm{\gamma}\right),\bm{D}_{\Lambda}^{\dagger}\bm{d}_{\omega
}\right\rangle \right\vert 
\end{align*}
then Eq.~\eqref{E1} holds, thus $L\left(  \widetilde{\bm{\Gamma}}_{\Lambda}+\delta
\cdot\mathbf{e}_{\omega}\right)  -L\left(  \widetilde{\bm{\Gamma}}_{\Lambda}\right)  >0$.
\ As $\bm{X}-\bm{X}_{p,\Lambda}\perp \bm{d}_{\theta},~\forall\theta\in\Lambda$ we can take
the $\max$ on the left hand side also for these terms to get $\max
_{\omega\notin\Lambda}\left\vert \left\langle \bm{X}-\bm{X}_{p,\Lambda},\bm{d}_{\omega
}\right\rangle \right\vert =\max_{\omega}\left\vert \left\langle
\bm{X}-\bm{X}_{p,\Lambda},\bm{d}_{\omega}\right\rangle \right\vert =\left\Vert \bm{D}^{\ast
}\left(  \bm{X}-\bm{X}_{p,\Lambda}\right)  \right\Vert _{\infty}.$ Therefore%
\begin{align*}
& \left\Vert \bm{D}^{\ast}\left(  \bm{X}-\bm{X}_{p,\Lambda}\right)  \right\Vert _{\infty}%
\leq  \gamma_{\min}\lambda-\max_{\omega\notin\Lambda}\left\vert \left\langle
\left(\bm{g}\star\bm{\gamma}\right),\bm{D}_{\Lambda}^{\dagger}\bm{d}_{\omega}\right\rangle \right\vert
=\\
& SERC_{\lambda}\left(  \Lambda,\hat{\bm{\Gamma}}_{\Lambda},\bm{\gamma}\right)
\end{align*}
is a sufficient condition for every perturbation away $\widetilde{\bm{\Gamma}
}_{\Lambda}$ to increase the objective function Eq.~\eqref{GBP} . Since Eq.~\eqref{GBP}
\ is convex, it follows that $\widetilde{\bm{\Gamma}}_{\Lambda}$ is the global
minimizer of Eq.~\eqref{GBP}. \ Using the inequality Eq.~\eqref{ESERCERC} we conclude
the proof.
\end{proof}

\begin{corollary}
Assume that for the subdomain $\Lambda$ the matrix $\bm{D}_{\Lambda}$ has full
column rank and for the signal $\bm{X}$%
\[
\left\Vert \bm{X}-\bm{X}_{p,\Lambda}\right\Vert _{2}\leq \gamma_{\max} ERC_{\lambda \frac{\gamma_{\min}}{\gamma_{\max}} }\left(  \Lambda\right)
\]
where $\lambda$ is as in the previous lemma for the \eqref{GBP} problem, moreover, assume that $\Lambda$ is group-full. If $\hat{\bm{\Gamma}}$ is a
minimizer of the global program Eq.~\eqref{GBP} then $\supp{}{\left(  \hat{\bm{\Gamma}
}\right)}  \subseteq\Lambda$ and $\hat{\bm{\Gamma}}$ is the unique minimizer.
\end{corollary}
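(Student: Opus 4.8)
The plan is to derive this corollary directly from the Correlation Condition (Lemma~\ref{LCorr}) by upgrading its $\ell_{\infty}$ hypothesis to the $\ell_{2}$ hypothesis stated here. First I would fix a minimizer of \eqref{GBP} restricted to the subspace $V_{\Lambda}$: since $\bm{D}_{\Lambda}$ has full column rank the quadratic term $\frac{1}{2}\left\Vert\bm{X}_{p,\Lambda}-\bm{D}_{\Lambda}\bm{\Gamma}_{\Lambda}\right\Vert_{2}^{2}$ is strictly convex and coercive on $V_{\Lambda}$, and the regularizer $\left\langle\bm{\gamma},\bm{l}(\cdot)\right\rangle$ is convex and nonnegative, so a minimizer $\widetilde{\bm{\Gamma}}_{\Lambda}$ exists; by Lemma~\ref{Lemmata} it is unique and its compression satisfies the stated normal-equation identity.

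The key estimate is that, because $\bm{D}$ is $\ell_{2}$-normalized, $\left\Vert\bm{d}_{\omega}\right\Vert_{2}=1$ for every $\omega$, hence by Cauchy--Schwarz
\[
\left\Vert \bm{D}^{\ast}\left(  \bm{X}-\bm{X}_{p,\Lambda}\right)  \right\Vert _{\infty}
=\max_{\omega}\left\vert\left\langle \bm{d}_{\omega},\bm{X}-\bm{X}_{p,\Lambda}\right\rangle\right\vert
\leq\max_{\omega}\left\Vert \bm{d}_{\omega}\right\Vert_{2}\left\Vert \bm{X}-\bm{X}_{p,\Lambda}\right\Vert_{2}
=\left\Vert \bm{X}-\bm{X}_{p,\Lambda}\right\Vert_{2}.
\]
Combined with the hypothesis $\left\Vert \bm{X}-\bm{X}_{p,\Lambda}\right\Vert_{2}\leq\gamma_{\max}\,ERC_{\lambda\gamma_{\min}/\gamma_{\max}}(\Lambda)$ this yields exactly the inequality $\left\Vert \bm{D}^{\ast}(\bm{X}-\bm{X}_{p,\Lambda})\right\Vert_{\infty}\leq\gamma_{\max}\,ERC_{\lambda\gamma_{\min}/\gamma_{\max}}(\Lambda)$ appearing in the second alternative of Lemma~\ref{LCorr}. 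Since $\Lambda$ is group-full and $\bm{D}_{\Lambda}$ has full column rank, all the hypotheses of Lemma~\ref{LCorr} are met, so $\widetilde{\bm{\Gamma}}_{\Lambda}$ is the \emph{unique} minimizer of the global program \eqref{GBP}.

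Finally, if $\hat{\bm{\Gamma}}$ is any minimizer of the global \eqref{GBP}, uniqueness forces $\hat{\bm{\Gamma}}=\widetilde{\bm{\Gamma}}_{\Lambda}$, whence $\supp{}{\left(\hat{\bm{\Gamma}}\right)}\subseteq\Lambda$ and $\hat{\bm{\Gamma}}$ is unique, as claimed. I do not expect a genuine obstacle here: the substantive work is already contained in Lemmata~\ref{Lemmata} and~\ref{LCorr}, and the only points needing care are the existence of the restricted minimizer, the identification between vectors supported on $\Lambda$ and their compressions in $\mathbb{R}^{|\Lambda|}$, and the observation that the passage from $\ell_{2}$ to $\ell_{\infty}$ loses nothing precisely because the dictionary columns are unit-normed.
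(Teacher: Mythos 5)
Your argument is correct and is essentially the paper's own proof: both reduce the $\ell_{2}$ hypothesis to the $\ell_{\infty}$ condition of the Correlation Condition lemma via Cauchy--Schwarz and the unit-normed columns, then invoke that lemma to conclude that the minimizer over $\Lambda$ is the unique global minimizer. The extra remarks on existence of the restricted minimizer are harmless additions, not a different route.
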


\begin{proof}
As $\left\Vert \bm{D}^{\ast}\left(  \bm{X}-\bm{X}_{p,\Lambda}\right)  \right\Vert _{\infty
}=\max_{\omega}\left\vert \left\langle \bm{d}_{\omega},\bm{X}-\bm{X}_{p,\Lambda}\right\rangle
\right\vert \leq\max_{\omega}\left\Vert \bm{d}_{\omega}\right\Vert _{2}\left\Vert
\bm{X}-\bm{X}_{p,\Lambda}\right\Vert _{2}$ and $\bm{D}$ is an $\ell_{2}$ normalized dictionary
we have $\left\Vert \bm{D}^{\ast}\left(  \bm{X}-\bm{X}_{p,\Lambda}\right)  \right\Vert
_{\infty}\leq\left\Vert \bm{X}-\bm{X}_{p,\Lambda}\right\Vert _{2}$. Thus the condition
of lemma \ref{LCorr} is fulfilled, yielding that the minimizer of Eq.~\eqref{GBP}
over $\Lambda$ is the global unique minimizer.
\end{proof}

The above results grouped together give the extension of Theorem 8 in \cite{tropp2006just}.

\begin{theorem}
\label{T1} Let $\bm{\Gamma}_{GBP}$ be a solution of Eq.~\eqref{GBP} and assume that for the group-full $\Lambda$ the columns of $\bm{D}_{\Lambda}$ are linearly
independent, moreover that $ERC_{\lambda \frac{\gamma_{\min}}{\gamma_{\max}}
}\left(  \Lambda\right)  >0$ where $\lambda\overset{def}{=}\min\left\{ 1,\beta_{1},\dots,\beta_{r}\right\}$ as before (i.e. if among the norms of $\bm{l}$ we used elastic norms, let $\left\{\beta_{1},\dots,\beta_{r}\right\}$ be the set of the parameters used in the elastic norms). 

If for the
signal $\bm{X}$ we have%
\[
\left\Vert \bm{D}^{\ast}\left(  \bm{X}-\bm{X}_{p,\Lambda}\right)  \right\Vert _{\infty}%
\leq  \gamma_{\max} ERC_{\lambda \frac{\gamma_{\min}}{\gamma_{\max}} }\left(  \Lambda\right)
\]
then
\begin{enumerate}
\item[1)]$\supp{}\bm{\Gamma}_{GBP}\subset\Lambda,$%
\item[2)] If $\bm{X}_{p,\Lambda}=\bm{D}_{\Lambda}\bm{C}_{\Lambda}$ where $\bm{C}_{\Lambda}=\bm{D}_{\Lambda}^{\dagger}\bm{X}_{p,\Lambda}$ then $\left\Vert
\bm{\Gamma}_{GBP}-\bm{C}_{\Lambda}\right\Vert _{\infty}\leq\gamma_{\max}\left\Vert \left(
\bm{D}_{\Lambda}^{\ast}\bm{D}_{\Lambda}\right)  ^{-1}\right\Vert _{\infty}$,
\item[3)] 
$\left\{i~\Big|~\left\vert \left(\bm{C}_{\Lambda}\right)_{i}\right\vert
>\gamma_{\max}\left\Vert \left(  \bm{D}_{\Lambda}%
^{\ast}\bm{D}_{\Lambda}\right)  ^{-1}\right\Vert _{\infty}\right\}\subseteq\supp{}{\bm{\Gamma}_{GBP}}$,

\item[4)] Moreover the minimizer $\bm{\Gamma}_{GBP}$ is unique.
\end{enumerate}
\end{theorem}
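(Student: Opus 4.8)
The plan is to obtain Theorem~\ref{T1} by assembling the three results just established — the Fundamental Lemmata~\ref{Lemmata}, Corollary~\ref{C1}, and the Correlation Condition (Lemma~\ref{LCorr}) — with no new estimate required. Since the objective of Eq.~\eqref{GBP} is convex and coercive, a global minimizer $\bm{\Gamma}_{GBP}$ exists; the task is only to pin it down and read off the bounds.

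First I would verify that the hypotheses are exactly those of Lemma~\ref{LCorr} applied to $\Lambda$: $\Lambda$ is group-full by assumption, $\bm{D}_{\Lambda}$ has linearly independent columns (full column rank), and the stated inequality $\left\Vert \bm{D}^{\ast}\left(\bm{X}-\bm{X}_{p,\Lambda}\right)\right\Vert_{\infty}\leq\gamma_{\max}\, ERC_{\lambda\frac{\gamma_{\min}}{\gamma_{\max}}}\left(\Lambda\right)$ is precisely the second alternative hypothesis of that lemma. Lemma~\ref{LCorr} then gives that the minimizer $\widetilde{\bm{\Gamma}}_{\Lambda}$ of Eq.~\eqref{GBP} restricted to $V_{\Lambda}$ — unique by Lemma~\ref{Lemmata} — is the unique global minimizer of Eq.~\eqref{GBP}. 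This is item~4), and it forces $\bm{\Gamma}_{GBP}=\widetilde{\bm{\Gamma}}_{\Lambda}$, so $\supp{}{\bm{\Gamma}_{GBP}}\subseteq\Lambda$, which is item~1).

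Because $\bm{\Gamma}_{GBP}$ is the restricted minimizer, its compression to $\Lambda$ is $\hat{\bm{\Gamma}}_{\Lambda}$, so Corollary~\ref{C1} (which needs only that $\bm{D}_{\Lambda}$ has full column rank) yields $\left\Vert \bm{C}_{\Lambda}-\hat{\bm{\Gamma}}_{\Lambda}\right\Vert_{\infty}\leq\gamma_{\max}\left\Vert\left(\bm{D}_{\Lambda}^{\ast}\bm{D}_{\Lambda}\right)^{-1}\right\Vert_{\infty,\infty}$. Viewing $\bm{C}_{\Lambda}$ and $\hat{\bm{\Gamma}}_{\Lambda}$ back in $\mathbb{R}^{M}$ under the isometric identification $V_{\Lambda}\cong\mathbb{R}^{\left\vert\Lambda\right\vert}$ (both are supported in $\Lambda$), the same bound reads $\left\Vert\bm{\Gamma}_{GBP}-\bm{C}_{\Lambda}\right\Vert_{\infty}\leq\gamma_{\max}\left\Vert\left(\bm{D}_{\Lambda}^{\ast}\bm{D}_{\Lambda}\right)^{-1}\right\Vert_{\infty}$, which is item~2). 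For item~3) I would argue by contradiction: if an index $i$ has $\left\vert\left(\bm{C}_{\Lambda}\right)_{i}\right\vert$ above the threshold but $i\notin\supp{}{\bm{\Gamma}_{GBP}}$, then $i\in\Lambda$ (otherwise $\left(\bm{C}_{\Lambda}\right)_{i}=0$, contradicting the positive lower bound), so $\left(\bm{\Gamma}_{GBP}\right)_{i}=0$ and $\left\vert\left(\bm{\Gamma}_{GBP}-\bm{C}_{\Lambda}\right)_{i}\right\vert=\left\vert\left(\bm{C}_{\Lambda}\right)_{i}\right\vert$ exceeds the bound of item~2) — a contradiction; hence every such $i$ lies in $\supp{}{\bm{\Gamma}_{GBP}}$.

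I expect no genuine difficulty here: the theorem is a packaging step, and all the real work is already in Lemmas~\ref{Lemmata} and~\ref{LCorr}. The only points requiring care are bookkeeping ones — checking that the group-full hypothesis is exactly what makes the Correlation Condition applicable (it is used there to guarantee that perturbing along an atom belonging to an $\ell_{2}$-group still increases the objective by at least $\lambda\gamma_{\min}\left\vert\delta\right\vert$), and confirming that a global minimizer of Eq.~\eqref{GBP} exists (coercivity of the convex objective) so that ``unique minimizer'' is a meaningful conclusion.
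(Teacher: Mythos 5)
Your proposal is correct and follows essentially the same route as the paper: invoke the Correlation Condition (Lemma~\ref{LCorr}) under the stated ERC hypothesis to identify the restricted minimizer as the unique global one (items 1 and 4), then apply Corollary~\ref{C1} for the $\ell_\infty$ bound (item 2), from which item 3 follows. Your extra remarks on coercivity, the $V_\Lambda\cong\mathbb{R}^{|\Lambda|}$ identification, and the explicit contradiction argument for item 3 are just careful spellings-out of steps the paper states without detail.
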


\begin{proof}
By our assumption lemma \ref{LCorr} yields that the unique global minimizer of
Eq.~\eqref{GBP} is the same as the minimizer of Eq.~\eqref{GBP} over the vectors with
support $\Lambda$ yielding 1) and 4). We can use corollary \ref{C1} to get 2)
which follows 3).
\end{proof}

\subsection{Mutual coherence, stripe norm to estimate (ERC)}\label{SSmutual}

Now we proceed with some of the results of Papyan et all \cite{papyan2016working}
and try to extend these for the \eqref{GBP} problem. For
this we will use that usually $\bm{D}$ has a special form which can enhance the
constants used in the theorems. If $\bm{D}$ is a matrix corresponding to a
convolutional neural network with classical kernels such as in \cite{papyan2017working},
then a column $\bm{d}_{i}$ of $\bm{D}$ is orthogonal to most of the columns in $\bm{D}$. Let us define some useful constants and the stripe norm.

\begin{definition}
\label{D1}Let $k_{\bm{d}_{i}}$ denote the maximal number of columns in $\bm{D}$ which
are not orthogonal to the column $\bm{d}_{i}$ and $k_{\bm{D}}\overset{def}{=}\max
_{i}k_{i}$ which we call the maximal stripe of $\bm{D}$. Let $k_{\Lambda}$ denote
the number of indexes in $\Lambda$. 
Moreover, if $1_{\Lambda}\in\mathbb{R}^{M}$ is the vector which has $1s$ at
the indices in $\Lambda$ and $0s$ otherwise, then let $k_{\Lambda
,\bm{D}}\overset{def}{=}\left\Vert 1_{\Lambda}\right\Vert _{0,st}$ which measures
the maximal theoretical number of columns in $\bm{D}_{\Lambda}$ which are possibly
not orthogonal to $\bm{d}_{i}$ for any $i.$
\end{definition}

In the papers \cite{papyan2017working}, \cite{papyan2016working} the norm $\left\Vert \bm{\Gamma}\right\Vert
_{0,\infty}$ is used. By the assumption on $\bm{D}$ these papers made (see at
Definition 1 in \cite{papyan2017working}) we have $\left\Vert \bm{\Gamma}\right\Vert _{0,st}%
\leq\left\Vert \bm{\Gamma}\right\Vert _{0,\infty}$. The difference comes from the
fact that in the $\left(  2n-1\right)  m$ long stripe in $\bm{D}$ there can be
orthogonal atoms and every atom outside of the stripe is orthogonal to $\bm{d}_{i}%
$, where the stripe corresponds to $\bm{d}_{i}$ which has at most an $n$-length
non-zero part. Also the little bit ambiguous name \textit{stripe} is used here
for these historical reasons.

Note that $\left\Vert \bm{\Gamma}_{\Lambda_{i}\left(  \bm{D}\right)  }\right\Vert
_{0}=\left\vert \supp{}{\bm{\Gamma}}\cap\Lambda_{i}\left(  \bm{D}\right)\right\vert $ where the right hand side is the number of elements in the intersection thus
$\left\vert \supp{}{\bm{\Gamma}}\cap\Lambda_{i}\left(  \bm{D}\right)\right\vert
\leq\left\Vert \bm{\Gamma}\right\Vert _{0,st}$ for every index $i$.

Note that $\bm{D}$ was normed, i.e. the columns are $\ell_{2}$ unit vectors.
In applications the goal is to use a dictionary $\bm{D}$ with low mutual coherence
i.e. $\mu\left(  \bm{D}\right)  $ is as low as possible thus $\bm{D}^{\ast}\bm{D}$ is a
diagonally dominant matrix which we will assume hereon.

Recall that
\[
SERC_{\lambda}\left(  \Lambda,\hat{\bm{\Gamma}}_{\Lambda},\bm{\gamma}\right)  =\lambda\gamma_{\min} -\max_{\omega\notin\Lambda}\left\langle \bm{D}_{\Lambda}^{\dagger}\bm{d}_{\omega
},\left(\bm{g}\star\bm{\gamma}\right)\right\rangle
\]
and 
\[
ERC_{\lambda}\left(  \Lambda\right)  =\lambda-\max_{\omega\notin\Lambda
}\left\Vert \bm{D}_{\Lambda}^{\dagger}\bm{d}_{\omega}\right\Vert _{1}
\]
We try to give a condition such that $ERC_{\lambda}\left(
\Lambda\right)  >0$ holds.

If $\Lambda$ is the support and $\omega\notin\Lambda$, then
\begin{align*}
& \left\Vert \bm{D}_{\Lambda}^{\dagger}\bm{d}_{\omega}\right\Vert _{1}=\left\Vert \left(
\bm{D}_{\Lambda}^{\ast}\bm{D}_{\Lambda}\right)  ^{-1}\bm{D}_{\Lambda}^{\ast}\bm{d}_{\omega
}\right\Vert _{1}\leq\\
& \left\Vert \left(  \bm{D}_{\Lambda}^{\ast}\bm{D}_{\Lambda}\right)
^{-1}\right\Vert _{1}\left\Vert \bm{D}_{\Lambda}^{\ast}\bm{d}_{\omega}\right\Vert _{1}.%
\end{align*}
As $\bm{D}^{\ast}\bm{D}$ is a diagonally dominant matrix we can use the
Ahlberg-Nilson-Varah bound
\begin{align*}
& \left\Vert \left(  \bm{D}_{\Lambda}^{\ast}\bm{D}_{\Lambda}\right)  ^{-1}\right\Vert
_{1}\leq\frac{1}{1-\max_{i\in\Lambda}\sum_{j\in\Lambda,j\neq i}\left\vert
\left\langle \bm{d}_{i},\bm{d}_{j}\right\rangle \right\vert }\leq\\
& \frac{1}{1-\max
_{i}\left\Vert \bm{D}_{\Lambda-\left\{  i\right\}  }^{\ast}\bm{d}_{i}\right\Vert _{1}%
},\label{E2}%
\end{align*}
Let $co~S$ denote the set containing the indices not contained in $S$. Then
for $\omega\notin\Lambda,$ $\left\Vert \bm{D}_{\Lambda}^{\ast}\bm{d}_{\omega}\right\Vert
_{1}\leq\left\Vert \bm{D}_{co\left\{  \omega\right\}  }^{\ast}\bm{d}_{\omega}\right\Vert
_{1}$. Thus for $\omega\notin\Lambda$%
\begin{equation}
\left\Vert \bm{D}_{\Lambda}^{\dagger}\bm{d}_{\omega}\right\Vert _{1}\leq\frac{\left\Vert
\bm{D}_{co\left\{  \omega\right\}  }^{\ast}\bm{d}_{\omega}\right\Vert _{1}}{1-\max
_{i}\left\Vert \bm{D}_{co\left\{  i\right\}  }^{\ast}\bm{d}_{i}\right\Vert _{1}%
},\label{E3}%
\end{equation}
Since $\sum_{j\neq i}\left\vert \left\langle \bm{d}_{i},\bm{d}_{j}\right\rangle
\right\vert \leq\left(  k_{\bm{D}}-1\right)  \mu\left(  \bm{D}\right)  $ we have the
more nicer form%
\begin{equation}
\left\Vert \bm{D}_{\Lambda}^{\dagger}\bm{d}_{\omega}\right\Vert _{1}\leq\frac{\left(
k_{\bm{D}}-1\right)  \mu\left(  \bm{D}\right)  }{1-\left(  k_{\bm{D}}-1\right)  \mu\left(
\bm{D}\right)  }.\label{B1}%
\end{equation}
This is typically not strict if $k_{\bm{D}}$ is big but for a $\bm{d}_{i}$ there are
only a few atoms where $\left\vert \left\langle \bm{d}_{i},\bm{d}_{j}\right\rangle
\right\vert \thickapprox\mu\left(  \bm{D}\right)  $ and for the rest $\left\vert
\left\langle \bm{d}_{i},\bm{d}_{j}\right\rangle \right\vert \ll\mu\left(  \bm{D}\right)  $
then $\sum_{j\neq i}\left\vert \left\langle \bm{d}_{i},\bm{d}_{j}\right\rangle
\right\vert <\left(  k_{\bm{D}}-1\right)  \mu\left(  \bm{D}\right)  $. Also for
$\omega\notin\Lambda,$ $\left\Vert \bm{D}_{\Lambda}^{\ast}\bm{d}_{\omega}\right\Vert
_{1}\leq\left\Vert \bm{D}_{co\left\{  \omega\right\}  }^{\ast}\bm{d}_{\omega}\right\Vert
_{1}\leq\left(  k_{\bm{D}}-1\right)  \mu\left(  \bm{D}\right)  $ is not strict typically
for the same reason.

For an $\omega\notin\Lambda,$ we can also use the estimates $\left\Vert
\bm{D}_{\Lambda}^{\ast}\bm{d}_{\omega}\right\Vert _{1}\leq k_{\Lambda}\mu\left(
\bm{D}\right)  $ and $\left\Vert \bm{D}_{\Lambda-\left\{  i\right\}  }^{\ast}%
\bm{d}_{i}\right\Vert _{1}\leq\left(  k_{\Lambda}-1\right)  \mu\left(  \bm{D}\right)  $
to get%
\[
\left\Vert \bm{D}_{\Lambda}^{\dagger}\bm{d}_{\omega}\right\Vert _{1}\leq\frac
{k_{\Lambda}\mu\left(  \bm{D}\right)  }{1-\left(  k_{\Lambda}-1\right)  \mu\left(
\bm{D}\right)  }.
\]

An other bound can be achieved by the estimates $\left\Vert \bm{D}_{\Lambda}^{\ast
}\bm{d}_{\omega}\right\Vert _{1}\leq k_{\Lambda,\bm{D}}\mu\left(  \bm{D}\right)  $ and
$\left\Vert \bm{D}_{\Lambda-\left\{  i\right\}  }^{\ast}\bm{d}_{i}\right\Vert _{1}%
\leq\left(  k_{\Lambda,\bm{D}}-1\right)  \mu\left(  \bm{D}\right)  $ yielding a similar
bound to the above%
\[
\left\Vert \bm{D}_{\Lambda}^{\dagger}\bm{d}_{\omega}\right\Vert _{1}\leq\frac
{k_{\Lambda,\bm{D}}\mu\left(  \bm{D}\right)  }{1-\left(  k_{\Lambda,\bm{D}}-1\right)
\mu\left(  \bm{D}\right)  }.
\]

Finally for the last bound let $\bm{\Gamma}\in\mathbb{R}^{M}$ be an arbitrary
vector with $\supp{}{\bm{\Gamma}}=\Lambda$ then $\left\Vert \bm{D}_{\Lambda}^{\ast
}\bm{d}_{\omega}\right\Vert _{1}\leq\left\vert \Lambda\cap\Lambda_{\omega
}\left(\bm{D}\right)\right\vert \mu\left(  \bm{D}\right)  \leq\left\Vert \bm{\Gamma}\right\Vert _{0,st}%
\mu\left(  \bm{D}\right)  $ where $\left\vert \Lambda\cap\Lambda_{\omega
}\left(\bm{D}\right)\right\vert =\left\vert \supp{}{\bm{\Gamma}}\cap\Lambda_{\omega
}\left(\bm{D}\right)\right\vert \leq\left\Vert \bm{\Gamma}\right\Vert _{0,st}$ which is explained
after definition \ref{D1}. Moreover%
\begin{align}\label{E5}
& \left\Vert \left(  \bm{D}_{\Lambda}^{\ast}\bm{D}_{\Lambda}\right)  ^{-1}\right\Vert _{1}
\leq\frac{1}{1-\max_{j\in\Lambda}\sum_{i\in\Lambda,j\neq i}\left\vert
\left\langle \bm{d}_{i},\bm{d}_{j}\right\rangle \right\vert }\leq
\\
& \frac{1}{1-\max_{j\in\Lambda}\left(  \left\vert \Lambda\cap\Lambda
_{j}\left(\bm{D}\right)\right\vert -1\right)  \mu\left(  \bm{D}\right)  } = \nonumber\\
& \frac{1}{1-\left(
\left\Vert \bm{\Gamma}\right\Vert _{0,st}-1\right)  \mu\left(  \bm{D}\right)} \nonumber
\end{align}

So we have%
\begin{equation}
\left\Vert \bm{D}_{\Lambda}^{\dagger}\bm{d}_{\omega}\right\Vert _{1}\leq\frac{\left\Vert
\bm{\Gamma}\right\Vert _{0,st}\mu\left(  \bm{D}\right)  }{1-\left(  \left\Vert
\bm{\Gamma}\right\Vert _{0,st}-1\right)  \mu\left(  \bm{D}\right)  }. \label{E4}%
\end{equation}

With the help of the above estimates we can give conditions to assure that the ERC bound is positive.

%

\begin{proposition}
\label{P1}If $k_{\bm{D}}<1+\frac{\lambda}{\left(  1+\lambda\right)  \mu\left(
\bm{D}\right)  }$ \ \ or $k_{\Lambda}<\frac{\lambda}{1+\lambda}\left(  1+\frac
{1}{\mu\left(  \bm{D}\right)  }\right)  $ \ \ or $k_{\Lambda,\bm{D}}<\frac{\lambda
}{1+\lambda}\left(  1+\frac{1}{\mu\left(  \bm{D}\right)  }\right)  $ then
$ERC_{\lambda}\left(  \Lambda\right)  >0$. Moreover, if $\bm{\Gamma}\in
\mathbb{R}^{M}$ with $\supp{}{\bm{\Gamma}}=\Lambda$ then if $\left\Vert
\bm{\Gamma}\right\Vert _{0,st}<\frac{\lambda}{1+\lambda}\left(  1+\frac{1}%
{\mu\left(  \bm{D}\right)  }\right)  $ then $ERC_{\lambda}\left(  \Lambda\right)
>0$. If $\bm{\Gamma}\in\mathbb{R}^{M}$ with $\supp{}{\bm{\Gamma}}=\Lambda$ then
$\left\Vert \bm{\chi}_{\bm{\Gamma},\mathcal{G}}\right\Vert _{0,st}<\frac{\lambda
}{1+\lambda}\left(  1+\frac{1}{\mu\left(  \bm{D}\right)  }\right)  $ yields
$ERC_{\lambda}\left(  \Lambda_{full}\right)  >0$
\end{proposition}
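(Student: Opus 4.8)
The plan is to obtain $ERC_{\lambda}(\Lambda)>0$ from each hypothesis by inserting it into one of the four upper bounds on $\max_{\omega\notin\Lambda}\|\bm{D}_{\Lambda}^{\dagger}\bm{d}_{\omega}\|_1$ derived just above: Eq.~\eqref{B1} (phrased via $k_{\bm{D}}$), the two analogous bounds $\tfrac{k_{\Lambda}\mu(\bm{D})}{1-(k_{\Lambda}-1)\mu(\bm{D})}$ and $\tfrac{k_{\Lambda,\bm{D}}\mu(\bm{D})}{1-(k_{\Lambda,\bm{D}}-1)\mu(\bm{D})}$, and Eq.~\eqref{E4} (phrased via $\|\bm{\Gamma}\|_{0,st}$ under $\supp{}{\bm{\Gamma}}=\Lambda$). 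Since $ERC_{\lambda}(\Lambda)=\lambda-\max_{\omega\notin\Lambda}\|\bm{D}_{\Lambda}^{\dagger}\bm{d}_{\omega}\|_1$ by definition, in every case it is enough to show the relevant bound is strictly below $\lambda$. The standing fact $\lambda=\min\{1,\beta_1,\dots,\beta_r\}\le1$, hence $\tfrac{\lambda}{1+\lambda}\le\tfrac12<1$, is what will make the denominators in those bounds positive, so the clearing of denominators below is legitimate.

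First I would do the $k_{\bm{D}}$ case: with $s:=(k_{\bm{D}}-1)\mu(\bm{D})$ the hypothesis reads exactly $s<\tfrac{\lambda}{1+\lambda}<1$, so $1-s>0$ and $\tfrac{s}{1-s}<\lambda\iff s(1+\lambda)<\lambda$, which holds; Eq.~\eqref{B1} then closes this case. For the $k_{\Lambda}$ and $k_{\Lambda,\bm{D}}$ cases I would repeat the same one-line rearrangement on $\tfrac{k\mu}{1-(k-1)\mu}$ with $k\in\{k_{\Lambda},k_{\Lambda,\bm{D}}\}$, $\mu:=\mu(\bm{D})$: once the denominator is seen to be positive, $\tfrac{k\mu}{1-(k-1)\mu}<\lambda$ is equivalent to $k\mu(1+\lambda)<\lambda(1+\mu)$, i.e. $k<\tfrac{\lambda}{1+\lambda}\bigl(1+\tfrac1{\mu}\bigr)$, which is precisely the stated hypothesis. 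Substituting $\|\bm{\Gamma}\|_{0,st}$ for $k$ and invoking Eq.~\eqref{E4} --- whose derivation used $\supp{}{\bm{\Gamma}}=\Lambda$, exactly the hypothesis here --- yields the statement that $\|\bm{\Gamma}\|_{0,st}<\tfrac{\lambda}{1+\lambda}\bigl(1+\tfrac1{\mu}\bigr)$ implies $ERC_{\lambda}(\Lambda)>0$.

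The last assertion then follows by applying this $\|\cdot\|_{0,st}$-version to $\bm{\chi}_{\bm{\Gamma},\mathcal{G}}$ in the role of $\bm{\Gamma}$: by the definition of the $2$-norm group characteristic vector, $\supp{}{\bm{\chi}_{\bm{\Gamma},\mathcal{G}}}=\Lambda_{full}$, so $\|\bm{\chi}_{\bm{\Gamma},\mathcal{G}}\|_{0,st}<\tfrac{\lambda}{1+\lambda}\bigl(1+\tfrac1{\mu(\bm{D})}\bigr)$ gives $ERC_{\lambda}(\Lambda_{full})>0$. I do not expect a genuine obstacle: the proof is a routine reorganization of the four inequalities into linear form, and the only steps deserving a sentence of care are checking positivity of each denominator (which $\lambda\le1$ supplies) and recording that Eq.~\eqref{E4} is the single bound that actually uses the support hypothesis $\supp{}{\bm{\Gamma}}=\Lambda$.
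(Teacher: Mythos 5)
Your proof is correct and follows essentially the same route as the paper: each hypothesis is plugged into the corresponding bound on $\max_{\omega\notin\Lambda}\left\Vert \bm{D}_{\Lambda}^{\dagger}\bm{d}_{\omega}\right\Vert _{1}$ (Eq.~\eqref{B1}, the two $k_{\Lambda}$, $k_{\Lambda,\bm{D}}$ bounds, and Eq.~\eqref{E4}), the inequality is solved against $\lambda$, and the last claim follows by replacing $\bm{\Gamma}$ with $\bm{\chi}_{\bm{\Gamma},\mathcal{G}}$. Your extra sentence on denominator positivity via $\lambda\leq 1$ is a welcome but minor addition to the paper's one-line "solving" argument.
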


\begin{proof}
Solving $\frac{\left(  k_{\bm{D}}-1\right)  \mu\left(  \bm{D}\right)  }{1-\left(
k_{\bm{D}}-1\right)  \mu\left(  \bm{D}\right)  }<\lambda$ we get $k_{\bm{D}}<1+\frac{\lambda
}{\left(  1+\lambda\right)  \mu\left(  \bm{D}\right)  }$ which by Eq.~\eqref{B1} yields
$\left\Vert \bm{D}_{\Lambda}^{\dagger}\bm{d}_{\omega}\right\Vert _{1}<\lambda$%
,~$\forall\omega\notin\Lambda$ thus $ERC_{\lambda}\left(  \Lambda\right)
>0$.\newline Solving $\frac{k_{\Lambda}\mu\left(  \bm{D}\right)  }{1-\left(
k_{\Lambda}-1\right)  \mu\left(  \bm{D}\right)  }<\lambda$ we get $k_{\Lambda
}<\frac{\lambda}{1+\lambda}\left(  1+\frac{1}{\mu\left(  \bm{D}\right)  }\right)  $
and the same is true in the cases $k_{\Lambda,\bm{D}}$ and $\left\Vert
\bm{\Gamma}\right\Vert _{0,st}$. Moreover replacing $\bm{\Gamma}$ with $\bm{\chi}
_{\bm{\Gamma},\mathcal{G}}$ we get the last part.
\end{proof}

Note that if we don't use the $\ell_{\beta,1,2}$ ellastic norm, then $\lambda=1$ and we get back the usual $\frac{1}{2}\left(  1+\frac{1}{\mu\left(  \bm{D}\right)  }\right)  $ bound in most of the above cases.

\begin{lemma}
Let $\bm{Y}=\bm{X}+\bm{E}$ where $\bm{X}=\bm{D}\bm{\Gamma}$ is a signal contaminated with noise $\bm{E}$ and
$\supp{}{\bm{\Gamma}}\subseteq\Lambda$. If $\bm{Y}_{p,\Lambda}$ is the projection of $\bm{Y}$ to
the subspace spanned by $\bm{D}_{\Lambda}$ then%
\[
\left\Vert \bm{D}^{\ast}\left(  \bm{Y}-\bm{Y}_{p,\Lambda}\right)  \right\Vert _{\infty}%
\leq\left\Vert \bm{E}\right\Vert _{2}\sqrt{\left(  k_{\bm{D}}-1\right)  \mu\left(
\bm{D}\right)  }%
\]

\end{lemma}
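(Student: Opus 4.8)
The plan is to reduce the statement to a pure noise term and then control that term atom by atom. Since $\operatorname{supp}(\bm{\Gamma})\subseteq\Lambda$, the clean signal satisfies $\bm{X}=\bm{D}\bm{\Gamma}=\bm{D}_{\Lambda}\bm{\Gamma}_{\Lambda}\in\operatorname{range}(\bm{D}_{\Lambda})$, so the orthogonal projection fixes it, $\bm{X}_{p,\Lambda}=\bm{P}_{\Lambda}\bm{X}=\bm{X}$. Hence $\bm{Y}-\bm{Y}_{p,\Lambda}=(\bm{I}-\bm{P}_{\Lambda})\bm{Y}=(\bm{I}-\bm{P}_{\Lambda})(\bm{X}+\bm{E})=(\bm{I}-\bm{P}_{\Lambda})\bm{E}$, and the object to be bounded becomes $\left\Vert\bm{D}^{\ast}(\bm{I}-\bm{P}_{\Lambda})\bm{E}\right\Vert_{\infty}=\max_{\omega}\bigl|\langle\bm{d}_{\omega},(\bm{I}-\bm{P}_{\Lambda})\bm{E}\rangle\bigr|$.

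Next I would examine this maximum one atom at a time, using that $\bm{I}-\bm{P}_{\Lambda}$ is the self-adjoint idempotent projector onto $\operatorname{range}(\bm{D}_{\Lambda})^{\perp}$. For $\omega\in\Lambda$ we have $\bm{d}_{\omega}\in\operatorname{range}(\bm{D}_{\Lambda})$, so $(\bm{I}-\bm{P}_{\Lambda})\bm{d}_{\omega}=0$ and that coordinate vanishes. For $\omega\notin\Lambda$, transferring the projector onto the atom by self-adjointness and applying Cauchy--Schwarz gives $\bigl|\langle\bm{d}_{\omega},(\bm{I}-\bm{P}_{\Lambda})\bm{E}\rangle\bigr|=\bigl|\langle(\bm{I}-\bm{P}_{\Lambda})\bm{d}_{\omega},\bm{E}\rangle\bigr|\leq\left\Vert(\bm{I}-\bm{P}_{\Lambda})\bm{d}_{\omega}\right\Vert_{2}\left\Vert\bm{E}\right\Vert_{2}$. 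Thus the whole claim reduces to the geometric estimate $\max_{\omega\notin\Lambda}\left\Vert(\bm{I}-\bm{P}_{\Lambda})\bm{d}_{\omega}\right\Vert_{2}\leq\sqrt{(k_{\bm{D}}-1)\,\mu(\bm{D})}$.

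That last estimate is where the structure of $\bm{D}$ must be invoked, and I expect it to be the main obstacle. Writing $\left\Vert(\bm{I}-\bm{P}_{\Lambda})\bm{d}_{\omega}\right\Vert_{2}^{2}=1-\left\Vert\bm{P}_{\Lambda}\bm{d}_{\omega}\right\Vert_{2}^{2}$ with $\bm{P}_{\Lambda}\bm{d}_{\omega}=\bm{D}_{\Lambda}(\bm{D}_{\Lambda}^{\ast}\bm{D}_{\Lambda})^{-1}\bm{D}_{\Lambda}^{\ast}\bm{d}_{\omega}$, one uses that for $\omega\notin\Lambda$ the coupling vector $\bm{D}_{\Lambda}^{\ast}\bm{d}_{\omega}$ is supported on $\Lambda\cap\Lambda_{\omega}(\bm{D})$, hence carries at most $k_{\bm{D}}-1$ nonzero entries (the atom $\bm{d}_{\omega}$ itself being excluded, as $\omega\notin\Lambda$), each of magnitude at most $\mu(\bm{D})$. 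Inserting this into the quadratic form together with the Ahlberg--Nilson--Varah control of $(\bm{D}_{\Lambda}^{\ast}\bm{D}_{\Lambda})^{-1}$ that follows from the standing diagonal-dominance assumption on $\bm{D}^{\ast}\bm{D}$ --- the same device behind Eq.~\eqref{B1} --- collapses the resulting double sum onto $(k_{\bm{D}}-1)\mu(\bm{D})$; taking the maximum over $\omega\notin\Lambda$ and chaining back through the earlier steps yields the claimed inequality, with the unconditional estimate $\left\Vert\bm{D}^{\ast}(\bm{Y}-\bm{Y}_{p,\Lambda})\right\Vert_{\infty}\leq\left\Vert\bm{E}\right\Vert_{2}$ always available from the trivial bound $\left\Vert(\bm{I}-\bm{P}_{\Lambda})\bm{d}_{\omega}\right\Vert_{2}\leq1$. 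The delicate point is precisely that the orthogonal-projection residual has to be driven down to $(k_{\bm{D}}-1)\mu(\bm{D})$ rather than stopping at $1$, so that both the fewness (at most $k_{\bm{D}}-1$ correlated atoms) and the smallness (each inner product at most $\mu(\bm{D})$) of the off-diagonal Gram entries tied to a fixed atom are what carry the argument.
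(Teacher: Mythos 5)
Your opening reductions are the same as the paper's: since $\operatorname{supp}(\bm{\Gamma})\subseteq\Lambda$, the clean signal lies in the range of $\bm{D}_{\Lambda}$, so $\bm{Y}-\bm{Y}_{p,\Lambda}=(\bm{I}-\bm{P}_{\Lambda})\bm{E}$ and the coordinates of $\bm{D}^{\ast}(\bm{I}-\bm{P}_{\Lambda})\bm{E}$ indexed by $\Lambda$ vanish. The gap is the ``geometric estimate'' on which you make everything rest, $\max_{\omega\notin\Lambda}\Vert(\bm{I}-\bm{P}_{\Lambda})\bm{d}_{\omega}\Vert_{2}\leq\sqrt{(k_{\bm{D}}-1)\mu(\bm{D})}$: this inequality is false precisely in the regime of interest. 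Writing $\Vert(\bm{I}-\bm{P}_{\Lambda})\bm{d}_{\omega}\Vert_{2}^{2}=1-\Vert\bm{P}_{\Lambda}\bm{d}_{\omega}\Vert_{2}^{2}$, the ingredients you list (at most $k_{\bm{D}}-1$ nonzero entries of $\bm{D}_{\Lambda}^{\ast}\bm{d}_{\omega}$, each of modulus at most $\mu(\bm{D})$, plus the diagonal-dominance control of $(\bm{D}_{\Lambda}^{\ast}\bm{D}_{\Lambda})^{-1}$) give an \emph{upper} bound $\Vert\bm{P}_{\Lambda}\bm{d}_{\omega}\Vert_{2}^{2}\lesssim(k_{\bm{D}}-1)\mu(\bm{D})^{2}$, i.e.\ they show the projected atom is \emph{small}; what your claim would require is the \emph{lower} bound $\Vert\bm{P}_{\Lambda}\bm{d}_{\omega}\Vert_{2}^{2}\geq 1-(k_{\bm{D}}-1)\mu(\bm{D})$, and no such bound can hold: if $\bm{d}_{\omega}$ is orthogonal to every atom of $\Lambda$ (the generic low-coherence situation, e.g.\ a convolutional atom whose support is disjoint from the stripe of $\Lambda$), then $\Vert(\bm{I}-\bm{P}_{\Lambda})\bm{d}_{\omega}\Vert_{2}=1>\sqrt{(k_{\bm{D}}-1)\mu(\bm{D})}$ whenever $(k_{\bm{D}}-1)\mu(\bm{D})<1$. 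Low coherence works against your step, not for it, so the atom-by-atom Cauchy--Schwarz route cannot deliver more than the trivial bound $\Vert\bm{E}\Vert_{2}$.

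The paper takes a different, global route at exactly this point: it bounds $\Vert\bm{D}_{co\Lambda}^{\ast}(\bm{I}-\bm{P}_{\Lambda})\bm{E}\Vert_{\infty}$ by the $\ell_{2}$ norm of that whole vector, writes its square as the quadratic form $\langle(\bm{I}-\bm{P}_{\Lambda})\bm{E},\,\bm{D}_{co\Lambda}\bm{D}_{co\Lambda}^{\ast}(\bm{I}-\bm{P}_{\Lambda})\bm{E}\rangle$, controls the largest eigenvalue of $\bm{D}_{co\Lambda}^{\ast}\bm{D}_{co\Lambda}$ by Gershgorin, and finishes with $\Vert(\bm{I}-\bm{P}_{\Lambda})\bm{E}\Vert_{2}\leq\Vert\bm{E}\Vert_{2}$; the coherence thus enters through an operator norm of the entire off-support subdictionary, not through a single projected atom. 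Be aware, though, that even this chain places the eigenvalues in a Gershgorin interval centred at the unit diagonal, so it actually yields $\Vert\bm{E}\Vert_{2}\sqrt{1+(k_{\bm{D}}-1)\mu(\bm{D})}$ rather than the stated $\Vert\bm{E}\Vert_{2}\sqrt{(k_{\bm{D}}-1)\mu(\bm{D})}$; the orthogonal-atom example above (take $\bm{E}$ parallel to such a $\bm{d}_{\omega}$, which makes the left-hand side equal to $\Vert\bm{E}\Vert_{2}$) shows that the constant without the leading $1$ cannot be correct in general. So the failure of your sharpening is not a fixable technicality: any valid argument has to keep the ``$1+$'' inside the square root.
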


\begin{proof}
The first part of the proof is the same as the proof of Lemma 1 in \cite{papyan2016working}.
If $\bm{P}_{\Lambda}$ is the projection to the linear subspace spanned by the
columns of $\bm{D}_{\Lambda}$ we have that if $\bm{Y}_{p,\Lambda}=\bm{P}_{\Lambda}\bm{Y}$ then
\[
\bm{Y}-\bm{Y}_{p,\Lambda}\perp \bm{d}_{\omega},~\forall\omega\in\Lambda
\]
Thus $\left\Vert \bm{D}^{\ast}\left(  \bm{Y}-\bm{Y}_{p,\Lambda}\right)  \right\Vert _{\infty
}=\left\Vert \bm{D}_{co\Lambda}^{\ast}\left(  \bm{Y}-\bm{Y}_{p,\Lambda}\right)  \right\Vert
_{\infty}$ where $co\Lambda$ is the complement of $\Lambda$. Now as $\supp{}{\bm{\Gamma}}\subseteq\Lambda$ yields $\bm{X}=\bm{D}_{\Lambda}\bm{\Gamma}_{\Lambda}$ we have
\begin{align*}
& \left\Vert \bm{D}_{co\Lambda}^{\ast}\left(  \bm{Y}-\bm{Y}_{p,\Lambda}\right)  \right\Vert
_{\infty}=\left\Vert \bm{D}_{co\Lambda}^{\ast}\left(  \bm{I}-\bm{P}_{\Lambda}\right)
\bm{Y}\right\Vert _{\infty}=\\
& \left\Vert \bm{D}_{co\Lambda}^{\ast}\left(  \bm{I}-\bm{P}_{\Lambda
}\right)  \left(  \bm{D}_{\Lambda}\bm{\Gamma}_{\Lambda}+\bm{E}\right)  \right\Vert _{\infty}%
\end{align*}
Since $\bm{D}_{\Lambda}\bm{\Gamma}_{\Lambda}$ is in the linear subspace spanned by the
columns of $\bm{D}_{\Lambda}$ we have $\left(  \bm{I}-\bm{P}_{\Lambda}\right)  \bm{D}_{\Lambda
}\bm{\Gamma}_{\Lambda}=0$. So
\begin{align*}
& \left\Vert \bm{D}_{co\Lambda}^{\ast}\left(  \bm{Y}-\bm{Y}_{p,\Lambda}\right)  \right\Vert
_{\infty}  =\left\Vert \bm{D}_{co\Lambda}^{\ast}\left(  \bm{I}-\bm{P}_{\Lambda}\right)
\bm{E}\right\Vert _{\infty}\leq\\
& \left\Vert \bm{D}_{co\Lambda}^{\ast}\left(  \bm{I}-\bm{P}_{\Lambda}\right)  \bm{E}\right\Vert
_{2}  =\\
& \sqrt{\left\langle \bm{D}_{co\Lambda}^{\ast}\left(  \bm{I}-\bm{P}_{\Lambda}\right)
\bm{E},\bm{D}_{co\Lambda}^{\ast}\left(  \bm{I}-\bm{P}_{\Lambda}\right)  \bm{E}\right\rangle }=\\
& \sqrt{\left\langle \left(  \bm{I}-\bm{P}_{\Lambda}\right)  \bm{E},\bm{D}_{co\Lambda}\bm{D}_{co\Lambda
}^{\ast}\left(  \bm{I}-\bm{P}_{\Lambda}\right)  \bm{E}\right\rangle }  \leq\\
& \sqrt{\left\Vert
\left(  \bm{I}-\bm{P}_{\Lambda}\right)  \bm{E}\right\Vert _{2}\left\Vert \bm{D}_{co\Lambda
}\bm{D}_{co\Lambda}^{\ast}\left(  \bm{I}-\bm{P}_{\Lambda}\right)  \bm{E}\right\Vert _{2}}\leq\\
& \sqrt{\left\Vert \left(  \bm{I}-\bm{P}_{\Lambda}\right)  \bm{E}\right\Vert _{2}\left\Vert
\bm{D}_{co\Lambda}\bm{D}_{co\Lambda}^{\ast}\right\Vert _{2}\left\Vert \left(
\bm{I}-\bm{P}_{\Lambda}\right)  \bm{E}\right\Vert _{2}} =\\
& \left\Vert \left(
\bm{I}-\bm{P}_{\Lambda}\right)  \bm{E}\right\Vert _{2}\sqrt{\max_{j}\lambda_{j}\left(
\bm{D}_{co\Lambda}\bm{D}_{co\Lambda}^{\ast}\right)  }%
\end{align*}
where $\lambda_{j}\left(  \bm{D}_{co\Lambda}\bm{D}_{co\Lambda}^{\ast}\right)  $-s are
the eigenvalues of $\bm{D}_{co\Lambda}\bm{D}_{co\Lambda}^{\ast}$. Since the matrices
$\bm{D}_{co\Lambda}\bm{D}_{co\Lambda}^{\ast}$ and $\bm{D}_{co\Lambda}^{\ast}\bm{D}_{co\Lambda}$
have the same non zero eigenvalues, we have that $\max_{j}\lambda_{j}\left(
\bm{D}_{co\Lambda}\bm{D}_{co\Lambda}^{\ast}\right)  =$ $\max_{j}\lambda_{j}\left(
\bm{D}_{co\Lambda}^{\ast}\bm{D}_{co\Lambda}\right)  $. Using the trick of Lemma 1 in
\cite{papyan2017working}, by the Gerschgorin theorem if $\theta\in co\Lambda$ we have that
the eigenvalues of $\bm{D}_{co\Lambda}^{\ast}\bm{D}_{co\Lambda}$ lie in the Gerschgorin
circles yielding $\left\vert \lambda_{j}\left(  \bm{D}_{co\Lambda}^{\ast
}\bm{D}_{co\Lambda}\right)  -1\right\vert \leq$ $\max_{i\in co\Lambda}\sum_{j\in
co\Lambda,j\neq i}\left\vert \left\langle \bm{d}_{i},\bm{d}_{j}\right\rangle \right\vert
\leq\left(  k_{\bm{D}}-1\right)  \mu\left(  \bm{D}\right)  .$

As $\left(  \bm{I}-\bm{P}_{\Lambda}\right)  \bm{E}$ is the orthogonal component of $\bm{E}$
perpendicular to the subspace spanned by the columns of $\bm{D}_{\Lambda}$ we have
that $\left\Vert \left(  \bm{I}-\bm{P}_{\Lambda}\right)  \bm{E}\right\Vert _{2}%
\leq\left\Vert \bm{E}\right\Vert _{2}.~$Thus%
\[
\left\Vert \bm{D}^{\ast}\left(  \bm{Y}-\bm{Y}_{p,\Lambda}\right)  \right\Vert _{\infty}%
\leq\left\Vert \bm{E}\right\Vert _{2}\sqrt{\left(  k_{\bm{D}}-1\right)  \mu\left(
\bm{D}\right)  }%
\]

\end{proof}

An other approach is to use the properties of the support of $\bm{\Gamma}$, where
$\bm{\Gamma}$ is the perfect solution $\bm{X}=\bm{D}\bm{\Gamma}$ and the local amplitude of the error $\left\Vert \bm{E}\right\Vert _{L,\bm{D}}$ which was defined at the end of section \ref{SecBandN}. %
For a nice convolutional dictionary as in \cite{papyan2016working}, \cite{papyan2017working} the local amplitude
can be calculated easily.

We recall Lemma 1 of \cite{papyan2016working} which can be proved for $\left\Vert
\bm{\Gamma}\right\Vert _{0,st}$.

\begin{lemma}
\label{L2}Let $\bm{Y}=\bm{X}+\bm{E}$ where $\bm{X}=\bm{D}\bm{\Gamma}$ is a signal contaminated with noise
$\bm{E}$ and $\supp{}{\bm{\Gamma}}=\Lambda$. If $\bm{Y}_{p,\Lambda}$ is the projection of
$\bm{Y}$ to the subspace spanned by $\bm{D}_{\Lambda}$ and
\[
\left\Vert \bm{\Gamma}\right\Vert _{0,st}\leq\frac{\widehat{\lambda}}%
{\widehat{\lambda}+1}\left(  1+\frac{1}{\mu\left(  \bm{D}\right)  }\right)
\]
then we have
\[
\left\Vert \bm{D}^{\ast}\left(  \bm{Y}-\bm{Y}_{p,\Lambda}\right)  \right\Vert _{\infty}%
\leq\left(  1+\widehat{\lambda}\right)  \left\Vert \bm{E}\right\Vert _{L}%
\]

\end{lemma}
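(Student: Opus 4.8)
The plan is to run the same reduction as in the preceding lemma of this section (and in Lemma~1 of \cite{papyan2016working}), but to carry the residual through the \emph{local amplitude} $\left\Vert\bm{E}\right\Vert_{L}$ rather than through $\left\Vert\bm{E}\right\Vert_{2}$ --- this is exactly what replaces the factor $\sqrt{(k_{\bm{D}}-1)\mu(\bm{D})}$ by the cleaner $(1+\widehat{\lambda})$ for convolutional-type dictionaries. First I would observe that since $\supp{}{\bm{\Gamma}}=\Lambda$ the clean part $\bm{X}=\bm{D}_{\Lambda}\bm{\Gamma}_{\Lambda}$ lies in the column span of $\bm{D}_{\Lambda}$, so $\bm{Y}-\bm{Y}_{p,\Lambda}=\left(\bm{I}-\bm{P}_{\Lambda}\right)\bm{Y}=\left(\bm{I}-\bm{P}_{\Lambda}\right)\bm{E}$. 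Because the residual is orthogonal to every $\bm{d}_{\omega}$ with $\omega\in\Lambda$, it suffices to bound $\left\vert\left\langle\bm{d}_{\omega},\left(\bm{I}-\bm{P}_{\Lambda}\right)\bm{E}\right\rangle\right\vert$ for $\omega\notin\Lambda$ and then take the maximum over $\omega$.

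Next, using that $\bm{P}_{\Lambda}$ is self-adjoint I would split $\left\langle\bm{d}_{\omega},\left(\bm{I}-\bm{P}_{\Lambda}\right)\bm{E}\right\rangle=\left\langle\bm{d}_{\omega},\bm{E}\right\rangle-\left\langle\bm{P}_{\Lambda}\bm{d}_{\omega},\bm{E}\right\rangle$. For the first term, each atom is $\ell_{2}$-unit-normed and supported on $\supp{}{\bm{d}_{\omega}}$, so by Cauchy--Schwarz $\left\vert\left\langle\bm{d}_{\omega},\bm{E}\right\rangle\right\vert=\left\vert\left\langle\bm{d}_{\omega},\bm{E}_{\supp{}{\bm{d}_{\omega}}}\right\rangle\right\vert\leq\left\Vert\bm{E}_{\supp{}{\bm{d}_{\omega}}}\right\Vert_{2}\leq\left\Vert\bm{E}\right\Vert_{L}$. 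For the second term, write $\bm{P}_{\Lambda}\bm{d}_{\omega}=\bm{D}_{\Lambda}\bm{c}$ with $\bm{c}=\bm{D}_{\Lambda}^{\dagger}\bm{d}_{\omega}$; then $\left\langle\bm{P}_{\Lambda}\bm{d}_{\omega},\bm{E}\right\rangle=\sum_{\theta\in\Lambda}c_{\theta}\left\langle\bm{d}_{\theta},\bm{E}\right\rangle$, and the same per-atom estimate $\left\vert\left\langle\bm{d}_{\theta},\bm{E}\right\rangle\right\vert\leq\left\Vert\bm{E}\right\Vert_{L}$ gives $\left\vert\left\langle\bm{P}_{\Lambda}\bm{d}_{\omega},\bm{E}\right\rangle\right\vert\leq\left\Vert\bm{D}_{\Lambda}^{\dagger}\bm{d}_{\omega}\right\Vert_{1}\left\Vert\bm{E}\right\Vert_{L}$.

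It then remains to feed in the hypothesis on $\left\Vert\bm{\Gamma}\right\Vert_{0,st}$. The rearrangement behind Eq.~\eqref{E4} and Proposition~\ref{P1} --- solving $\frac{s\mu(\bm{D})}{1-(s-1)\mu(\bm{D})}\leq\widehat{\lambda}$ with $s=\left\Vert\bm{\Gamma}\right\Vert_{0,st}$ --- shows that $\left\Vert\bm{\Gamma}\right\Vert_{0,st}\leq\frac{\widehat{\lambda}}{\widehat{\lambda}+1}\left(1+\frac{1}{\mu(\bm{D})}\right)$ forces $\left\Vert\bm{D}_{\Lambda}^{\dagger}\bm{d}_{\omega}\right\Vert_{1}\leq\widehat{\lambda}$ for all $\omega\notin\Lambda$. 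Combining the two term bounds yields $\left\vert\left\langle\bm{d}_{\omega},\bm{Y}-\bm{Y}_{p,\Lambda}\right\rangle\right\vert\leq(1+\widehat{\lambda})\left\Vert\bm{E}\right\Vert_{L}$ for every $\omega$, hence the claimed $\ell_{\infty}$ bound. The only step that needs genuine care, as opposed to diagonal-dominance/Gershgorin bookkeeping already set up before Eq.~\eqref{E4}, is the passage from $\left\langle\bm{d}_{\theta},\bm{E}\right\rangle$ to $\left\Vert\bm{E}\right\Vert_{L}$: one must notice that each such inner product only probes $\bm{E}$ on the short support of $\bm{d}_{\theta}$, so that the natural pairing is ``$\ell_{1}$ of the projection coefficients'' against ``local amplitude of the noise'', which is precisely what makes the constant $1+\widehat{\lambda}$ appear.
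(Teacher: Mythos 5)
Your proposal is correct and follows essentially the same route as the paper's proof: you split $\bm{D}^{\ast}(\bm{I}-\bm{P}_{\Lambda})\bm{E}$ into the direct term and the projected term, bound each atomwise correlation by $\left\Vert\bm{E}\right\Vert_{L}$, and control $\left\Vert\bm{D}_{\Lambda}^{\dagger}\bm{d}_{\omega}\right\Vert_{1}$ by the stripe-norm/mutual-coherence estimate of Eq.~\eqref{E4}, which the hypothesis converts into the factor $\widehat{\lambda}$. The only cosmetic difference is that you phrase the projection-term bound via $\langle\bm{P}_{\Lambda}\bm{d}_{\omega},\bm{E}\rangle=\sum_{\theta\in\Lambda}c_{\theta}\langle\bm{d}_{\theta},\bm{E}\rangle$ with $\bm{c}=\bm{D}_{\Lambda}^{\dagger}\bm{d}_{\omega}$, whereas the paper writes the same estimate as $\left\Vert\bm{d}_{\omega}^{\ast}\bm{D}_{\Lambda}(\bm{D}_{\Lambda}^{\ast}\bm{D}_{\Lambda})^{-1}\right\Vert_{1}\left\Vert\bm{D}_{\Lambda}^{\ast}\bm{E}\right\Vert_{\infty}$.
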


\begin{proof}
The proof is the same as in the original lemma except in the estimates we have
to use $\left\Vert \bm{\Gamma}\right\Vert _{0,st}$ instead of $\left\Vert
\bm{\Gamma}\right\Vert _{0,\infty}$. As in the previous proof
\begin{align*}
\left\Vert \bm{D}^{\ast}\left(  \bm{Y}-\bm{Y}_{p,\Lambda}\right)  \right\Vert _{\infty} &
=\left\Vert \bm{D}_{co\Lambda}^{\ast}\left(  \bm{I}-\bm{P}_{\Lambda}\right)  \bm{E}\right\Vert
_{\infty}=\\
\max_{\omega\in co\Lambda}\left\vert \bm{d}_{\omega}^{\ast}\left(  \bm{I}-\bm{P}_{\Lambda
}\right)  \bm{E}\right\vert  &  \leq\max_{\omega\in co\Lambda}\left\vert \bm{d}_{\omega
}^{\ast}\bm{E}\right\vert +\left\vert \bm{d}_{\omega}^{\ast}\bm{P}_{\Lambda}\bm{E}\right\vert .
\end{align*}
We begin with the estimation of the first part $\max_{\omega\in co\Lambda}\left\vert \bm{d}_{\omega}^{\ast
}\bm{E}\right\vert =  \max_{\omega\in co\Lambda}\left\vert \left\langle \bm{d}_{\omega
},\bm{E}_{\supp{}{\bm{d}_{\omega}}}\right\rangle \right\vert \leq  \max_{\omega\in
co\Lambda}\left\Vert \bm{d}_{\omega}\right\Vert _{2}\left\Vert \bm{E}_{\supp{}{
\bm{d}_{\omega}}}\right\Vert _{2}=\left\Vert \bm{E}\right\Vert _{L}$ as $\bm{D}$ is
normalized.\newline For the second one $\max_{\omega\in co\Lambda}\left\vert
\bm{d}_{\omega}^{\ast}\bm{P}_{\Lambda}\bm{E}\right\vert =\max_{\omega\in co\Lambda
}\left\vert \bm{d}_{\omega}^{\ast}\bm{D}_{\Lambda}\left(  \bm{D}_{\Lambda}^{\ast}\bm{D}_{\Lambda
}\right)  ^{-1}\bm{D}_{\Lambda}^{\ast}\bm{E}\right\vert \leq\max_{\omega\in
co\Lambda}\left\Vert \bm{d}_{\omega}^{\ast}\bm{D}_{\Lambda}\left(  \bm{D}_{\Lambda}^{\ast
}\bm{D}_{\Lambda}\right)  ^{-1}\right\Vert _{1}\left\Vert \bm{D}_{\Lambda}^{\ast
}\bm{E}\right\Vert _{\infty}$. As before $\left\Vert \bm{D}_{\Lambda}^{\ast}\bm{E}\right\Vert
_{\infty}=\max_{\theta\in\Lambda}\left\vert \bm{d}_{\omega}^{\ast}\bm{E}\right\vert
\leq\left\Vert \bm{E}\right\Vert _{L}$ and $\max_{\omega\in co\Lambda}\left\Vert
\bm{d}_{\omega}^{\ast}\bm{D}_{\Lambda}\left(  \bm{D}_{\Lambda}^{\ast}\bm{D}_{\Lambda}\right)
^{-1}\right\Vert _{1}\leq\max_{\omega\in co\Lambda}\left\Vert \bm{d}_{\omega
}^{\ast}\bm{D}_{\Lambda}\right\Vert _{1}\left\Vert \left(  \bm{D}_{\Lambda}^{\ast
}\bm{D}_{\Lambda}\right)  ^{-1}\right\Vert _{1}.$ For $\omega\notin\Lambda
,~\left\Vert \bm{d}_{\omega}^{\ast}\bm{D}_{\Lambda}\right\Vert _{1}\leq$ $\left\Vert
\bm{\Gamma}\right\Vert _{0,st}\mu\left(  \bm{D}\right)  $ and for $\left\Vert \left(
\bm{D}_{\Lambda}^{\ast}\bm{D}_{\Lambda}\right)  ^{-1}\right\Vert _{1}\leq$ $\frac
{1}{1-\left(  \left\Vert \bm{\Gamma}\right\Vert _{0,st}-1\right)  \mu\left(
\bm{D}\right)  }$ holds as we saw in Eq.~\eqref{E4}. So we have%
\[
\max_{\omega\in co\Lambda}\left\vert \bm{d}_{\omega}^{\ast}\bm{P}_{\Lambda}\bm{E}\right\vert
\leq\frac{\left\Vert \bm{\Gamma}\right\Vert _{0,st}\mu\left(  \bm{D}\right)  }{1-\left(
\left\Vert \bm{\Gamma}\right\Vert _{0,st}-1\right)  \mu\left(  \bm{D}\right)
}\left\Vert \bm{E}\right\Vert _{L}.
\]
Since the condition $\frac{\left\Vert \bm{\Gamma}\right\Vert _{0,st}\mu\left(
\bm{D}\right)  }{1-\left(  \left\Vert \bm{\Gamma}\right\Vert _{0,st}-1\right)
\mu\left(  \bm{D}\right)  }<\widehat{\lambda}$ is equivalent to $\left\Vert
\bm{\Gamma}\right\Vert _{0,st}<\frac{\widehat{\lambda}}{\widehat{\lambda}+1}\left(
1+\frac{1}{\mu\left(  \bm{D}\right)  }\right)  $ by the proof of proposition
\ref{P1}, we conclude that $\left\Vert \bm{D}^{\ast}\left(  \bm{Y}-\bm{Y}_{p,\Lambda}\right)
\right\Vert _{\infty}\leq\left(  1+\widehat{\lambda}\right)  \left\Vert
\bm{E}\right\Vert _{L}$.
\end{proof}

Note that in the above lemma we can use $  \bm{\chi}_{\bm{\Gamma},\mathcal{G}}$ instead of $\bm{\Gamma}$ but then also $\Lambda$ must be changed to 
\[
\Lambda_{\ast}\overset{def}{=}\supp{}{\bm{\chi}_{\bm{\Gamma},\mathcal{G}}}.
\]
It's important to remember that if we do not use the $\ell_{2}$ norm in the $\eqref{GBP}$ problem, then $\Lambda_{\ast}=\supp{}{\bm{\Gamma}}$. Also note that $\widehat{\lambda}\geq0$, i.e. we do not need a restriction like
$\widehat{\lambda}\in\left[  0,1\right]  .$

Now we are able to prove theorem \ref{T2}.

\begin{proof}
[Proof of theorem \ref{T2}]The proof goes along the lines of the proof of
Theorem 6 in \cite{papyan2017working} with a slight modification. We want to use theorem
\ref{T1} for $\bm{Y}$. By our assumption and
proposition \ref{P1} we have that $\gamma_{\max} ERC_{\lambda \frac{\gamma_{\min}}{\gamma_{\max}} }\left(  \Lambda_{\ast}\right)
>0$ which is the first condition in theorem \ref{T1}. For the second one we
need $\left\Vert \bm{D}^{\ast}\left(  \bm{Y}-\bm{Y}_{p,\Lambda_{\ast}}\right)  \right\Vert
_{\infty}\leq\gamma_{\max} ERC_{\lambda \frac{\gamma_{\min}}{\gamma_{\max}} }\left(  \Lambda_{\ast}\right) .$ Let us use $\theta\overset{def}{=}\frac{\lambda\gamma_{\min}}{\gamma_{\max}}$ to be more compact. To use lemma \ref{L2} we need to calculate $\widehat{\lambda}$. If we solve
$c\frac{\theta}{1+\theta}=\frac{\widehat{\lambda}.}{1+\widehat{\lambda}.}$
we get $\widehat{\lambda}=\frac{c\theta}{1+\theta-c\theta}.$ So for this
$\widehat{\lambda}$ lemma \ref{L2} gives $\left\Vert \bm{D}^{\ast}\left(
\bm{Y}-\bm{Y}_{p,\Lambda_{\ast}}\right)  \right\Vert _{\infty}\leq\left(
1+\widehat{\lambda}\right)  \left\Vert \bm{E}\right\Vert _{L}$ thus if $\left(
1+\widehat{\lambda}\right)  \left\Vert \bm{E}\right\Vert _{L}\leq\gamma_{\max}
ERC_{\theta}\left(  \Lambda_{\ast}\right)  $ then the second condition holds
as well. Recall that by Eq.~\eqref{E4} we have $ERC_{\theta}\left(
\Lambda_{\ast}\right)  =\theta-\max_{\omega\notin\Lambda_{\ast}}\left\Vert
\bm{D}_{\Lambda_{\ast}}^{\dagger}\bm{d}_{\omega}\right\Vert _{1}\geq\theta
-\frac{\left\Vert \bm{\chi}_{\bm{\Gamma},\mathcal{G}}\right\Vert _{0,st}\mu\left(  \bm{D}\right)
}{1-\left(  \left\Vert \bm{\chi}_{\bm{\Gamma},\mathcal{G}}\right\Vert _{0,st}-1\right)  \mu\left(
\bm{D}\right)  }.$ Using our assumption%
\begin{align*}
& \theta-\frac{\left\Vert \bm{\chi}_{\bm{\Gamma},\mathcal{G}}\right\Vert _{0,st}\mu\left(  \bm{D}\right)
}{1-\left(  \left\Vert \bm{\chi}_{\bm{\Gamma},\mathcal{G}}\right\Vert _{0,st}-1\right)  \mu\left(
\bm{D}\right)  }\geq\\
& \theta-\frac{c\frac{\theta}{1+\theta}\left(  1+\frac{1}%
{\mu\left(  \bm{D}\right)  }\right)  \mu\left(  \bm{D}\right)  }{1-\left(
c\frac{\theta}{1+\theta}\left(  1+\frac{1}{\mu\left(  \bm{D}\right)  }\right)
-1\right)  \mu\left(  \bm{D}\right)  }.
\end{align*}
Thus $\left(  1+\widehat{\lambda}\right)  \left\Vert \bm{E}\right\Vert _{L}%
\leq\gamma_{\max} ERC_{\theta}\left(  \Lambda_{\ast}\right)  $ is satisfied if
\begin{align*}
& \left(  1+\frac{c\theta}{1+\theta-c\theta}\right)  \frac{\left\Vert
\bm{E}\right\Vert _{L}}{\gamma_{\max}}\leq\\
& \theta-\frac{c\frac{\theta}{1+\theta}\left(
1+\frac{1}{\mu\left(  \bm{D}\right)  }\right)  \mu\left(  \bm{D}\right)  }{1-\left(
c\frac{\theta}{1+\theta}\left(  1+\frac{1}{\mu\left(  \bm{D}\right)  }\right)
-1\right)  \mu\left(  \bm{D}\right)  }%
\end{align*}
which we solve for $\gamma$ to get
\[
\frac{1}{\theta\left(  1-c\right)  }\left\Vert \bm{E}\right\Vert _{L}\leq\gamma_{\max}
\]
which is equivalent to $\frac{1}{\lambda\left(  1-c\right)  }\left\Vert \bm{E}\right\Vert
_{L}\leq\gamma_{\min}$. Since this bound holds theorem \ref{T1} yields that the minimizer $\bm{\Gamma}
_{BP}$ of Eq.~\eqref{GBP} is the unique minimizer, $\supp{}{\bm{\Gamma}
_{BP}}\subseteq\Lambda_{\ast}.$ These prove 1) and 2). Theorem \ref{T1} also
yields $\left\Vert \bm{\Gamma}_{BP}-\bm{C}_{\Lambda_{\ast}}\right\Vert _{\infty}%
\leq\gamma_{\max}\left\Vert \left(  \bm{D}_{\Lambda_{\ast}}^{\ast}\bm{D}_{\Lambda_{\ast}%
}\right)  ^{-1}\right\Vert _{\infty,\infty}$ and $\supp{}{\bm{\Gamma}_{BP}}$
contains every index $\omega\in\Lambda_{\ast}$ for which $\left\vert
\bm{C}_{\Lambda\ast}\right\vert >\gamma_{\max}\left\Vert \left(  \bm{D}_{\Lambda_{\ast}}^{\ast
}\bm{D}_{\Lambda_{\ast}}\right)  ^{-1}\right\Vert _{\infty,\infty}$ where
$\bm{Y}_{p,\Lambda_{\ast}}=\bm{D}_{\Lambda_{\ast}}\bm{C}_{\Lambda_{\ast}}$.

To estimate $\left\Vert \bm{\Gamma}_{BP}-\bm{\Gamma}\right\Vert _{\infty}$we need an
estimate on $\left\Vert \bm{C}_{\Lambda_{\ast}}-\bm{\Gamma}\right\Vert _{\infty}$ and
use the triangle inequality. Since $\supp{}{\bm{\Gamma}}\subseteq\Lambda_{\ast
}$ and $\bm{X}=\bm{D}\bm{\Gamma}$ we have $\bm{\Gamma}=\left(  \bm{D}_{\Lambda_{\ast}}^{\ast}%
\bm{D}_{\Lambda_{\ast}}\right)  ^{-1}\bm{D}_{\Lambda_{\ast}}^{\ast}\bm{X}$ moreover
$\bm{Y}_{p,\Lambda_{\ast}}=\bm{D}_{\Lambda_{\ast}}\bm{C}_{\Lambda_{\ast}}$ gives $\left(
\bm{D}_{\Lambda_{\ast}}^{\ast}\bm{D}_{\Lambda_{\ast}}\right)  ^{-1}\bm{D}_{\Lambda_{\ast}%
}^{\ast}\bm{Y}_{p,\Lambda_{\ast}}=\bm{C}_{\Lambda_{\ast}}$. Since $\bm{Y}-\bm{Y}_{p,\Lambda_{\ast
}}$ is orthogonal to the linear space spanned by the columns of $\bm{D}_{\Lambda
_{\ast}}$ we have $\bm{D}_{\Lambda_{\ast}}^{\ast}\left(  \bm{Y}-\bm{Y}_{p,\Lambda_{\ast}%
}\right)  $ which gives $\left(  \bm{D}_{\Lambda_{\ast}}^{\ast}\bm{D}_{\Lambda_{\ast}%
}\right)  ^{-1}\bm{D}_{\Lambda_{\ast}}^{\ast}\bm{Y}=\left(  \bm{D}_{\Lambda_{\ast}}^{\ast
}\bm{D}_{\Lambda_{\ast}}\right)  ^{-1}\bm{D}_{\Lambda_{\ast}}^{\ast}\left(
\bm{Y}-\bm{Y}_{p,\Lambda_{\ast}}+\bm{Y}_{p,\Lambda_{\ast}}\right)  =\left(  \bm{D}_{\Lambda_{\ast
}}^{\ast}\bm{D}_{\Lambda_{\ast}}\right)  ^{-1}\bm{D}_{\Lambda_{\ast}}^{\ast}%
\bm{Y}_{p,\Lambda_{\ast}}$. Now we can proceed:
\begin{align*}
& \left\Vert \bm{C}_{\Lambda_{\ast}}-\bm{\Gamma}\right\Vert _{\infty}  =\left\Vert
\left(  \bm{D}_{\Lambda_{\ast}}^{\ast}\bm{D}_{\Lambda_{\ast}}\right)  ^{-1}%
\bm{D}_{\Lambda_{\ast}}^{\ast}\left(  \bm{Y}-\bm{X}\right)  \right\Vert _{\infty}\leq\\
& \left\Vert \left(  \bm{D}_{\Lambda_{\ast}}^{\ast}\bm{D}_{\Lambda_{\ast}}\right)
^{-1}\right\Vert _{\infty}\left\Vert \bm{D}_{\Lambda_{\ast}}^{\ast}\bm{E}\right\Vert
_{\infty} \leq\left\Vert \left(  \bm{D}_{\Lambda_{\ast}}^{\ast}\bm{D}_{\Lambda_{\ast
}}\right)  ^{-1}\right\Vert _{\infty}\left\Vert \bm{E}\right\Vert _{L}%
\end{align*}
where $\left\Vert \bm{D}_{\Lambda_{\ast}}^{\ast}\bm{E}\right\Vert _{\infty}%
\leq\left\Vert \bm{E}\right\Vert _{L}$ was proved in the proof of lemma \ref{L2}.
So we have%
\begin{align*}
& \left\Vert \bm{\Gamma}_{BP}-\bm{\Gamma}\right\Vert _{\infty}  \leq\left\Vert
\bm{\Gamma}_{BP}-\bm{C}_{\Lambda_{\ast}}\right\Vert _{\infty}+\left\Vert \bm{C}_{\Lambda
_{\ast}}-\bm{\Gamma}\right\Vert _{\infty}\leq\\
&  \gamma_{\max}\left\Vert \left(  \bm{D}_{\Lambda_{\ast}}^{\ast}\bm{D}_{\Lambda_{\ast}%
}\right)  ^{-1}\right\Vert _{\infty}+\left\Vert \left(  \bm{D}_{\Lambda_{\ast}%
}^{\ast}\bm{D}_{\Lambda_{\ast}}\right)  ^{-1}\right\Vert _{\infty}\left\Vert
\bm{E}\right\Vert _{L}%
\end{align*}
Now if we set $\gamma_{\max}=\frac{1}{\theta\left(  1-c\right)  }\left\Vert
\bm{E}\right\Vert _{L}$ then%
\[
\left\Vert \bm{\Gamma}_{BP}-\bm{\Gamma}\right\Vert _{\infty}\leq\frac{1+\theta-\theta
c}{\theta\left(  1-c\right)  }\left\Vert \left(  \bm{D}_{\Lambda_{\ast}}^{\ast
}\bm{D}_{\Lambda_{\ast}}\right)  ^{-1}\right\Vert _{\infty}\left\Vert \bm{E}\right\Vert
_{L}%
\]

As $\left\Vert \left(  \bm{D}_{\Lambda_{\ast}}^{\ast}\bm{D}_{\Lambda_{\ast}}\right)
^{-1}\right\Vert _{\infty}=\left\Vert \left(  \bm{D}_{\Lambda_{\ast}}^{\ast
}\bm{D}_{\Lambda_{\ast}}\right)  ^{-1}\right\Vert _{1}$ we can use equation
Eq.~\eqref{E5}%
\begin{align}
& \left\Vert \left(  \bm{D}_{\Lambda_{\ast}}^{\ast}\bm{D}_{\Lambda_{\ast}}\right)
^{-1}\right\Vert _{\infty}  \leq\frac{1}{1-\left(  \left\Vert \bm{\chi}_{\bm{\Gamma},\mathcal{G}}\right\Vert _{0,st}-1\right)  \mu\left(  \bm{D}\right)  }\leq\label{E6}\\
& \frac{1}{1-\left(  c\frac{\theta}{1+\theta}\left(  1+\frac{1}{\mu\left(
\bm{D}\right)  }\right)  -1\right)  \mu\left(  \bm{D}\right)  }  =\\
& \frac{1+\theta
}{\left(  1+\mu\left(  \bm{D}\right)  \right)  \left(  1+\theta-c\theta\right)
}.\nonumber
\end{align}
With the above inequality we get%
\[
\left\Vert \bm{\Gamma}_{BP}-\bm{\Gamma}\right\Vert _{\infty}\leq\frac{1+\theta}{\left(
1+\mu\left(  \bm{D}\right)  \right)  \theta\left(  1-c\right)  }\left\Vert
\bm{E}\right\Vert _{L}%
\]
which is 3) and which implies 4).
\end{proof}

\subsection{Multi Layered extension}\label{SSmulti}

In the original theorem \cite{papyan2016working} $\lambda=1$ as the $\ell_{1}$ norm was used
and $c=\frac{2}{3}$ was chosen. For these we get back $\left\Vert \bm{\chi}_{\bm{\Gamma},\mathcal{G}}\right\Vert _{0,st}<\frac{1}{3}\left(  1+\frac{1}{\mu\left(  \bm{D}\right)
}\right)  $ but get $3\left\Vert \bm{E}\right\Vert _{L}=\gamma$ instead of
$4\left\Vert \bm{E}\right\Vert _{L}$. The weaker constant in 3) and 4) is
$6\left\Vert \bm{E}\right\Vert _{L}$ instead of $7,5\left\Vert \bm{E}\right\Vert _{L}$.

Now we proceed with the extension of Theorem 12 in \cite{papyan2017convolutional} to the layered case and prove theorem \ref{T3} to show how the result extends to the layered problem, and to show how the error accumulation can occur in this case. 

Given $\bm{X}=\bm{D}\bm{\Gamma}$,
we may assume that $\bm{\Gamma}$ can be further decomposed in a way similar to $\bm{X}$:
\begin{align}
\bm{X} &=\bm{D}_{1}\bm{\Gamma}_{1},\nonumber\\
\bm{\Gamma}_{1} &=\bm{D}_{2}\bm{\Gamma}_{2},\nonumber\\
&\vdots\nonumber\\
\bm{\Gamma}_{K-1} &=\bm{D}_{K}\bm{\Gamma}_{K}.\label{La1}
\end{align}
The layered problem then tries to recover $\bm{\Gamma}_1,\dots,\bm{\Gamma}_K$.

\begin{definition}
The Layered Group Basis Pursuit (LGBP) first solves the Sparse Coding problem $\bm{X}=\bm{D}_{1}\bm{\Gamma}^{1}$ via Eq.~\eqref{GBP} with parameters
$\bm{\gamma}^{1}$, group-partitioning $\mathcal{G}^1\overset{def}{=}\left\{\mathcal{G}^1_{i}\right\}$ and norms $\bm{l}^1$ corresponding to the groups, obtaining $\hat{\bm{\Gamma}}_{1}$ as a solution.
Next, it solves another Sparse Coding problem $\hat{\bm{\Gamma}}_{1}=\bm{D}_{2}\bm{\Gamma}_{2}$ again by Eq.~\eqref{GBP}  with parameter
$\bm{\gamma}^{2}$, group-partitioning $\left\{\mathcal{G}^2_{i}\right\}$ and norms $\bm{l}^2$ corresponding to the groups, denoting the result by $\hat{\bm{\Gamma}}_{2}$ and so on. The final vector $\hat{\bm{\Gamma}}_{K}$ is the solution of (LGBP).
The vector $\bm{\gamma}^{LGBP}$ contains all the weights of $\bm{\gamma}^i$ in Eq.~\eqref{GBP} for each layer $i$. Moreover, on each layer we have a $\lambda_i$ as in Theorem \ref{T2} and let the vector $\bm{\lambda}^{LGBP}$ contain these constants. Let $\theta_{i}=\frac{\lambda_{i}\gamma_{\min}}{\gamma_{\max}}$ and the vector $\bm{\theta}^{LGBP}$ collect these. Let $\bm{\chi}_{\bm{\Gamma}_{i},\mathcal{G}^{i}}$ denote the 2-norm group characteristic vector for the $i$-th layer of the perfect solution and $\gamma^{i}_{\max},\,\gamma^{i}_{\max}$ the min and max value corresponding to $\bm{\gamma}^{i}$.
\end{definition}

Using Theorem \ref{T2} and the above definition we have the following:

\begin{theorem}
\label{T3}Suppose the clean signal $\bm{X}$ has a layered decomposition as in ~\eqref{La1} and $\bm{Y}=\bm{X}+\bm{E}$ is a noisy signal.
Let $\left\{
\hat{\bm{\Gamma}}_{i}\right\}  _{i=1}^{K}$ be the set of solutions of (LGBP) with parameters $\bm{\gamma}^{LGBP},~\bm{\lambda}^{LGBP}$.
Assume that the following hold:
\begin{enumerate}
\item[a)] $\left\Vert \bm{\chi}_{\bm{\Gamma}_{i},\mathcal{G}^{i}}\right\Vert _{0,st,\bm{D}_{i}}\leq c_{i}\frac{\theta_{i}}%
{1+\theta_{i}}\left(  1+\frac{1}{\mu\left(  \bm{D}_{i}\right)  }\right)  $,
\end{enumerate}
where $\left\Vert \bm{\chi}_{\bm{\Gamma}_{i},\mathcal{G}^{i}}\right\Vert _{0,st,\bm{D}_{i}}$ is calculated with respect to the dictionary
$\bm{D}_{i}$ and $c_{i}\in\left(0,1\right)$ is a suitable constant; and
\begin{enumerate}
\item[b)] $\gamma^{i}_{\min}=$ $\frac{1}{\lambda_{i}\left(
1-c_{i}\right)  }\epsilon_{i-1}$,
\end{enumerate} 
with $\epsilon_{0}=\left\Vert
\bm{E}\right\Vert _{L,\bm{D}_{1}}$, 
$\left\Vert \bm{\chi}_{\bm{\Gamma}_{K},\mathcal{G}^{K}}\right\Vert _{0,\bm{D}_{K+1}}$
$\overset{def}{=}\left\Vert \bm{\chi}_{\bm{\Gamma}_{K},\mathcal{G}^{K}}\right\Vert
_{0}$ and
\begin{equation}
\text{{\scriptsize
$\epsilon_{i}=
\epsilon_{0}
\prod_{j=1}^{i}\Bigl( \sqrt{\left\Vert \bm{\chi}_{\bm{\Gamma}_{j},\mathcal{G}^{j}}\right\Vert _{0,\bm{D}_{j+1}}}\frac{1+\theta_{j}}{\left(  1+\mu\left(
\bm{D}_{j}\right)  \right)  \theta_{j}\left(  1-c_{j}\right)  }\Bigr)$.
}}
\label{lgbp_err}
\end{equation}
Then
\begin{enumerate}
\item[1)] $\supp{}{\hat{\bm{\Gamma}}_{i}}\subseteq\supp{}{\bm{\chi}_{\bm{\Gamma}_{i},\mathcal{G}^{i}}}$, 
\item[2)] $\left\Vert \bm{\Gamma}
_{i}-\hat{\bm{\Gamma}}_{i}\right\Vert _{L,\bm{D}_{i}}\leq\epsilon_{i}$,
\item[3)]
{\small{$\left\{j~\Big|~| \left(  \bm{\Gamma}_{i}\right)_{j}|
>\frac{1+\theta_{i}}{\left(
1+\mu\left(  \bm{D}_{i}\right)  \right)  \theta_{i}\left(  1-c_{i}\right)
}\epsilon_{i-1}\right\}\subseteq\supp{}{\hat{\bm{\Gamma}}_{i}}$}},

\item[4)] the solution $\hat{\bm{\Gamma}}_{i}$ is the unique
solution of Eq.~\eqref{GBP} with the corresponding norm and parameter.
\end{enumerate}

\end{theorem}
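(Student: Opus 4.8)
The plan is to prove Theorem~\ref{T3} by induction on the layer index $i$, treating the $i$-th sub-problem of (LGBP) as a single instance of~\eqref{GBP} to which Theorem~\ref{T2} applies, and then converting the resulting $\ell_\infty$ error into a local-amplitude error that serves as the noise level of layer $i+1$. Write $\bm{\Gamma}_0:=\bm{X}$, $\hat{\bm{\Gamma}}_0:=\bm{Y}$ and $\bm{E}_{i-1}:=\hat{\bm{\Gamma}}_{i-1}-\bm{\Gamma}_{i-1}$; each $\bm{D}_i$ is assumed $\ell_2$-normalized as in the layered set-up. By~\eqref{La1} we have the exact identity $\bm{\Gamma}_{i-1}=\bm{D}_i\bm{\Gamma}_i$, so the $i$-th step of (LGBP) — solving $\hat{\bm{\Gamma}}_{i-1}=\bm{D}_i\bm{\Gamma}_i$ through~\eqref{GBP} with data $\bm{\gamma}^i,\mathcal{G}^i,\bm{l}^i$ — is precisely the setting of Theorem~\ref{T2} with clean signal $\bm{\Gamma}_{i-1}$, dictionary $\bm{D}_i$, clean code $\bm{\Gamma}_i$, perturbed signal $\hat{\bm{\Gamma}}_{i-1}$ and perturbation $\bm{E}_{i-1}$. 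The induction hypothesis carried into step $i$ is $\|\bm{E}_{i-1}\|_{L,\bm{D}_i}\le\epsilon_{i-1}$, which holds at $i=1$ since $\epsilon_0=\|\bm{E}\|_{L,\bm{D}_1}$ by definition.

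\textbf{Applying Theorem~\ref{T2} at layer $i$.} Assumption~a) of Theorem~\ref{T3} is exactly hypothesis~a) of Theorem~\ref{T2} for $(\bm{D}_i,\mathcal{G}^i,\theta_i,c_i)$; since $c_i\theta_i/(1+\theta_i)<1$, the integer $\|\bm{\chi}_{\bm{\Gamma}_i,\mathcal{G}^i}\|_{0,st,\bm{D}_i}$ is strictly below $1+1/\mu(\bm{D}_i)$, so the Gram matrix of $\bm{D}_{i,\Lambda_*}$ with $\Lambda_*:=\supp{}{\bm{\chi}_{\bm{\Gamma}_i,\mathcal{G}^i}}$ is strictly diagonally dominant (the Gershgorin estimate underlying~\eqref{E5}) and hence $\bm{D}_{i,\Lambda_*}$ has full column rank. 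For hypothesis~b), assumption~b) gives $\gamma^i_{\min}=\frac{1}{\lambda_i(1-c_i)}\epsilon_{i-1}\ge\frac{1}{\lambda_i(1-c_i)}\|\bm{E}_{i-1}\|_{L,\bm{D}_i}$ by the induction hypothesis. Thus Theorem~\ref{T2} yields $\supp{}{\hat{\bm{\Gamma}}_i}\subseteq\supp{}{\bm{\chi}_{\bm{\Gamma}_i,\mathcal{G}^i}}$ (conclusion 1) and uniqueness of $\hat{\bm{\Gamma}}_i$ (conclusion 4). By the definition $\theta_i=\lambda_i\gamma^i_{\min}/\gamma^i_{\max}$, the choice of $\gamma^i_{\min}$ means $\gamma^i_{\max}=\frac{1}{\theta_i(1-c_i)}\epsilon_{i-1}$; feeding this $\gamma^i_{\max}$ together with $\|\bm{E}_{i-1}\|_{L,\bm{D}_i}\le\epsilon_{i-1}$ into the final estimate in the proof of Theorem~\ref{T2} — which before~\eqref{E5} is substituted reads $\|\bm{\Gamma}_{GBP}-\bm{\Gamma}\|_\infty\le\|(\bm{D}_{\Lambda_*}^\ast\bm{D}_{\Lambda_*})^{-1}\|_\infty(\gamma_{\max}+\|\bm{E}\|_L)$ and uses $\|\bm{E}\|_L$ only monotonically — gives
\[
\|\bm{\Gamma}_i-\hat{\bm{\Gamma}}_i\|_\infty\ \le\ \frac{1+\theta_i}{(1+\mu(\bm{D}_i))\,\theta_i\,(1-c_i)}\,\epsilon_{i-1}.
\]
The same chain, combined with the threshold part of Theorem~\ref{T1}/\ref{T2}, gives conclusion 3) at layer $i$: an index with $|(\bm{\Gamma}_i)_j|$ above the right-hand side a fortiori exceeds $\|(\bm{D}_{i,\Lambda_*}^\ast\bm{D}_{i,\Lambda_*})^{-1}\|_\infty(\gamma^i_{\max}+\|\bm{E}_{i-1}\|_{L,\bm{D}_i})$ and hence lies in $\supp{}{\hat{\bm{\Gamma}}_i}$.

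\textbf{Closing the induction.} Both $\bm{\Gamma}_i$ and $\hat{\bm{\Gamma}}_i$ are supported in $\Lambda_*$, hence so is $\bm{\Gamma}_i-\hat{\bm{\Gamma}}_i$; therefore for every atom $\bm{d}$ of $\bm{D}_{i+1}$ the restriction $(\bm{\Gamma}_i-\hat{\bm{\Gamma}}_i)_{\supp{}{\bm{d}}}$ has at most $\|\bm{\chi}_{\bm{\Gamma}_i,\mathcal{G}^i}\|_{0,\bm{D}_{i+1}}$ nonzero coordinates by the definition of that quantity, so $\|\bm v\|_2\le\sqrt{\|\bm v\|_0}\,\|\bm v\|_\infty$ and a maximum over atoms give
\[
\|\bm{\Gamma}_i-\hat{\bm{\Gamma}}_i\|_{L,\bm{D}_{i+1}}\ \le\ \sqrt{\|\bm{\chi}_{\bm{\Gamma}_i,\mathcal{G}^i}\|_{0,\bm{D}_{i+1}}}\ \|\bm{\Gamma}_i-\hat{\bm{\Gamma}}_i\|_\infty\ \le\ \epsilon_i,
\]
the last step being the displayed $\ell_\infty$ bound times the $i$-th factor of~\eqref{lgbp_err} (for $i=K$ one uses the convention $\|\bm{\chi}_{\bm{\Gamma}_K,\mathcal{G}^K}\|_{0,\bm{D}_{K+1}}=\|\bm{\chi}_{\bm{\Gamma}_K,\mathcal{G}^K}\|_0$, i.e.\ the plain inequality $\|\cdot\|_2\le\sqrt{\|\cdot\|_0}\,\|\cdot\|_\infty$). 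This is conclusion 2) at layer $i$ and is exactly the hypothesis $\|\bm{E}_i\|_{L,\bm{D}_{i+1}}\le\epsilon_i$ needed to begin step $i+1$. Running the induction from $i=1$ to $i=K$ establishes all four conclusions.

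\textbf{Main obstacle.} Everything except one step is a bookkeeping rerun of Theorem~\ref{T2}; the genuinely new ingredient is the per-layer change of norm from $\|\cdot\|_\infty$ to $\|\cdot\|_{L,\bm{D}_{i+1}}$, and the factor $\sqrt{\|\bm{\chi}_{\bm{\Gamma}_i,\mathcal{G}^i}\|_{0,\bm{D}_{i+1}}}$ it costs is precisely what makes the bounds in~\eqref{lgbp_err} grow multiplicatively across layers — i.e.\ the error accumulation of (LGBP) that Theorem~\ref{T3} quantifies. The delicate points are: keeping the stripe norms, the characteristic vectors $\bm{\chi}_{\bm{\Gamma}_i,\mathcal{G}^i}$, the coherences $\mu(\bm{D}_i)$ and the constants $\theta_i,c_i,\gamma^i_{\min}$ attached to the correct dictionary at each layer; and checking that the slack $\|\bm{E}_{i-1}\|_{L,\bm{D}_i}\le\epsilon_{i-1}$ (rather than equality) does not invalidate the parts of Theorem~\ref{T2} stated with $\gamma_{\min}=\frac{1}{\lambda(1-c)}\|\bm{E}\|_L$, which is why those parts are rederived above rather than quoted verbatim.
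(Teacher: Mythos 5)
Your proposal is correct and follows essentially the same route as the paper: induct over layers, apply Theorem~\ref{T2} to the $i$-th sub-problem with clean signal $\bm{\Gamma}_{i-1}$ and noise $\bm{E}_{i-1}=\hat{\bm{\Gamma}}_{i-1}-\bm{\Gamma}_{i-1}$, then pass from the $\ell_\infty$ bound to $\left\Vert\cdot\right\Vert_{L,\bm{D}_{i+1}}$ via the $\sqrt{\left\Vert \bm{\chi}_{\bm{\Gamma}_{i},\mathcal{G}^{i}}\right\Vert _{0,\bm{D}_{i+1}}}$ factor to obtain $\epsilon_i$ as the next layer's noise level. Your extra care in rederiving parts 3)--4) of Theorem~\ref{T2} so that the slack $\left\Vert\bm{E}_{i-1}\right\Vert_{L,\bm{D}_i}\le\epsilon_{i-1}$ (rather than equality) suffices is a valid tightening of a point the paper passes over implicitly.
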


\begin{proof}
The proof is the same as that of Theorem 12 in
\cite{papyan2017convolutional} but with the modified tools. For the problem $\bm{Y}=\bm{X}+\bm{E}$ with
$\bm{X}=\bm{D}_{1}\bm{\Gamma}_{1}$ the assumptions of theorem \ref{T2} hold as $\gamma^{1}_{\min}=\frac{1}{\lambda_{1}\left(  1-c_{1}\right)  }\left\Vert
\bm{E}\right\Vert _{L,\bm{D}_{1}}$ and $\left\Vert\bm{\chi}_{\bm{\Gamma}_{1},\mathcal{G}^{1}}\right\Vert _{0,st,\bm{D}_{1}}\leq c_{1}\frac{\theta_{1}}{1+\theta
_{1}}\left(  1+\frac{1}{\mu\left(  \bm{D}_{1}\right)  }\right)  $ thus the
first layer of (LGBP) yields that

A) $\supp{}{\hat{\bm{\Gamma}}_{1}}\subseteq \supp{}{\bm{\chi}_{\bm{\Gamma}_{1},\mathcal{G}^{1}}}$

B) the minimizer of Eq.~\eqref{GBP} with the corresponding norm and parameters is unique

C) $\left\Vert \hat{\bm{\Gamma}}_{1}-\bm{\Gamma}_{1}\right\Vert _{\infty}%
<\frac{1+\theta_{1}}{\left(  1+\mu\left(  \bm{D}_{1}\right)  \right)  \theta
_{1}\left(  1-c_{1}\right)  }\left\Vert \bm{E}\right\Vert _{L,\bm{D}_{1}}$

D) $\supp{}{\hat{\bm{\Gamma}}_{1}}$ contains every index $i$ for which $\left\vert
\left(  \bm{\Gamma}_{1}\right)  _{i}\right\vert >\frac{1+\theta_{1}}{\left(
1+\mu\left(  \bm{D}_{1}\right)  \right)  \theta_{1}\left(  1-c_{1}\right)
}\left\Vert \bm{E}\right\Vert _{L,\bm{D}_{1}}$

Now we estimate $\left\Vert \hat{\bm{\Gamma}}_{1}-\bm{\Gamma}_{1}\right\Vert
_{L,\bm{D}_{2}}$ with respect to $\bm{D}_{2}$ since this will yield the new error.
\begin{align*}
& \left\Vert \hat{\bm{\Gamma}}_{1}-\bm{\Gamma}_{1}\right\Vert _{L,\bm{D}_{2}}  \leq
\sqrt{\left\Vert \hat{\bm{\Gamma}}_{1}-\bm{\Gamma}_{1}\right\Vert _{0,\bm{D}_{2}}%
}\left\Vert \hat{\bm{\Gamma}}_{1}-\bm{\Gamma}_{1}\right\Vert _{\infty}\leq\\
&  \sqrt{\left\Vert \hat{\bm{\Gamma}}_{1}-\bm{\Gamma}_{1}\right\Vert _{0,\bm{D}_{2}}%
}\frac{1+\theta_{1}}{\left(  1+\mu\left(  \bm{D}_{1}\right)  \right)  \theta
_{1}\left(  1-c_{1}\right)  }\left\Vert \bm{E}\right\Vert _{L,\bm{D}_{1}}%
\end{align*}
Using A) we have that $\left\Vert \hat{\bm{\Gamma}}_{1}-\bm{\Gamma}_{1}\right\Vert
_{0,\bm{D}_{2}}\leq\left\Vert \bm{\chi}_{\bm{\Gamma}_{1},\mathcal{G}^{1}}\right\Vert
_{0,\bm{D}_{2}}$ and we get
\begin{align*}
& \left\Vert \hat{\bm{\Gamma}}_{1}-\bm{\Gamma}_{1}\right\Vert _{L,\bm{D}_{2}}\leq\\
& \sqrt{\left\Vert \bm{\chi}_{\bm{\Gamma}_{1},\mathcal{G}^{1}}\right\Vert _{0,\bm{D}_{2}}%
}\frac{1+\theta_{1}}{\left(  1+\mu\left(  \bm{D}_{1}\right)  \right)  \theta
_{1}\left(  1-c_{1}\right)  }\left\Vert \bm{E}\right\Vert _{L,\bm{D}_{1}}=\epsilon_{1}%
\end{align*}
thus we have A), B) we can rewrite the bound in C) and D) to $\frac
{\epsilon_{1}}{\sqrt{\left\Vert \bm{\chi}_{\bm{\Gamma}_{1},\mathcal{G}^{1}}\right\Vert
_{0,\bm{D}_{2}}}}$ and we proved

E) $\left\Vert \hat{\bm{\Gamma}}_{1}-\bm{\Gamma}_{1}\right\Vert _{L,\bm{D}_{2}}%
\leq\epsilon_{1}$

If $\bm{E}_{1}\overset{def}{=}\hat{\bm{\Gamma}}_{1}-\bm{\Gamma}_{1}$ then at the next
layer we have $\hat{\bm{\Gamma}}_{1}=\bm{D}_{2}\bm{\Gamma}_{2}+\bm{E}_{1}.$ Since E) holds and
$\left\Vert \bm{\chi}_{\bm{\Gamma}_{2},\mathcal{G}^{2}}\right\Vert _{0,st,\bm{D}_{2}}\leq
c_{2}\frac{\theta_{2}}{1+\theta_{2}}\left(  1+\frac{1}%
{\mu\left(  \bm{D}_{2}\right)  }\right)  $ was assumed we can one again use theorem
\ref{T2} and choosing $\gamma^{2}_{\min}=\frac{1}{\lambda_{2}\left(
1-c_{2}\right)  }\epsilon_{1}$and get that

A) $\supp{}{\hat{\bm{\Gamma}}_{2}}\subseteq \supp{}{\bm{\chi}_{\bm{\Gamma}_{2},\mathcal{G}^{2}}}$

B)The minimizer of Eq.~\eqref{GBP} with the corresponding norm and parameters is unique.

C) $\left\Vert \hat{\bm{\Gamma}}_{2}-\bm{\Gamma}_{2}\right\Vert _{\infty}%
<\frac{1+\theta_{2}}{\left(  1+\mu\left(  \bm{D}_{2}\right)  \right)  \theta
_{2}\left(  1-c_{2}\right)  }\epsilon_{1}$

D) $\supp{}{\hat{\bm{\Gamma}}_{2}}$ contains every index $i$ for which $\left\vert
\left(  \bm{\Gamma}_{2}\right)  _{i}\right\vert >\frac{1+\theta_{2}}{\left(
1+\mu\left(  \bm{D}_{2}\right)  \right)  \theta_{2}\left(  1-c_{2}\right)
}\epsilon_{1}$

As before we can prove \
\begin{align*}
& \left\Vert \hat{\bm{\Gamma}}_{2}-\bm{\Gamma}_{2}\right\Vert _{L,\bm{D}_{3}}\leq\\
& \sqrt{\left\Vert\bm{\chi}_{\bm{\Gamma}_{2},\mathcal{G}^{2}}\right\Vert _{0,\bm{D}_{3}}%
}\frac{1+\theta_{2}}{\left(  1+\mu\left(  \bm{D}_{2}\right)  \right)  \theta
_{2}\left(  1-c_{2}\right)  }\epsilon_{1}=\epsilon_{2}%
\end{align*}
and proceed with the proof. Thus in general $\epsilon_{i}=\left\Vert
\bm{E}\right\Vert _{L,\bm{D}_{1}}\Pi_{j=1}^{i}\left(  \sqrt{\left\Vert \bm{\chi}_{\bm{\Gamma}_{j},\mathcal{G}^{j}}\right\Vert _{0,\bm{D}_{j+1}}}\frac{1+\theta_{j}
}{\left(  1+\mu\left(  \bm{D}_{j}\right)  \right)  \theta_{j}\left(
1-c_{j}\right)  }\right)  $
\end{proof}

\bigskip As we mentioned at theorem \ref{T2} instead of $\frac{1+\theta
}{\left(  1+\mu\left(  \bm{D}\right)  \right)  \theta\left(  1-c\right)  }$ we
could use the bound $\frac{1+\theta}{\theta\left(  1-c\right)  }$ in this
case
\[
\epsilon_{i}=\left\Vert \bm{E}\right\Vert _{L,\bm{D}_{1}}\Pi_{j=1}^{i}\left(
\sqrt{\left\Vert \bm{\chi}_{\bm{\Gamma}_{j},\mathcal{G}^{j}}\right\Vert _{0,\bm{D}_{j+1}}%
}\frac{1+\theta_{j}}{\theta_{j}\left(  1-c_{j}\right)  }\right)
\]
and also in 3) the bound would be $\frac{1+\theta_{i}}{\theta_{i}\left(
1-c_{i}\right)  }\epsilon_{i-1}$. Moreover if only the $\ell_{1}$, $\ell_{2}$
norms were chosen with $c_{i}=\frac{2}{3}$ we would get%
\[
\epsilon_{i}=\left\Vert \bm{E}\right\Vert _{L,\bm{D}_{1}}6^{i}\Pi_{j=1}^{i}%
\sqrt{\left\Vert \bm{\chi}_{\bm{\Gamma}_{j},\mathcal{G}^{j}}\right\Vert _{0,\bm{D}_{j+1}}}%
\]
and in b) $\gamma^{i}_{\min}=$ $3\epsilon_{i-1}.$

\subsection{Rewriting Multi layer to a Single layer}\label{SSSingle}

Of course the single layer case is a special cases of this theorem, but the "reverse" is also true which we will see next.

As the layered methods suffers from error accumulation,
\cite{cazenavette2020architectural} proposed to rewrite the layered problem $\ell_{1}$ regularized problem into a single layer global \eqref{BP}-like minimization scheme as

\begin{equation}
{\tiny
\left[
\begin{array}
[c]{c}%
\bm{X}\\
0\\
\vdots\\
0
\end{array}
\right]  -\left[
\begin{array}
[c]{cccc}%
\bm{D}_{1} & 0 & \cdots & 0\\
-\bm{I} & \bm{D}_{2} & \ddots & \vdots\\
& \ddots & \ddots & 0\\
0 &  & -\bm{I} & \bm{D}_{K}%
\end{array}
\right]  \left[
\begin{array}
[c]{c}%
\bm{\Gamma}_{1}\\
\bm{\Gamma}_{2}\\
\vdots\\
\bm{\Gamma}_{K}%
\end{array}
\right]  =\left[
\begin{array}
[c]{c}%
0\\
\vdots\\
\vdots\\
0
\end{array}
\right].%
}%
\end{equation}
The problem is that it is not in the form of a proper \eqref{BP} as the matrix is not normalized and we do not have a global parameter $\gamma$ even if we chose the $\ell_1$ regularisation term on every layer with the same weight. So the classical stability results can not be applied to the rewritten minimization problem:

{\tiny
\begin{align*}
& \argmin{\left\{  \overline{\Gamma}_{j}\right\}  }{}\frac{1}{2}\left\Vert
\left[
\begin{array}
[c]{c}%
X\\
0\\
\vdots\\
0
\end{array}
\right]  -\left[
\begin{array}
[c]{cccc}%
D_{1} & 0 & \cdots & 0\\
-\bm{I} & D_{2} & \ddots & \vdots\\
& \ddots & \ddots & 0\\
0 &  & -\bm{I} & D_{K}%
\end{array}
\right]  \left[
\begin{array}
[c]{c}%
\bar{\Gamma}_{1}\\
\bar{\Gamma}_{2}\\
\vdots\\
\bar{\Gamma}_{K}%
\end{array}
\right]  \right\Vert _{2}^{2}+\nonumber\\
& \sum_{j=1}^{K}\gamma_{j}\left\Vert 
\overline{\Gamma}_{j}  \right\Vert _{1}\\
\end{align*}
}%
but using a renormalization trick it becomes a proper \eqref{GBP} problem:

{\tiny
\begin{align}
& \argmin{\left\{  \overline{\Gamma}_{j}\right\}  }{}\frac{1}{2}\left\Vert
\left[
\begin{array}
[c]{c}%
X\\
0\\
\vdots\\
0
\end{array}
\right]  -\left[
\begin{array}
[c]{cccc}%
\frac{1}{\sqrt{2}}D_{1} & 0 & \cdots & 0\\
-\frac{1}{\sqrt{2}}\bm{I} & \frac{1}{\sqrt{2}}D_{2} & \ddots & \vdots\\
& \ddots & \ddots & 0\\
0 &  & -\frac{1}{\sqrt{2}}\bm{I} & D_{K}%
\end{array}
\right]  \left[
\begin{array}
[c]{c}%
\overline{\Gamma}_{1}\sqrt{2}\\
\overline{\Gamma}_{2}\sqrt{2}\\
\vdots\\
\overline{\Gamma}_{K}%
\end{array}
\right]  \right\Vert _{2}^{2}\nonumber\\
& +\sum_{j=1}^{K-1}\frac{1}{\sqrt{2}}\gamma
_{j}\left\Vert \sqrt{2} \overline{\Gamma}_{j}  \right\Vert
_{1}+\gamma_{K}\left\Vert  \overline{\Gamma}_{K}  \right\Vert
_{1}\nonumber
\end{align}
}%

If on the index set of $\widetilde{\bm{\Gamma}}\overset{def}{=} \left(  \widetilde{\bm{\Gamma}}_{1},\dots
,\widetilde{\bm{\Gamma}}_{K}\right)  $ we take as partitions the groups of the blocks, since the modified matrix is normalized, the last minimization is a simple Eq.~\eqref{GBP}. Let $\bm{D}_{mod}$ denote the modified matrix $\bm{X}_{mod}\overset{def}{=}\left(\bm{X}^{T},\bm{0},\dots,\bm{0}\right)^{T}$,  $\bm{X}_{mod}=\bm{D}_{mod}\bm{\Gamma}_{mod}$ be a solution and $\bm{Y}=\bm{X}_{mod}+\bm{E}_{mod}$ be a noisy signal. If $\bm{D}_{mod},\,\bm{\Gamma}_{mod},\,\bm{E}_{mod}$ fulfill the conditions of Theorem \ref{T2} then we can apply the theorem to get a stability result. If $\widetilde{\bm{\Gamma}}_{GBP}$ is the solution of the rewritten-renormalized \eqref{GBP}, then the solution for the original rewritten problem is $\left(  \frac{1}{\sqrt{2}}\widetilde{\bm{\Gamma}}_{GBP,1},\dots
,\frac{1}{\sqrt{2}}\widetilde{\bm{\Gamma}}_{GBP,K-1},\widetilde{\bm{\Gamma}}_{GBP,K}\right)$. Moreover $\mu\left(\bm{D}_{mod}\right)=\max\{\frac{1}{2}\mu\left(\bm{D}_{1}\right),\linebreak[1]\dots,\frac{1}{2}\mu\left(\bm{D}_{K-1}\right),\mu\left(\bm{D}_{K}\right),\frac{1}{2}\left\Vert\bm{D}_{2}\right\Vert_{\max},\dots,\frac{1}{2}\left\Vert\bm{D}_{K}\right\Vert_{\max}\}$ indicates that $\bm{D}_{2},\dots,\bm{D}_{K}$ should have unit vectors with small coordinates but this works against the stripe norm. So the renormalization comes with a cost!

This renormalization trick works for a "full matrix" i.e. if:
{\tiny
\begin{align*}
& \argmin{ \overline{\Gamma}  }{}\frac{1}{2}\left\Vert
\left[
\begin{array}
[c]{c}%
\bm{X}_1\\
\bm{X}_2\\
\vdots\\
\bm{X}_K
\end{array}
\right]  -\left[
\begin{array}
[c]{cccc}%
\bm{D}_{1} & \bm{F}_{12} & \cdots & \bm{F}_{1,K}\\
 \bm{B}_{2,1}& \bm{D}_{2} & \ddots & \vdots\\
& \ddots & \ddots & \bm{F}_{K-1,K}\\
\bm{B}_{K,1} &  & \bm{B}_{K,K-1} & D_{K}%
\end{array}
\right]  \left[
\begin{array}
[c]{c}%
\bar{\Gamma}_{1}\\
\bar{\Gamma}_{2}\\
\vdots\\
\bar{\Gamma}_{K}%
\end{array}
\right]  \right\Vert _{2}^{2}+\nonumber\\
& \left< \bm{\gamma},\bm{l}\left( \bm{\Gamma} \right) \right>\\
\end{align*}
}%

where $ \overline{\Gamma} =\left( \overline{\Gamma}_{1},\dots\overline{\Gamma}_{K}\right) $ and the groups $\mathcal{G}_{i},~i\in\left\{  1,\dots,k\right\}$ can span across column blocks, but for a group $\mathcal{G}_{i}$ the corresponding atoms must have the same norm if the $\ell_{2}$ or the $\ell_{\beta,1,2}$ norm was used for that group. If the blocks $\bm{B}_{j+1,j},~j\in\left\{1,\dots,K-1\right\}$ equals $-\bm{I}$ and the blocks $\bm{B}_{p,q},~q+1\neq p,~\bm{F}_{r,s}$ are zero matrices, then the above reduces to the single layer version of the layered \eqref{GBP}, which has a different solution as the classical layered (BP) as we discussed above.

If the blocks $\bm{F}_{p,q}$ are zero blocks and we have no restriction to the rest, then it becomes a skip connection network.

\subsection{Linear classifiers}\label{SSlinclas}

So proceeding with the general (LGBP), if we turn to the linear classifiers as in \cite{romano2020adversarial} the above
theorem has several stability consequences. We can apply theorem \ref{T3} to the
margin of a single linear classifier which was defined as:

\begin{definition}\label{DefClass}
Assume that from Eq.~\eqref{ML} we get
the sparse vector $\bm{\Gamma}_{K}\in\mathbb{R}^{M_{K}}$ at the last layer and we
have linear classifiers $f_{i}\left(  \bm{\Gamma}_{K}\right)  =\bm{w}^{T}_{i}\bm{\Gamma}
_{K}+b_{i},~i\in\left\{  1,\dots,C\right\}  $ where $\bm{w}_{i}\in
\mathbb{R}^{M_{K}},b_{i}\in\mathbb{R}$. The signal $\bm{X}$ in Eq.~\eqref{ML} corresponds to class $j$ if $f_{j}\left(  \bm{\Gamma}_{K}\right)
>\max_{i\neq j}f_{i}\left(  \bm{\Gamma}_{K}\right)  $.
Let $\class{\left(  \bm{\Gamma}
_{K}\right)}$ be the function which returns this class.
Let $\mathcal{O}\left(  \bm{X}\right)  \overset{def}{=}f_{j}\left(  \bm{\Gamma}
_{K}\right)  -\max_{i\neq j}f_{i}\left(  \bm{\Gamma}_{K}\right)  $ measure the
distance from the next best class which is called the \textit{margin} for $\bm{X}$. For a single linear classifier $\mathcal{O}\left(
\bm{X}\right)  \overset{def}{=}\left\vert f\left(  \bm{\Gamma}_{K}\right)  \right\vert
.$
\end{definition}
So we have:
\begin{corollary}
If the assumptions of theorem \ref{T3} hold for $\bm{Y}=\bm{X}+\bm{E}$ and we have a single
linear classifier for which%
\begin{align*}
& \mathcal{O}\left(  \bm{X}\right)  >\\
& \left\Vert w\right\Vert _{2}\sqrt{\left\Vert
\bm{\chi}_{\bm{\Gamma}_{K},\mathcal{G}^{K}}\right\Vert _{0}}\frac{1+\theta_{K}
}{\left(  1+\mu\left(  \bm{D}_{K}\right)  \right)  \theta_{K}\left(
1-c_{K}\right)  }\epsilon_{K-1}%
\end{align*}
and $\hat{\bm{\Gamma}}_{K}$ is the solution of (LGBP) for $\bm{Y}$, then
$\class{\left(  \bm{\Gamma}_{K}\right)}  =\class{\left(  \hat{\bm{\Gamma}}_{K}\right)}$.
\end{corollary}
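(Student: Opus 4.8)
The plan is to combine the last-layer stability estimate coming from Theorem~\ref{T3} with a Cauchy--Schwarz bound on the linear functional $f$, and then to finish by a sign-preservation argument, in the spirit of the classification results of \cite{romano2020adversarial}.

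First I would invoke Theorem~\ref{T3} for the noisy signal $\bm{Y}=\bm{X}+\bm{E}$. Part~1) gives $\supp{}{\hat{\bm{\Gamma}}_{K}}\subseteq\supp{}{\bm{\chi}_{\bm{\Gamma}_{K},\mathcal{G}^{K}}}$, and, by the definition of the $2$-norm group characteristic vector, also $\supp{}{\bm{\Gamma}_{K}}\subseteq\supp{}{\bm{\chi}_{\bm{\Gamma}_{K},\mathcal{G}^{K}}}$; hence the difference $\hat{\bm{\Gamma}}_{K}-\bm{\Gamma}_{K}$ is supported on a set of at most $\|\bm{\chi}_{\bm{\Gamma}_{K},\mathcal{G}^{K}}\|_{0}$ indices. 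Moreover, reading off the intermediate estimate in the proof of Theorem~\ref{T3} (equivalently, applying Theorem~\ref{T2} at the last layer with $\gamma^{K}_{\min}=\frac{1}{\lambda_{K}(1-c_{K})}\epsilon_{K-1}$, so that $\epsilon_{K-1}$ plays the role of $\|\bm{E}\|_{L}$), I obtain the coordinate-wise bound
\[
\left\|\hat{\bm{\Gamma}}_{K}-\bm{\Gamma}_{K}\right\|_{\infty}<\frac{1+\theta_{K}}{\left(1+\mu\left(\bm{D}_{K}\right)\right)\theta_{K}\left(1-c_{K}\right)}\,\epsilon_{K-1}.
\]

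Passing from this $\ell_{\infty}$ bound to an $\ell_{2}$ bound on a support of size $\|\bm{\chi}_{\bm{\Gamma}_{K},\mathcal{G}^{K}}\|_{0}$ costs a factor $\sqrt{\|\bm{\chi}_{\bm{\Gamma}_{K},\mathcal{G}^{K}}\|_{0}}$, so Cauchy--Schwarz applied to $f(\bm{\Gamma})=\bm{w}^{T}\bm{\Gamma}+b$ yields
\[
\left|f\left(\hat{\bm{\Gamma}}_{K}\right)-f\left(\bm{\Gamma}_{K}\right)\right|=\left|\bm{w}^{T}\left(\hat{\bm{\Gamma}}_{K}-\bm{\Gamma}_{K}\right)\right|\le\|\bm{w}\|_{2}\sqrt{\left\|\bm{\chi}_{\bm{\Gamma}_{K},\mathcal{G}^{K}}\right\|_{0}}\,\frac{1+\theta_{K}}{\left(1+\mu\left(\bm{D}_{K}\right)\right)\theta_{K}\left(1-c_{K}\right)}\epsilon_{K-1}<\mathcal{O}\left(\bm{X}\right),
\]
the last inequality being exactly the hypothesis, combined with $\mathcal{O}(\bm{X})=|f(\bm{\Gamma}_{K})|$ from Definition~\ref{DefClass}. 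Therefore $|f(\bm{\Gamma}_{K})|>|f(\hat{\bm{\Gamma}}_{K})-f(\bm{\Gamma}_{K})|$, so $f(\hat{\bm{\Gamma}}_{K})$ is nonzero and has the same sign as $f(\bm{\Gamma}_{K})$, which gives $\class\left(\bm{\Gamma}_{K}\right)=\class\left(\hat{\bm{\Gamma}}_{K}\right)$.

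The only genuinely delicate point is the bookkeeping: making sure the $\ell_{\infty}$ estimate surfacing at layer $K$ carries the error $\epsilon_{K-1}$ (the error fed into the $K$-th pursuit) and not $\epsilon_{K}$, and that the cardinality controlling the $\ell_{\infty}\to\ell_{2}$ conversion is the plain $\|\bm{\chi}_{\bm{\Gamma}_{K},\mathcal{G}^{K}}\|_{0}$ (consistent with the convention $\|\bm{\chi}_{\bm{\Gamma}_{K},\mathcal{G}^{K}}\|_{0,\bm{D}_{K+1}}\overset{def}{=}\|\bm{\chi}_{\bm{\Gamma}_{K},\mathcal{G}^{K}}\|_{0}$). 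Everything else is Cauchy--Schwarz and the elementary sign argument; the degenerate case $\bm{w}=0$ is trivial, since then the hypothesis forces $|b|>0$ and $f(\hat{\bm{\Gamma}}_{K})=b=f(\bm{\Gamma}_{K})$.
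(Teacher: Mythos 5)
Your proposal is correct and follows essentially the same route as the paper: it takes the last-layer $\ell_{\infty}$ estimate and support containment from Theorem~\ref{T3}, converts to an $\ell_{2}$ bound via the $\sqrt{\left\Vert\bm{\chi}_{\bm{\Gamma}_{K},\mathcal{G}^{K}}\right\Vert_{0}}$ factor, and applies Cauchy--Schwarz together with the margin hypothesis to conclude the class cannot change. Your explicit sign-preservation step and the $\bm{w}=0$ remark only spell out what the paper leaves implicit in ``our decision can not change.''
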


\begin{proof}
In the proof of theorem \ref{T3} we saw that $\left\Vert \hat{\bm{\Gamma}}_{K}
-\bm{\Gamma}_{K}\right\Vert _{\infty}<\frac{1+\theta_{K}}{\left(  1+\mu\left(
\bm{D}_{K}\right)  \right)  \theta_{K}\left(  1-c_{K}\right)  }\epsilon_{K-1}$ and
that $\supp{}{\hat{\bm{\Gamma}}_{K}}\subseteq \supp{}{\bm{\chi}_{\bm{\Gamma}_{K},\mathcal{G}^{K}}}$. As before
\begin{align*}
& \left\Vert \hat{\bm{\Gamma}}_{K}-\bm{\Gamma}_{K}\right\Vert _{2}  \leq
\sqrt{\left\Vert \bm{\chi}_{\bm{\Gamma}_{K},\mathcal{G}^{K}}\right\Vert _{0}}\left\Vert
\hat{\bm{\Gamma}}_{K}-\bm{\Gamma}_{K}\right\Vert _{\infty}\leq\\
&  \sqrt{\left\Vert \bm{\chi}_{\bm{\Gamma}_{K},\mathcal{G}^{K}}\right\Vert _{0}}%
\frac{1+\theta_{K}}{\left(  1+\mu\left(  \bm{D}_{K}\right)  \right)  \theta_
{K}\left(  1-c_{K}\right)  }\epsilon_{K-1}.
\end{align*}
Since $\left\vert f\left(  \bm{\Gamma}_{K}\right)  -f\left(  \hat{\bm{\Gamma}}_{K}
\right)  \right\vert =\left\vert  \bm{w}^{T}\left(  \bm{\Gamma}_{K}-\hat{\bm{\Gamma}}_{K}%
\right)  \right\vert \leq\left\Vert \bm{w}\right\Vert _{2}\left\Vert
\hat{\bm{\Gamma}}_{K}-\bm{\Gamma}_{K}\right\Vert _{2}\leq\left\Vert \bm{w}\right\Vert
_{2}\sqrt{\left\Vert \bm{\chi}_{\bm{\Gamma}_{K},\mathcal{G}^{K}}\right\Vert _{0}}%
\frac{1+\theta_{K}}{\left(  1+\mu\left(  \bm{D}_{K}\right)  \right)  \theta
_{K}\left(  1-c_{K}\right)  }\epsilon_{K-1}.$ By our assumption our decision
can not change.
\end{proof}
For the multi-class version let we have:
\begin{corollary}\label{Cor1}
Assume that theorem \ref{T3} holds for $\bm{Y}=\bm{X}+\bm{E}$ and $\phi=\max_{i\neq
j}\left\Vert \bm{w}_{i}-\bm{w}_{j}\right\Vert _2$ for the weights of the linear classifiers $f_{i}\left(  \bar{\bm{\Gamma}}\right)
=\bm{w}^{T}_{i}\bar{\bm{\Gamma}}+b_{i}$.
If
\[
\text{{\scriptsize
$\mathcal{O}\left(  \bm{X}\right)   > 
\phi   \sqrt{\left\Vert\bm{\chi}_{\bm{\Gamma}_{K},\mathcal{G}^{K}}
\right\Vert _{0}}\frac{1+\theta_{K}}{\left(
1+\mu\left(  \bm{D}_{K}\right)  \right)  \theta_{K}\left(  1-c_{K}\right)
}\epsilon_{K-1}%
$}}
\]
and $\hat{\bm{\Gamma}}_{K}$ is the solution of the (LGBP) for $\bm{Y}$, then
$\class{(  \bm{\Gamma}_{K})}  =\class{(  \hat{\bm{\Gamma}}_{K})}$.
\end{corollary}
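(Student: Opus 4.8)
The plan is to follow the template of the single--classifier corollary, upgrading the single weight vector to the pairwise differences $\bm{w}_{i}-\bm{w}_{j}$ and dominating all of them uniformly by $\phi$. First I would apply Theorem~\ref{T3} at the last layer: under hypotheses a) and b) it yields the support inclusion $\supp{}{\hat{\bm{\Gamma}}_{K}}\subseteq\supp{}{\bm{\chi}_{\bm{\Gamma}_{K},\mathcal{G}^{K}}}$ together with the $\ell_\infty$ error estimate $\left\Vert\hat{\bm{\Gamma}}_{K}-\bm{\Gamma}_{K}\right\Vert_{\infty}<\frac{1+\theta_{K}}{\left(1+\mu\left(\bm{D}_{K}\right)\right)\theta_{K}\left(1-c_{K}\right)}\epsilon_{K-1}$ that was derived inside its proof. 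Since $\supp{}{\bm{\Gamma}_{K}}$ is also contained in $\supp{}{\bm{\chi}_{\bm{\Gamma}_{K},\mathcal{G}^{K}}}$, the difference $\hat{\bm{\Gamma}}_{K}-\bm{\Gamma}_{K}$ is supported on a set of cardinality at most $\left\Vert\bm{\chi}_{\bm{\Gamma}_{K},\mathcal{G}^{K}}\right\Vert_{0}$, so
\[
\left\Vert\hat{\bm{\Gamma}}_{K}-\bm{\Gamma}_{K}\right\Vert_{2}\leq\sqrt{\left\Vert\bm{\chi}_{\bm{\Gamma}_{K},\mathcal{G}^{K}}\right\Vert_{0}}\left\Vert\hat{\bm{\Gamma}}_{K}-\bm{\Gamma}_{K}\right\Vert_{\infty},
\]
exactly as in the single--classifier case.

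Next I would fix the true class $j=\class{(\bm{\Gamma}_{K})}$ and, for each $i\neq j$, write
\[
f_{j}(\hat{\bm{\Gamma}}_{K})-f_{i}(\hat{\bm{\Gamma}}_{K})=\bigl(f_{j}(\bm{\Gamma}_{K})-f_{i}(\bm{\Gamma}_{K})\bigr)+(\bm{w}_{j}-\bm{w}_{i})^{T}(\hat{\bm{\Gamma}}_{K}-\bm{\Gamma}_{K}),
\]
the bias terms cancelling. By Definition~\ref{DefClass} the first summand is at least $\mathcal{O}(\bm{X})$ for every $i\neq j$, and by Cauchy--Schwarz together with $\left\Vert\bm{w}_{j}-\bm{w}_{i}\right\Vert_{2}\leq\phi$ the second summand is bounded below by $-\phi\left\Vert\hat{\bm{\Gamma}}_{K}-\bm{\Gamma}_{K}\right\Vert_{2}$. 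Combining with the $\ell_2$ and $\ell_\infty$ bounds above gives, for every $i\neq j$,
\[
f_{j}(\hat{\bm{\Gamma}}_{K})-f_{i}(\hat{\bm{\Gamma}}_{K})\geq\mathcal{O}(\bm{X})-\phi\sqrt{\left\Vert\bm{\chi}_{\bm{\Gamma}_{K},\mathcal{G}^{K}}\right\Vert_{0}}\,\frac{1+\theta_{K}}{\left(1+\mu\left(\bm{D}_{K}\right)\right)\theta_{K}\left(1-c_{K}\right)}\epsilon_{K-1},
\]
which is strictly positive precisely under the hypothesis of the corollary. Since this holds simultaneously for all competitors $i\neq j$, the class $j$ remains the unique maximiser of $f_{1},\dots,f_{C}$ at $\hat{\bm{\Gamma}}_{K}$, i.e.\ $\class{(\hat{\bm{\Gamma}}_{K})}=j=\class{(\bm{\Gamma}_{K})}$.

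I do not expect a genuine obstacle here: this is a direct corollary of Theorem~\ref{T3}, and the only delicate point is the quantifier bookkeeping --- one must use the single constant $\phi=\max_{i\neq j}\left\Vert\bm{w}_{i}-\bm{w}_{j}\right\Vert_{2}$, which dominates every pairwise difference, so that one lower bound serves all competing classes at once, and one must carry the strict inequality coming from Theorem~\ref{T3} through to the final comparison. If one prefers the mutual--coherence--free variant, the identical computation applies verbatim with $\frac{1+\theta_{K}}{\theta_{K}\left(1-c_{K}\right)}$ replacing $\frac{1+\theta_{K}}{\left(1+\mu\left(\bm{D}_{K}\right)\right)\theta_{K}\left(1-c_{K}\right)}$, as noted after Theorem~\ref{T3}.
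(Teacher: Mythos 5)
Your proposal is correct and follows essentially the same route as the paper's own proof: the $\ell_2$ bound $\left\Vert\hat{\bm{\Gamma}}_{K}-\bm{\Gamma}_{K}\right\Vert_{2}\leq\sqrt{\left\Vert\bm{\chi}_{\bm{\Gamma}_{K},\mathcal{G}^{K}}\right\Vert_{0}}\left\Vert\hat{\bm{\Gamma}}_{K}-\bm{\Gamma}_{K}\right\Vert_{\infty}$ obtained from Theorem~\ref{T3}, the same decomposition of $f_{j}-f_{i}$ with cancelling biases, Cauchy--Schwarz with the uniform bound $\phi$, and the margin hypothesis to keep the difference strictly positive for every competitor $i\neq j$. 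Your version is marginally cleaner in that it lower-bounds $f_{j}(\hat{\bm{\Gamma}}_{K})-f_{i}(\hat{\bm{\Gamma}}_{K})$ directly rather than its absolute value, but the argument is the same.
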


\begin{proof}
As in the above proof if $\left\Vert \hat{\bm{\Gamma}}_{K}-\bm{\Gamma}
_{K}\right\Vert _{2}\leq\sqrt{\left\Vert\bm{\chi}_{\bm{\Gamma}_{K},\mathcal{G}^{K}}\right\Vert _{0}}\frac{1+\theta_{K}}{\left(  1+\mu\left(  \bm{D}_{K}\right)
\right)  \theta_{K}\left(  1-c_{K}\right)  }\epsilon_{K-1}$. If $\class{\left(
\bm{\Gamma}_{K}\right)}  =j$ then for any $i\neq j$ we have by the triangle
inequality
\begin{align*}
& \left\vert f_{j}\left(  \hat{\bm{\Gamma}}_{K}\right)  -f_{i}\left(
\hat{\bm{\Gamma}}_{K}\right)  \right\vert   =\\
& \left\vert f_{j}\left(
\hat{\bm{\Gamma}}_{K}\right)  -f_{j}\left(  \bm{\Gamma}\right)  +f_{j}\left(
\bm{\Gamma}\right)  -f_{i}\left(  \bm{\Gamma}\right)  +f_{i}\left(  \bm{\Gamma}\right)
-f_{i}\left(  \hat{\bm{\Gamma}}_{K}\right)  \right\vert \\
& = \left\vert f_{j}\left(  \bm{\Gamma}\right)  -f_{i}\left(  \bm{\Gamma}\right)
+\bm{w}^{T}_{j}\left(  \hat{\bm{\Gamma}}_{K}-\bm{\Gamma}_{K}\right)  +\bm{w}^{T}_{i}\left(  \bm{\Gamma}
_{K}-\hat{\bm{\Gamma}}_{K}\right)  \right\vert  \\
& = \left\vert f_{j}\left(
\bm{\Gamma}\right)  -f_{i}\left(  \bm{\Gamma}\right)  -\left(  \bm{w}_{j}-\bm{w}_{i}\right)^{T}
\left(  \bm{\Gamma}_{K}-\hat{\bm{\Gamma}}_{K}\right)  \right\vert \geq\\
& \left\vert f_{j}\left(  \bm{\Gamma}\right)  -f_{i}\left(  \bm{\Gamma}\right)
\right\vert -\left\vert \left(  \bm{w}_{j}-\bm{w}_{i}\right)^{T}  \left(  \bm{\Gamma}
_{K}-\hat{\bm{\Gamma}}_{K}\right)  \right\vert  \geq\\
& \mathcal{O}\left(\bm{X}\right)%
-\max_{i\neq j}\left\Vert \bm{w}_{j}-\bm{w}_{i}\right\Vert _{2}\left\Vert
\hat{\bm{\Gamma}}_{K}-\bm{\Gamma}_{K}\right\Vert _{2}\geq\mathcal{O}\left(\bm{X}\right)-\\
& \phi  \sqrt{\left\Vert \bm{\chi}_{\bm{\Gamma}_{K},\mathcal{G}^{K}}\right\Vert _{0}}\frac{1+\theta_{K}}{\left(  1+\mu\left(
\bm{D}_{K}\right)  \right)  \theta_{K}\left(  1-c_{K}\right)  }\epsilon_{K-1} 
\\
&>0.
\end{align*}
This yields that $\class{\left(  \hat{\bm{\Gamma}}_{K}\right)}  =j=\class{\left(
\bm{\Gamma}_{K}\right)}  $.
\end{proof}

\subsection{About positive coding}

At the definition of \eqref{GBP} we could use the positive coding condition to get:
\begin{equation}
\argmin{\bar{\bm{\Gamma}}\geq\bm{0}}\,{L\left(  \bar{\bm{\Gamma}}\right)} \overset{def}{=}\argmin{\bar{\bm{\Gamma}}}{\frac{1}{2}\left\Vert \bm{X}-\bm{D}\bar{\bm{\Gamma}}
\right\Vert _{2}^{2}+\left< \bm{\gamma},  \bm{l}\left( \bar{\bm{\Gamma}}\right) \right>},
\tag{PosGBP}\label{PosGBP}%
\end{equation}

where $\bar{\bm{\Gamma}}\geq\bm{0}$ means that each element of the vector $\bm{\Gamma}$ is nonnegative. 

Interestingly with some restrictions our results extend also to this case, because some of the conditions of our theorems can be very restrictive. To see this we prove the following weak stability theorem for positive coding, which is the direct extension of Theorem \ref{T1}.
\begin{theorem}
\label{Tpos1} Let $\bm{\Gamma}_{+GBP}$ be a solution of Eq.~\eqref{PosGBP} and assume that $\Lambda\overset{def}{=}\supp{}{\bm{\Gamma}_{+GBP}}$ is group-full and the columns of $\bm{D}_{\Lambda}$ are linearly
independent, moreover that $ERC_{\lambda \frac{\gamma_{\min}}{\gamma_{\max}}
}\left(  \Lambda\right)  >0$ where $\lambda\overset{def}{=}\min\left\{ 1,\beta_{1},\dots,\beta_{r}\right\}$ as before (i.e. if among the norms of $\bm{l}$ we used elastic norms, let $\left\{\beta_{1},\dots,\beta_{r}\right\}$ be the set of the parameters used in the elastic norms). 

If for the
signal $\bm{X}$ in Eq.~\eqref{PosGBP} we have%
\[
\left\Vert \bm{D}^{\ast}\left(  \bm{X}-\bm{X}_{p,\Lambda}\right)  \right\Vert _{\infty}%
\leq  \gamma_{\max} ERC_{\lambda \frac{\gamma_{\min}}{\gamma_{\max}} }\left(  \Lambda\right)
\]
then
\begin{enumerate}
\item[1)]$\supp{}\bm{\Gamma}_{+GBP}\subset\Lambda,$ (trivial)
\item[2)] If $\bm{X}_{p,\Lambda}=\bm{D}_{\Lambda}\bm{C}_{\Lambda}$ where $\bm{C}_{\Lambda}=\bm{D}_{\Lambda}^{\dagger}\bm{X}_{p,\Lambda}$ then $\left\Vert
\bm{\Gamma}_{+GBP}-\bm{C}_{\Lambda}\right\Vert _{\infty}\leq\gamma_{\max}\left\Vert \left(
\bm{D}_{\Lambda}^{\ast}\bm{D}_{\Lambda}\right)  ^{-1}\right\Vert _{\infty}$,
\item[3)] 
$\left\{i~\Big|~\left\vert \left(\bm{C}_{\Lambda}\right)_{i}\right\vert
>\gamma_{\max}\left\Vert \left(  \bm{D}_{\Lambda}%
^{\ast}\bm{D}_{\Lambda}\right)  ^{-1}\right\Vert _{\infty}\right\}\subseteq\supp{}{\bm{\Gamma}_{+GBP}}$, (trivial)
\item[4)] Moreover the minimizer $\bm{\Gamma}_{+GBP}$ is unique and equals de minimizer of \eqref{GBP}.
\end{enumerate}
\end{theorem}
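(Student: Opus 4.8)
The plan is to reduce Theorem~\ref{Tpos1} to the machinery already developed for the unconstrained problem \eqref{GBP}, the only genuinely new ingredient being a local argument that the positivity constraint is inactive at $\bm{\Gamma}_{+GBP}$. First I would set $\Lambda\overset{def}{=}\supp{}{\bm{\Gamma}_{+GBP}}$, which makes item~1) a tautology. Since $\bm{\Gamma}_{+GBP}\geq\bm{0}$ and its support is exactly $\Lambda$, all of its coordinates indexed by $\Lambda$ are \emph{strictly} positive; hence, viewed inside $V_{\Lambda}\cong\mathbb{R}^{|\Lambda|}$, the compression $\hat{\bm{\Gamma}}_{\Lambda}$ lies in the relative interior of the nonnegative orthant $\mathbb{R}^{|\Lambda|}_{\geq0}$. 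Because $\bm{\Gamma}_{+GBP}$ minimizes the convex objective $L$ over $\mathbb{R}^{M}_{\geq0}$, it minimizes it a fortiori over $V_{\Lambda}\cap\mathbb{R}^{M}_{\geq0}$, and it does so at an interior point of that set relative to $V_{\Lambda}$; therefore it is a local minimizer of $L$ over all of $V_{\Lambda}$, and, by convexity, the global minimizer of \eqref{GBP} restricted to $V_{\Lambda}$. In other words, $\hat{\bm{\Gamma}}_{\Lambda}$ is precisely the object to which Lemma~\ref{Lemmata} and Lemma~\ref{LCorr} apply.

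Next I would invoke Lemma~\ref{Lemmata} (its full-column-rank hypothesis on $\bm{D}_{\Lambda}$ being assumed): it yields the stationarity identity $\bm{C}_{\Lambda}-\hat{\bm{\Gamma}}_{\Lambda}=(\bm{D}_{\Lambda}^{\ast}\bm{D}_{\Lambda})^{-1}(\bm{g}\star\bm{\gamma})$ for some $\bm{g}\in\partial\bm{l}_{\Lambda}(\hat{\bm{\Gamma}}_{\Lambda})$, together with uniqueness of $\hat{\bm{\Gamma}}_{\Lambda}$ as minimizer over $V_{\Lambda}$. Feeding this identity into Corollary~\ref{C1} gives $\|\bm{C}_{\Lambda}-\hat{\bm{\Gamma}}_{\Lambda}\|_{\infty}\leq\gamma_{\max}\|(\bm{D}_{\Lambda}^{\ast}\bm{D}_{\Lambda})^{-1}\|_{\infty,\infty}$, which is item~2). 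Item~3) then drops out exactly as in Theorem~\ref{T1}: if $|(\bm{C}_{\Lambda})_{i}|$ exceeds the right-hand side of item~2), the triangle inequality forces $(\hat{\bm{\Gamma}}_{\Lambda})_{i}\neq0$, i.e.\ $i\in\Lambda=\supp{}{\bm{\Gamma}_{+GBP}}$.

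For item~4) I would use that $\Lambda$ is group-full by hypothesis and apply Lemma~\ref{LCorr} to the minimizer $\bm{\Gamma}_{+GBP}$ of \eqref{GBP} over $V_{\Lambda}$: the signal assumption $\|\bm{D}^{\ast}(\bm{X}-\bm{X}_{p,\Lambda})\|_{\infty}\leq\gamma_{\max}\,ERC_{\lambda\gamma_{\min}/\gamma_{\max}}(\Lambda)$ is exactly the second sufficient condition of that lemma (and, via \eqref{ESERCERC}, implies its $SERC$ form), so Lemma~\ref{LCorr} concludes that $\bm{\Gamma}_{+GBP}$ is the \emph{unique} global minimizer of the unconstrained problem \eqref{GBP}. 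Since this unconstrained minimizer happens to be nonnegative, it is a feasible point of \eqref{PosGBP} attaining the unconstrained optimum; hence it is also a minimizer of \eqref{PosGBP}, and its uniqueness is inherited, because two distinct \eqref{PosGBP}-minimizers would both attain the unconstrained minimum value, contradicting unconstrained uniqueness. This proves item~4) and, in particular, that \eqref{PosGBP} and \eqref{GBP} share the same unique solution under these hypotheses.

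The one delicate point I would write out most carefully is the passage in the first paragraph: because $\Lambda$ is \emph{defined} as the exact support of the chosen solution, the nonnegativity constraint is locally inactive on $V_{\Lambda}$, so the constrained minimizer over $V_{\Lambda}$ coincides with the unconstrained one and Lemmas~\ref{Lemmata} and~\ref{LCorr} become applicable verbatim. Everything after that is a bookkeeping reuse of Corollary~\ref{C1} and Lemmas~\ref{Lemmata}, \ref{LCorr}, just as in the proof of Theorem~\ref{T1}; in particular no new estimates involving $\mu(\bm{D})$ or the stripe norm are required, which is also why this is only a \emph{weak} stability result: it presupposes, rather than derives, control on $\supp{}{\bm{\Gamma}_{+GBP}}$.
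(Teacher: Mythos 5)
Your proof is correct and follows essentially the same route as the paper: the key observation that strict positivity of $\bm{\Gamma}_{+GBP}$ on $\Lambda$ makes the nonnegativity constraint locally inactive, so that $\bm{\Gamma}_{+GBP}$ is the unconstrained \eqref{GBP} minimizer over $V_{\Lambda}$, is exactly the paper's argument. The only cosmetic difference is that you re-derive items 2)--4) directly from Lemma~\ref{Lemmata}, Corollary~\ref{C1} and Lemma~\ref{LCorr}, whereas the paper simply invokes Theorem~\ref{T1} (whose proof rests on the same lemmas) and then identifies its unique global minimizer with $\bm{\Gamma}_{+GBP}$ via uniqueness of the minimizer restricted to $V_{\Lambda}$.
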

\begin{proof}
We left the trivial conditions \textit{1)} ad \textit{3)} here only to see how we apply Theorem \ref{T1}. Let us restrict Eq.~\eqref{GBP} to the domain $\Lambda$. This problem is convex and $\bm{\Gamma}_{+GBP}$ must be a local minima of this restricted problem as all the coordinates of $\Gamma_{+GBP}$ are strictly positive on $\Lambda$ so every local variation of $\bm{\Gamma}_{+GBP}$ is also a positive code. By the convexity of the unconstrained \eqref{GBP} on $\Lambda$ it must be also \textit{the global minima of the unconstrained \eqref{GBP} problem on the domain $\Lambda$}. Now as $\Lambda$ is group-full and fulfills the condition of Theorem \ref{T1}, we get by Theorem \ref{T1} that the minimizer $\bm{\Gamma}_{GBP}$ of the global \eqref{GBP} is unique moreover that $\supp{}{\bm{\Gamma}_{GBP}}\subset\Lambda$. Thus $\bm{\Gamma}_{GBP}$ is \textit{the global minima of the unconstrained \eqref{GBP} problem on the domain $\Lambda$}. By the uniqueness, it must be $\bm{\Gamma}_{+GBP}$. I.e. the minimizer of the positive coding problem happens to be the same as the minimizer of the global unconstrained \eqref{GBP}. So the Theorem \ref{T1} holds for $\bm{\Gamma}_{GBP}=\bm{\Gamma}_{+GBP}$.
\end{proof}

Note that if $\sup{}{\bm{\Gamma}_{+GBP}}$ is not group-full and $\mathcal{G}_{i}$ is a group where the $\ell_{2}$ was used but it is neighter in $\sup{}{\bm{\Gamma}_{+GBP}}$ nor in it's complementer, then we could try and break this group apart. We can do this by taking $\sup{}{\bm{\Gamma}_{+GBP}}\cap \mathcal{G}_{i}$ and $co\sup{}{\bm{\Gamma}_{+GBP}}\cap \mathcal{G}_{i}$ as groups. The problem is that it will change the regularizer of \eqref{PosGBP} thus the minimum can change yielding that we have to break up an other group and we change the problem further. However if we use only the $\ell_{1}$ norm and/or the elastic norm, then we have this kind of weak stability.

We can wonder why the groups corresponding to the elastic norm are not required to be full in the support. This is because we pay the price with the parameter $\lambda$, where the $\ell_{1}$ part of the elastic norm yields the stability.

Since positive coding is natural (from the point of view of biology), the stability should be investigated further to find a real stability theorem for this case.

\section{Additional Experimental Details}
\label{AppB}
\subsection{Neural Network Architectures}\label{ss:nets}
\subsubsection{Synthetic Experiment}
\begin{description}
\item[Linear Transformer]
\begin{enumerate}
    \item[]
    \item Attention Layer $100$ units
    \item Dense Linear Layer $75$ units
    \item Batch Normalization Layer
    \item Rectified Linear Unit (ReLU) Activation
    \item Dense Linear Layer $1$ unit
\end{enumerate}
\item[Dense Network]
\begin{enumerate}
    \item[]
    \item Dense Linear Layer $75$ units
    \item Batch Normalization Layer
    \item Rectified Linear Unit (ReLU) Activation
    \item Dense Linear Layer $1$ unit
\end{enumerate}
\item[Dense Deep Network]
\begin{enumerate}
    \item[] This architecture was designed to approximately match the parameter count of the Linear Transformer.
    \item Dense Linear Layer $96$ units
    \item Rectified Linear Unit (ReLU) Activation
    \item Dense Linear Layer $92$ units
    \item Rectified Linear Unit (ReLU) Activation
    \item Dense Linear Layer $89$ units
    \item Rectified Linear Unit (ReLU) Activation
    \item Dense Linear Layer $85$ units
    \item Rectified Linear Unit (ReLU) Activation
    \item Dense Linear Layer $82$ units
    \item Rectified Linear Unit (ReLU) Activation
    \item Dense Linear Layer $78$ units
    \item Rectified Linear Unit (ReLU) Activation
    \item Dense Linear Layer $75$ units
    \item Batch Normalization Layer
    \item Rectified Linear Unit (ReLU) Activation
    \item Dense Linear Layer $1$ unit
\end{enumerate}
\end{description}

\subsubsection{MNIST Experiment}
\begin{description}
\item[Linear Transformer]
\begin{enumerate}
    \item[]
    \item Attention Layer $784$ units
    \item Batch Normalization Layer
    \item Dense Linear Layer $32$ units
    \item Rectified Linear Unit (ReLU) Activation
    \item Dense Linear Layer $10$ units
    \item Softmax Activation
\end{enumerate}
\item[Dense Network]
\begin{enumerate}
    \item[]
    \item Dense Linear Layer $32$ units
    \item Rectified Linear Unit (ReLU) Activation
    \item Dense Linear Layer $10$ units
    \item Softmax Activation
\end{enumerate}
\item[Dense Deep Network]
\begin{enumerate}
    \item[]
    \item Dense Linear Layer $676$ units
    \item Rectified Linear Unit (ReLU) Activation
    \item Dense Linear Layer $569$ units
    \item Rectified Linear Unit (ReLU) Activation
    \item Dense Linear Layer $461$ units
    \item Rectified Linear Unit (ReLU) Activation
    \item Dense Linear Layer $354$ units
    \item Rectified Linear Unit (ReLU) Activation
    \item Dense Linear Layer $246$ units
    \item Rectified Linear Unit (ReLU) Activation
    \item Dense Linear Layer $139$ units
    \item Rectified Linear Unit (ReLU) Activation
    \item Dense Linear Layer $32$ units
    \item Rectified Linear Unit (ReLU) Activation
    \item Dense Linear Layer $10$ unit
    \item Softmax Activation
\end{enumerate}
\end{description}

\newpage
\subsection{Additional Synthetic Data Statistics}\label{ss:Synth_attack}
\begin{table}[!h]
    \begin{tabularx}{\columnwidth}{Xccc}
        \toprule
        & \tableheadline{Inactive} & \tableheadline{Mean} & \tableheadline{Found} \\
        Method & \tableheadline{Groups} & \tableheadline{Grp. Acc.} & \tableheadline{Grp. Combs.} \\
        \midrule
        BP & $47.6\%$ & $58.3\%$ & $\phantom{0}0.0\%$\\
        GBP \& PGBP & $87.8\%$ & $98.5\%$ & $48.8\%$\\
        Transformer & $85.6\%$ & $96.5\%$ & $\phantom{0}9.5\%$ \\
        Dense Shallow & $83.2\%$ & $93.8\%$ & $\phantom{0}1.4\%$ \\
        Dense Deep & $74.3\%$ & $84.5\%$ & $\phantom{0}0.0\%$ \\
        \bottomrule
    \end{tabularx}
    \caption[Statistics for the attack-free case of the Synthetic dataset.]{Statistics for the attack-free case of the Synthetic dataset. $\mathcal{O}(\bm{X})\geq 0.1$ margin. Grp., Acc. and Combs. stand for Group, Accuracy and Combinations, respectively.
    All scores are much higher for Group Basis Pursuit (GBP) compared to Basis Pursuit (BP), since GBP has access to the true group structure, while BP picks individual units instead.
    Each input was generated by a specific group combination using 8 groups drawn randomly (see main text). The best result in finding the \emph{exact} group combinations is close to 50\%, implying that we are outside of the scope of our theorems. Accuracy, i.e., mean of the found active (True Positive) and inactive (True Negative) groups relative to all groups (Positive + Negative) is close to $100\%$ for GBP and Pooled Group Basis Pursuit (PGBP): they achieve the same score as only their downstream classifiers differ.
    Performance of the feedforward networks are worse since they are trained to approximate PGBP instead of the ground truth.
    Transformer outperforms Dense networks. Dense Shallow network gives better results than the Dense Deep network; the latter is overtrained.}
    \label{tab:sparsity}
\end{table}
\newpage
\onecolumn
\subsection{Additional Experimental Results}\label{ss:Synth_attack2}


\begin{figure}[!h]
\centering     
\includegraphics[width=70mm]{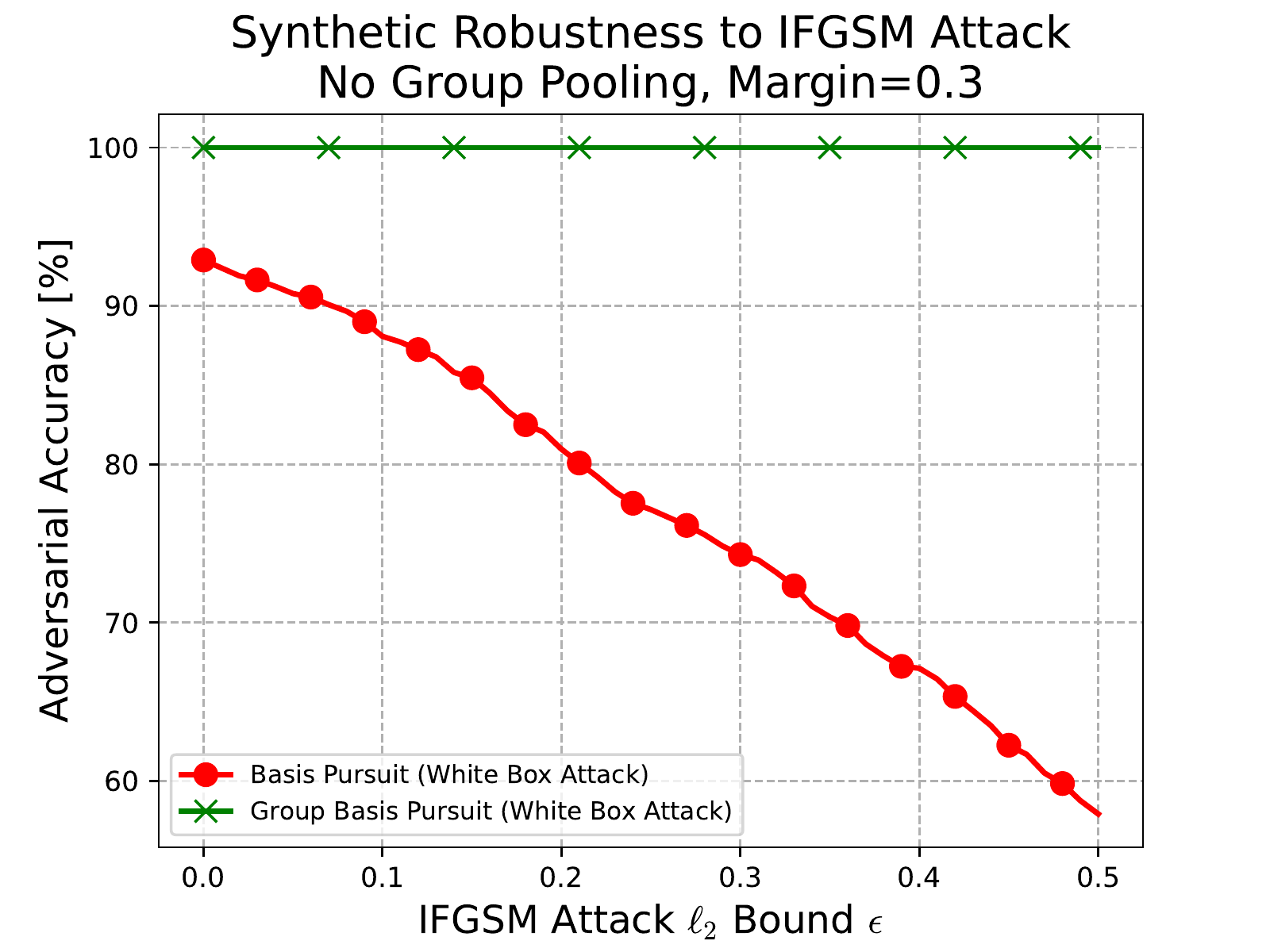}
\textbf{(a)}\hfill
\includegraphics[width=70mm]{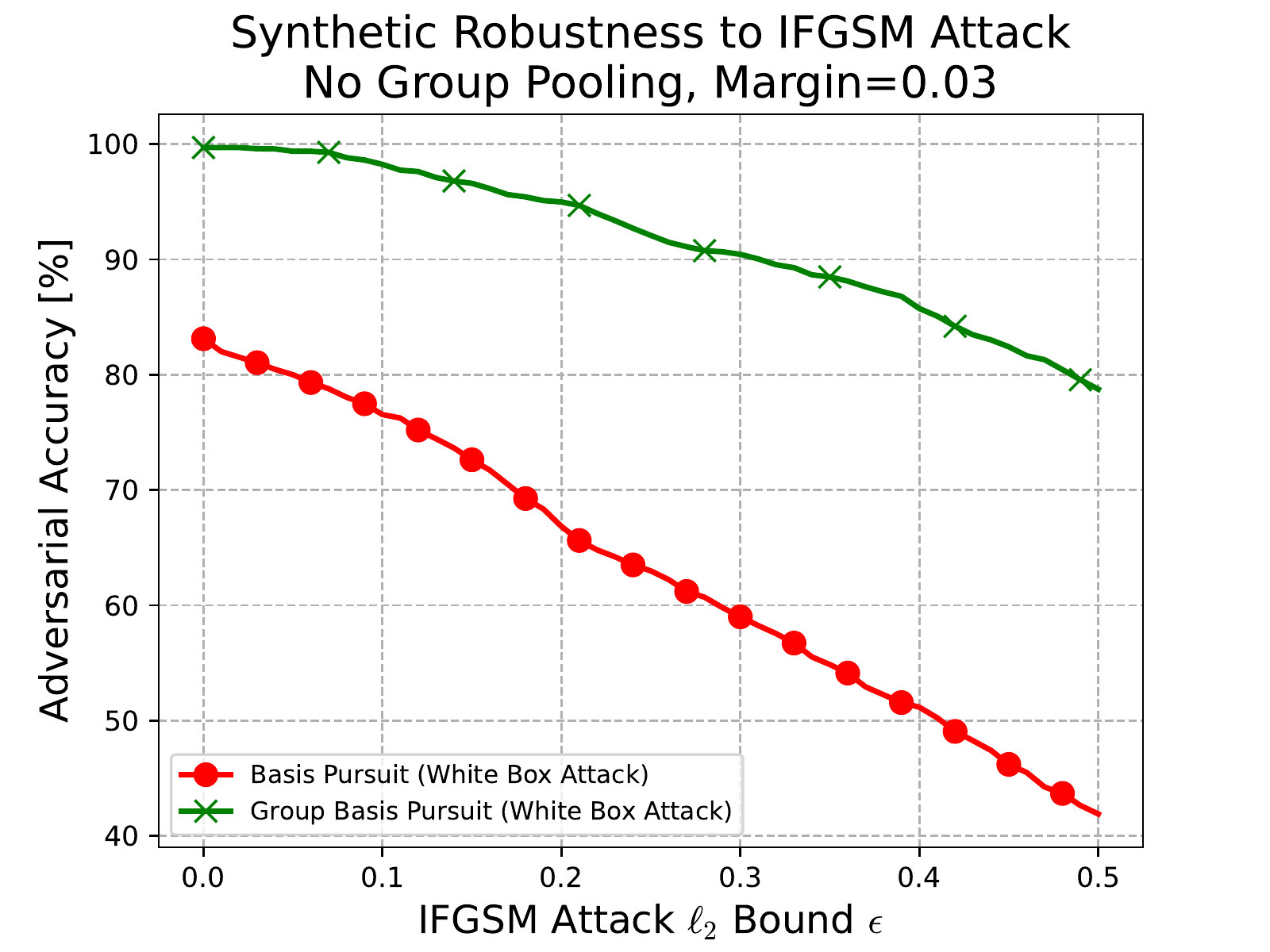}
\textbf{(b)}\\
\includegraphics[width=70mm]{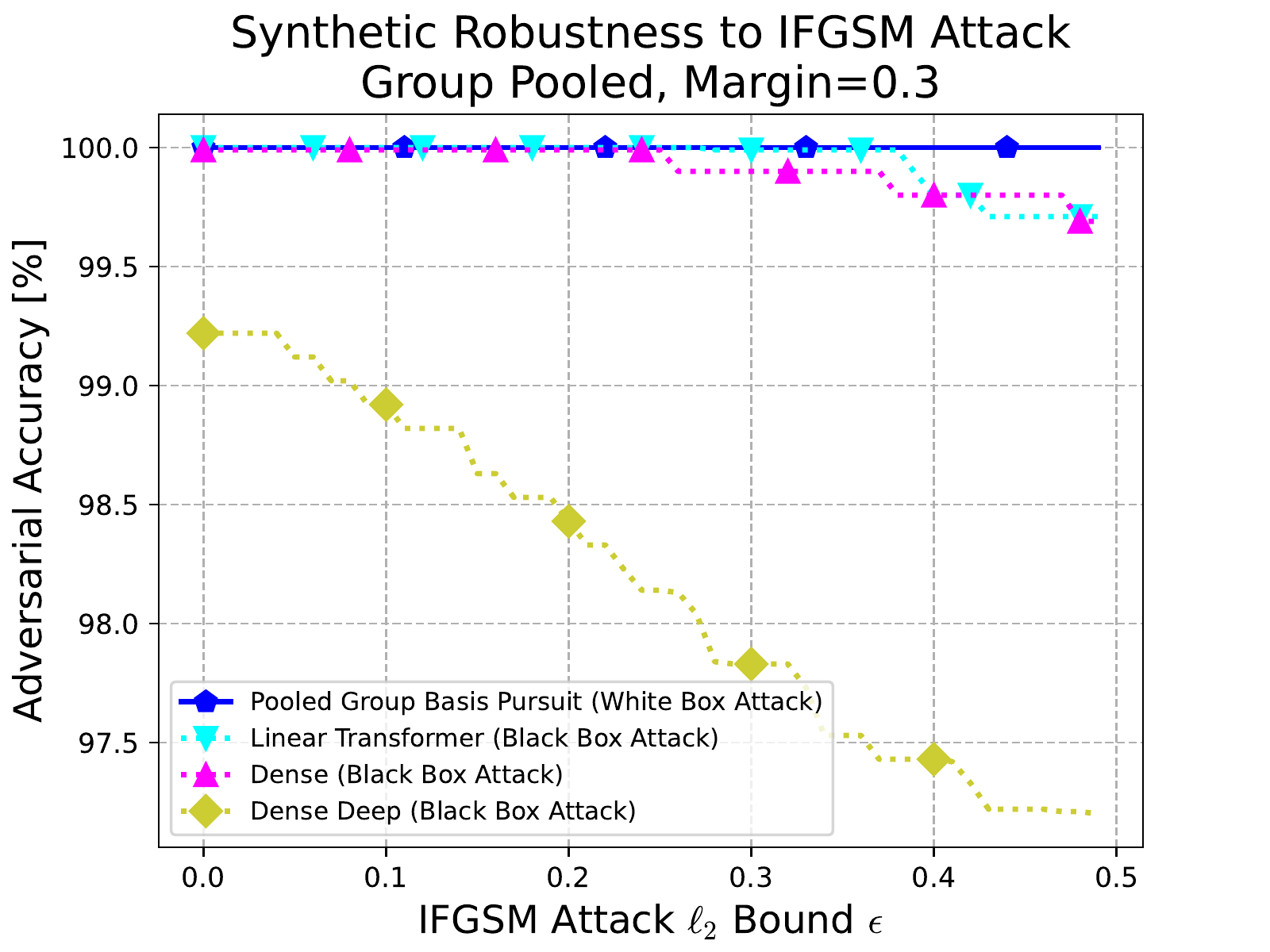}
\textbf{(c)}\hfill
\includegraphics[width=70mm]{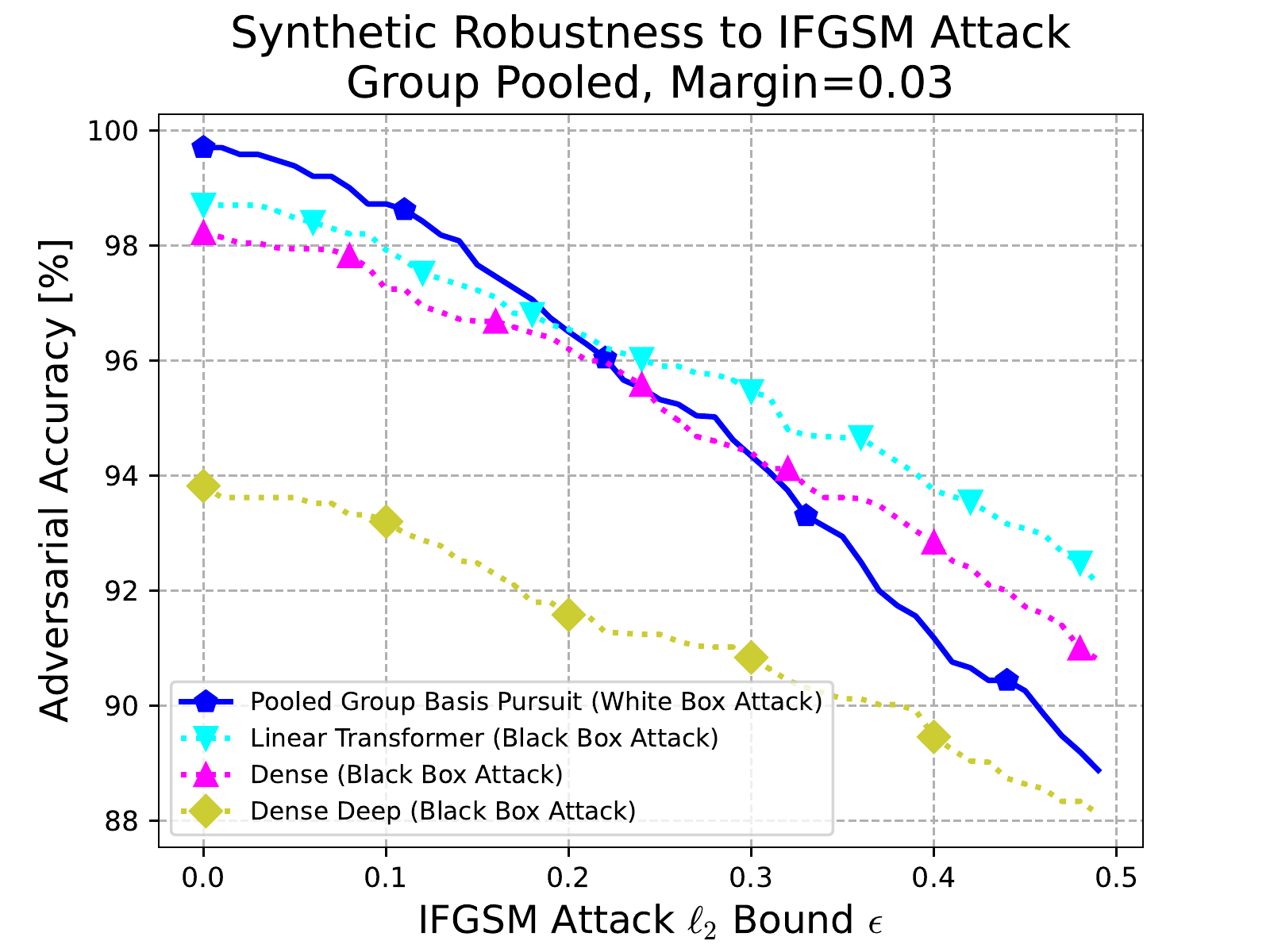}
\textbf{(d)}
\caption{Experimental results for adversarial robustness against Iterative Fast Gradient Sign Method (IFGSM) attack on the Synthetic dataset. 
White Box attack: Pooled Group Basis Pursuit (PGBP, red). Black Box attacks: Linear Transformer (LT, green), Dense (blue), Deep network (turquoise).
\textbf{(a):} $\mathcal{O}(\bm{X})\geq 0.3$ margin experiment: PGBP (red) achieves perfect scores for all $\epsilon$ values. Performance of LT (green) and the Dense (blue) networks trained on PGBP groups indices starts to deteriorate for large $\epsilon$ values although it is under Black Box attack. Deep network having parameter count similar to LT is overfitting.
\textbf{(b):} $\mathcal{O}(\bm{X})\geq 0.1$ margin experiment: PGBP (red) achieves perfect scores for smaller $\epsilon$ values, while it breaks down faster than LT (green) and the Dense (blue) networks for large $\epsilon$'s. Deep network is overfitting.
\textbf{(c):} $\mathcal{O}(\bm{X})\geq 0.03$ margin experiment: PGBP (red) can't achieve perfect scores for $\epsilon$ values, including the no attack case and breaks down faster than LT and the Dense network for large $\epsilon$'s. Deep network is overfitting.}
\label{fig:net-exps2}
\end{figure}

\end{document}